\theoremstyle{plain}
\newtheorem{theorem}{Theorem}
\newtheorem{proposition}{Proposition}
\newtheorem{definition}{Definition}
\newtheorem{corollary}{Corollary}
\theoremstyle{remark}
\newtheorem{example}{Example}
\newtheorem{remark}{Remark}
\newcommand{\E}{\mathop\mathbb{E}}
\newcommand{\R}{\mathbb{R}}
\def\sr{\mathsf{r}}
\def\ss{\mathsf{s}}
\def\bbf{\mathbf{f}}
\def\op{\mathop{op}}
\def\tr{\mathop{tr}}
\def\Cov{\mathop{Cov}}
\DeclareMathOperator*{\argmin}{arg\,min}
\def\ddefloop#1{\ifx\ddefloop#1\else\ddef{#1}\expandafter\ddefloop\fi}
\def\ddef#1{\expandafter\def\csname c#1\endcsname{\ensuremath{\mathcal{#1}}}}
\def\ddef#1{\expandafter\def\csname s#1\endcsname{\ensuremath{\mathsf{#1}}}}
\def\ddef#1{\expandafter\def\csname b#1\endcsname{\ensuremath{\mathbf{#1}}}}
\def\ddef#1{\expandafter\def\csname b#1\endcsname{\ensuremath{\mathbf{#1}}}}
\providecommand{\leftsquigarrow}{%
  \mathrel{\mathpalette\reflect@squig\relax}%
}
\newcommand{\reflect@squig}[2]{%
  \reflectbox{$\m@th#1\rightsquigarrow$}%
}
\title{Denoising Diffusions with Optimal Transport: Localization, Curvature, and Multi-Scale Complexity}
\author{\name Tengyuan Liang \email tengyuan.liang@chicagobooth.edu \\
      \addr University of Chicago, Booth School of Business
      \AND
      \name Kulunu Dharmakeerthi \email kulunud@uchicago.edu \\
      \addr University of Chicago
      \AND
      \name Takuya Koriyama \email tkoriyam@chicagobooth.edu\\
      \addr  University of Chicago, Booth School of Business
      }
\begin{document}

\maketitle

\begin{abstract}
Adding noise is easy; what about denoising? Diffusion is easy; what about reverting a diffusion? Diffusion-based generative models aim to denoise a Langevin diffusion chain, moving from a log-concave equilibrium measure $\nu$, say an isotropic Gaussian, back to a complex, possibly non-log-concave initial measure $\mu$. The score function performs denoising, moving backward in time, and predicting the conditional mean of the past location given the current one. We show that score denoising is the optimal backward map in transportation cost. What is its localization uncertainty? We show that the curvature function determines this localization uncertainty, measured as the conditional variance of the past location given the current. We study in this paper the effectiveness of the diffuse-then-denoise process: the contraction of the forward diffusion chain, offset by the possible expansion of the backward denoising chain, governs the denoising difficulty. For any initial measure $\mu$, we prove that this offset net contraction at time $t$ is characterized by the curvature complexity of a smoothed $\mu$ at a specific signal-to-noise ratio (SNR) scale $\sr(t)$. We discover that the multi-scale curvature complexity collectively determines the difficulty of the denoising chain. Our multi-scale complexity quantifies a fine-grained notion of average-case curvature instead of the worst-case. Curiously, it depends on an integrated tail function, measuring the relative mass of locations with positive curvature versus those with negative curvature; denoising at a specific SNR scale is easy if such an integrated tail is light. We conclude with several non-log-concave examples to demonstrate how the multi-scale complexity probes the bottleneck SNR for the diffuse-then-denoise process.
\end{abstract}

% %%%% Main text entry area:

\section{Introduction}

Empirically, diffusion models exhibit compelling performance as probabilistic generative models for complex, multi-dimensional probability measures \citep{ho2020denoising, sohl2015deep, song2019generative, song2021maximum, karras2022elucidating}. They are often employed when traditional sampling methods suffer, such as when the probability measure is multi-modal and supported on an unknown manifold that is hard to mathematize, such as the probability distribution of (pixels of) images. Despite their impressive performance in practice, several fundamental theoretical questions regarding denoising quality remain unanswered \citep{block2020generative, chen2022sampling, lee2022convergence}.

In a nutshell, diffusion models aim to revert a Langevin diffusion chain, moving from a log-concave equilibrium measure $\nu$, say an isotropic Gaussian, back toward a complex, possibly non-log-concave initial measure $\mu$. A Langevin diffusion is a forward chain in the space of probability measures, implemented iteratively with a stepsize $\eta$ as in \eqref{eqn:forward-mu}, where $\nu$ is the equilibrium measure and $\bbf_{k}^{\mu}$ denotes the forward transition map\footnote{The rigorous definition will follow in Section~\ref{sec:forward-backward}.} at step $k$.
\begin{align}
	\text{Forward Diffusion} ~~\mu &=: \mu_0 \overset{\bbf_1^{\mu}}{\rightarrow}  \mu_\eta \rightarrow \cdots \overset{\bbf_{k}^{\mu}}{\rightarrow} \mu_{k\eta} \rightarrow \cdots \overset{\bbf_{K}^{\mu}}{\rightarrow} \mu_{K\eta} ~~\overset{K \rightarrow \infty}{\rightsquigarrow} \nu \;, \label{eqn:forward-mu} \\
	\text{Time Reversal} ~~\mu &=: \mu_0 \underset{\bb_1^{\mu}}{\leftarrow}  \mu_\eta \leftarrow \cdots \underset{\bb^{\mu}_{k}}{\leftarrow} \mu_{k\eta} \leftarrow \cdots \underset{\bb^{\mu}_K}{\leftarrow} \mu_{K\eta} ~~ \boxed{\color{red} \underset{?}{\leftarrow} \cdots \underset{\bb^{\nu}}{\leftarrow}  \nu}\;. \label{eqn:time-reversal-mu}
\end{align}
This forward chain is Markovian and thus time-reversible as in \eqref{eqn:time-reversal-mu}, where $\bb_{k}^{\mu}$ denotes the backward transition map\footnote{Again, the rigorous definition will follow in Section~\ref{sec:forward-backward}.} for the $\mu$-chain at step $k$. For sampling, diffusion models propose starting the time reversal process at $K\rightarrow \infty$, namely the equilibrium measure $\nu$, and aim to reverse it back to the initial measure $\mu$. However, this is problematic both theoretically (in terms of probability) and conceptually (in terms of optimization). Theoretically, the time reversal of a Markov Chain starting from an equilibrium (invariant measure) will get stuck \citep{norris1997MarkovChains}. The backward chain $\nu \leftarrow_{\bb^{\nu}} \nu$ will stay as the invariant measure and never reach $\mu_{K\eta}$. Conceptually, hoping to trace back the initial condition $\mu$ starting from $\nu$ is infeasible. If two forward chains with initials $\mu \neq \mu'$ both end up at $\nu$, the time-reversal starting from $\nu$ (and solely using the future information) cannot recover the past and distinguish these two chains.

Therefore, framing the diffusion model as a time-reversal Markov chain—one that recovers the past from the future—does not fully resolve the underlying conceptual issues. In contrast, we propose studying diffusion models as a form of \textit{sensitivity analysis}. It is clear that starting from $\mu_{K\eta}$ and traversing back following transitions $\bb^{\mu}_K \cdots \bb^{\mu}_k \cdots \bb^{\mu}_1$ will identify $\mu$. But what will happen if we start from a perturbed version $\nu =:\bar{\nu}_{K\eta} \neq \mu_{K\eta}$ and follow the same traversing path $\bb^{\mu}_K \cdots \bb^{\mu}_k \cdots \bb^{\mu}_1$ as illustrated in \eqref{eqn:backward-nu}? 
\begin{align}
	\text{Time Reversal} ~~ \mu &=: \mu_0 \underset{\bb_1^{\mu}}{\leftarrow}  \mu_\eta \leftarrow \cdots \underset{\bb^{\mu}_{k}}{\leftarrow} \mu_{k\eta} \leftarrow \cdots \underset{\bb^{\mu}_K}{\leftarrow} \mu_{K\eta} ~~ \boxed{\xcancel{\color{red} \underset{?}{\leftarrow} \cdots \underset{\bb^{\nu}}{\leftarrow}  \nu} } \label{eqn:backward-mu} \\
	\text{Backward Denoising} ~~\mu &\overset{?}{\leftsquigarrow} \bar{\nu}_{0} \underset{\bb_1^{\mu}}{\leftarrow}   \bar{\nu}_{\eta} \leftarrow \cdots  \underset{\bb^{\mu}_{k}}{\leftarrow} \bar{\nu}_{k\eta} \leftarrow \cdots \boxed{\color{blue} \underset{\bb^{\mu}_K}{\leftarrow} \bar{\nu}_{K\eta}  := \nu} \;, \label{eqn:backward-nu}
\end{align}
The traversing path carries the past information, namely the signature of the initial measure $\mu$, but starts with an easy-to-sample $\nu$ as a surrogate, replacing $\mu_{K\eta}$. This mismatch renders the backward denoising process \eqref{eqn:backward-nu} \textit{non-Markovian}, thereby making it plausible to recover the past from the future.

\paragraph{Sensitivity Analysis: Score, Curvature, and Localization.}
How can the traversing operator $\bb^{\mu}_k$ be estimated? We show in Proposition~\ref{prop:score-ot-map} that the optimal backward operator $\bb^{\mu}_k$---in terms of transportation cost---depends on the score function $\nabla \log p_{\mu_{k \eta}}$ where $p_{\mu_{k \eta}}$ is the probability density function of $p_{\mu_{k \eta}}$. A key observation in diffusion models is that score estimation can be cast as a supervised prediction problem \citep{saremi2019neural} using the forward diffusion chain, where the goal is to predict the previous location given the current one, in terms of conditional expectation, as seen in Proposition~\ref{prop:score-denoising}. 

The fundamental question is whether the perturbation $\bar{\nu}_{K\eta} \approx \mu_{K\eta}$ will get amplified along the backward denoising chain. Will $\bar{\nu}_{k\eta}$ as in \eqref{eqn:backward-mu}-\eqref{eqn:backward-nu} stay close to $\mu_{k\eta}$, for $k = K, \cdots, 0$? This paper provides a precise study of diffusion models from the viewpoint of denoising quality. We unveil in Proposition~\ref{prop:curvature-localization} that the curvature function $\nabla^2 \log p_{\mu_{k \eta}}$ controls the key aspects of this sensitivity analysis. In other words, the curvature function governs the score function's denoising capability, which we refer to as localization. Localization quantifies the uncertainty of the previous location given the current, in terms of conditional covariance.

 Most of the current theoretical literature \citep{lee2023convergence, chen2022sampling, chen2023improved} treats this curvature as a nuisance, for example, by focusing on ``non-expansion'' metrics $d$, and leveraging a form of data-processing inequality,
\begin{align*}
	d( \mu_{k-1}, \bar{\nu}_{k-1}) / d( \mu_{k}, \bar{\nu}_{k} ) \leq 1, ~ \text{for} ~d \in \{ d_{\rm TV}, d_{\rm KL} \} \;.
\end{align*}
Consequently, $d(\mu_{0}, \bar{\nu}_{0}) \leq d(\mu_{K}, \nu)$, and $d(\mu_{0}, \bar{\nu}_{0}) $ can be determined by the contraction of the forward diffusion chain alone. However, even for simple Gaussians, the backward denoising could either be (i) an expansion or (ii) a contraction much faster than the forward diffusion, under the Wasserstein-2 metric, $W$. Consider the forward diffusion \eqref{eqn:langevin} with initialization $\mu \sim \cN(0, s^2)$ and temperature $\beta = 1$. Recall, the forward diffusion contracts in $W$ at a rate $1-\eta$. Our Corollary~\ref{cor:backward-expansion-lower-bound} implies that the one-step backward denoising at effective time $t=k\eta$ with $\mu_{k}\sim \cN(e^{-k\eta}, e^{-2k\eta} s^2+1-e^{-2k\eta})$ satisfies the equality
\begin{align*}
	\frac{W(\mu_{k-1}, \bar{\nu}_{k-1})}{W(\mu_{k}, \bar{\nu}_{k})} =  1 + \eta \tfrac{s^2-1}{e^{k\eta}+s^2-1}~, \quad 
	\begin{cases}
		(i) > 1 & \text{if $s^2 > 1$,}\\
		(ii) < 1 - \eta & \begin{tabular}{@{}l@{}}
  if $s^2 < \tfrac{1}{2}$, and  \\
  $k < \tfrac{1}{2\eta}\log(2(1-s^2))$.
\end{tabular}
	\end{cases}
\end{align*}

In either case, treating the backward denoising as simply non-expansive under certain metrics is losing significant information about the diffuse-then-denoise process.

In contrast, we study the process under the Wasserstein-2 metric and provide a fine-grained analysis of how a complexity measure based on the curvature, $\nabla^2 \log p_{\mu_{k \eta}}$, completely governs the expansion or contraction of the backward denoising step.

\paragraph{Diffuse-then-Denoise Process: Offset Contraction/Expansion.}
The sensitivity analysis is equivalent to studying the effectiveness of a diffuse-then-denoise process. To simplify the exposition, we consider a one-step version here. Given any two measures, $\mu_0 \neq \nu_0$, run one step of forward diffusion as in \eqref{eqn:forward-mu} with $K=1$, with $\bZ$ isotropic Gaussian and $\beta \in \R_{+}$, the temperature,
\begin{align*}
	\bX_{\eta} = (1-\eta) \bX_0 + \sqrt{2\beta^{-1}\eta} \bZ, ~ \bX_0 \sim \mu_0, ~~\text{and}~~\bY_{\eta} = (1-\eta) \bY_0 + \sqrt{2\beta^{-1}\eta} \bZ, ~ \bY_0 \sim \nu_0 \;.
\end{align*}
Denote the measure associated with $\bY_{\eta} \sim \bar{\nu}_{\eta}$,
then run one step of backward denoising with the optimal $\bb^{\mu}$ (as in \eqref{eqn:backward-nu} with $K=1$) to obtain $ \bar{\nu}_{0} \leftarrow_{_{\bb^{\mu}}} \bar{\nu}_{\eta}$.
A natural question is whether the cumulative effect of this diffuse-then-denoise process is a contraction. Namely, is $W(\mu_{0}, \bar{\nu}_{0})$ smaller than $W(\mu_{0}, \nu_{0})$?

At first sight, it may seem unnatural that the diffuse-then-denoise process will yield an improvement. How can adding noise and then denoising be better? Consider setting $\beta^{-1} = 0$, and let $\mu_0 = \delta_{x}$ and $\nu_0 = \delta_{y}$ be two Dirac measures supported at $x \neq y$. One can verify that
\begin{align*}
	\text{Forward}~ W(\mu_0, \nu_0) = \|x - y\| \;, ~  \text{Backward}~W(\mu_{\eta}, \bar{\nu}_{\eta}) = (1-\eta) \|x - y\| \;, \\
	\text{Forward-then-Backward}~ W(\mu_0, \bar{\nu}_0) = (1 - \eta)^{-1} W(\mu_{\eta}, \bar{\nu}_{\eta}) = \|x - y\|  = W(\mu_0, \nu_0) \;.
\end{align*}
Here, the backward expansion offsets the forward contraction, resulting in no net effect for the forward-then-backward process.

Curiously, as we shall show in Theorems~\ref{thm:forward-contraction}, \ref{thm:backward-expansion} and \ref{thm:forward-then-backward-higher-order}, roughly speaking, when $\beta^{-1} \neq 0$ and $\nabla^2 \log p_{\mu}$ has a certain curvature complexity quantified by $\zeta \in \R$, there is a net effect of the diffuse-then-denoise process
\begin{align*}
	&\text{Forward Diffusion:}~ &&\frac{W(\mu_{\eta}, \bar{\nu}_{\eta})}{W(\mu_0, \nu_0)} \leq 1 - \eta + O(\eta^2) \;,\\
    &\text{Backward Denoising:}~ &&\frac{W(\mu_0, \bar{\nu}_0) }{W(\mu_{\eta}, \bar{\nu}_{\eta})} \leq 1 + \eta - \eta \beta^{-1} \zeta + O(\eta^2) \;, \\
	&\text{Diffuse-then-Denoise:}~ &&\frac{W(\mu_0, \bar{\nu}_0)}{W(\mu_0, \nu_0)} \leq 1 - \eta \beta^{-1} \zeta + O(\eta^2) \;.
\end{align*}
This contrasts sharply with the $\beta^{-1} = 0$ case: curiously, adding non-trivial noise then denoising could be beneficial and result in an effective net contraction $\beta^{-1} \zeta$, provided the curvature $\zeta > 0$. We emphasize that these inequalities become equalities for simple log-concave $\mu_0$'s, thus establishing the sharpness of our characterization, see Corollary~\ref{cor:backward-expansion-lower-bound}. In summary, the success or failure of the diffuse-then-denoise process solely depends on the curvature/localization function defined in Proposition~\ref{prop:curvature-localization}.

Chaining the argument, we show in Corollary~\ref{cor:chaining} that, rather than log-concavity, it is a multi-scale complexity along all time scales that controls the net effect of the diffuse-then-denoise process \eqref{eqn:backward-nu} with $K$ steps
\begin{align*}
	\frac{W(\mu_0, \bar{\nu}_0)}{W(\mu_0, \nu_0)} \leq \exp( - \beta^{-1} \eta \sum_{k=1}^K \zeta_{k\eta}  + O(\eta^2)) \;,
\end{align*}
where $\zeta_{k\eta}$ is some curvature/localization complexity of $\mu_{k\eta}$ at time $t = k\eta$. For a general initial measure $\mu$, the diffused version $\mu_{t}$ could be non-log-concave, depending on the time scale $t$. In Section~\ref{sec:multi-scale}, we introduce a notion of multi-scale complexity that describes the localization difficulty for generic $\mu$ and is new to the literature. The multi-scale corresponds to the effective signal-to-noise ratio for the denoising step at each time scale $t$.

\paragraph{Multi-Scale Complexity: Beyond Log-Concavity.}
With any initial measure $\mu$, the Ornstein-Uhlenbeck diffusion process at different time scales $t$ is closely tied to a multi-scale smoothing, at different signal-to-noise ratio $\sr = \sr(t)$ (defined in \eqref{eqn:key}), 
\begin{align*}
	\bY_{\sr} := \sr  \bX + \bZ \;, 
		~ (\bX, \bZ) \sim \mu \otimes \cN(0,1) \;.
\end{align*}
We introduce in Section~\ref{sec:multi-scale} the following multi-scale complexity, defined based on the localization function $L_{\sr} (y) := \|  \Cov[ \sr  \bX | \bY_{\sr} = y] \|_{\op}$, 
\begin{align*}
	h_{\mu}(\delta, \sr) &= \int_{1-\delta}^{\infty} \mathbb{P}\big( L_{\sr} (\bY_\sr) > u \big) \dd u ~, ~ \delta \in (0, 1] \;.
\end{align*}
Here $\mathbb{P}\big( L_{\sr} (\bY_\sr) > u \big)$ is the tail probability of random variable $\| \Cov[ \sr  \bX | \bY_{\sr}] \|_{\op}$, and therefore $h_{\mu}(\delta, \sr)$ controls its integrated tail. We show in Theorem~\ref{thm:backward_expansion_non_log_concave} that the growth of this integrated tail function $\delta \mapsto h_{\mu}(\delta, \sr)$ governs the effectiveness of the diffuse-then-denoise process at any time $t$, with the corresponding SNR scale $\sr(t)$. Our characterization holds for any generic $\mu$ that extends far beyond log-concavity.

This multi-scale complexity at every SNR scale $\sr \geq 0$, conceptually, is quantifying the curvature $\nabla^2 \log p_{\bY_{\sr}}(y)$ in a certain average sense weighted by $p_{\bY_{\sr}}(y)$. This notion, rather than the worst-case curvature, governs the localization accuracy of the backward denoising chain at every scale. Our analysis is fine-grained in two senses. First, for any $\bX \sim \mu$, across different scales of $\sr$, the non-log-concavity of $p_{\bY_{\sr}}$ changes in a complex, multi-resolution way. Non-log-concavity across all scales of $\sr$ collectively determines the effect of the backward denoising chain. Second, unlike in the worst-case analysis, where the worst point $y$ with the largest positive curvature $\nabla^2 \log p_{\bY_{\sr}}(y)$ dictates the analysis, we leverage the following observation. If a point $y$ with a large positive curvature $\nabla^2 \log p_{\bY_{\sr}}(y)$ is unlikely to occur with $p_{\bY_{\sr}}(y)$ small, the overall non-log-concavity is still benign. This is true in many examples; see Section~\ref{sec:examples}.

This multi-scale complexity may appear mysterious; we present a concrete, non-log-concave example to clarify intuitions. Consider the simplest one-dimensional non-log-concave measure $\mu = \tfrac{1}{2} \delta_{-1} + \frac{1}{2} \delta_{+1}$.
\begin{itemize}
	\item Localization and Curvature: $L_{\sr} (y) \leq 1 - \delta$ if and only if $\nabla^2 \log p_{\bY_{\sr}}(y) \preceq -\delta \cdot I_d$, that is, $p_{\bY_{\sr}}(y)$ strongly log-concave at $y$. These $y$'s are good locations with strong curvature: accurate denoising and localization from $y$ is easy. These are locations where diffuse-then-denoise is beneficial. See locations outside the shaded areas in Figure~\ref{fig:intro_density}.
	\item Survival Function: provided we have samples $\bY_{\sr}$, the survival function $s_\sr(1-\delta) := \mathbb{P}\big( L_{\sr} (\bY_\sr) > 1-\delta \big)$ tells us the probability of bad locations with possible non-log-concavity where the backward denoising is hard. It quantifies the mass that may induce a large expansion in the diffuse-then-denoise process.  See shaded areas in Figure~\ref{fig:intro_density}.
	\item Integrated Tail: slow growth in the integrated tail function $\delta \rightarrow h_{\mu}(\delta, \sr)$ implies that one can take an effectively large $\delta$ such that the bad locations with positive curvature have a negligible expansion effect, and good locations with strong negative curvature induce a contraction effect, offsetting the expansion. This complexity quantifies an overall notion of curvature.
	
	Figure~\ref{fig:intro_density} shows: (a) low $\sr = 0.71$, $h_{\mu}(0, \sr) = h_{\mu}(0.5, \sr) = 0$, no growth,  (b) mid $\sr = 1.50$, $h_{\mu}(0, \sr) = 0.13, h_{\mu}(0.5, \sr) = 0.24$, rapid growth of integrated tail as $\delta$ increases from 0 to 0.5, and (c) high $\sr = 3.00$, $h_{\mu}(0, \sr) = 0.02, h_{\mu}(0.5, \sr) = 0.03$, a very slow growth. Curiously, the complexity is non-monotonic in SNR $\sr$: the mid $\sr$ presents the hardest non-log-concavity for localization.
\end{itemize}

\begin{figure}[t]
\centering
\begin{subfigure}[b]{0.32\textwidth}
	    \includegraphics[width=\textwidth]{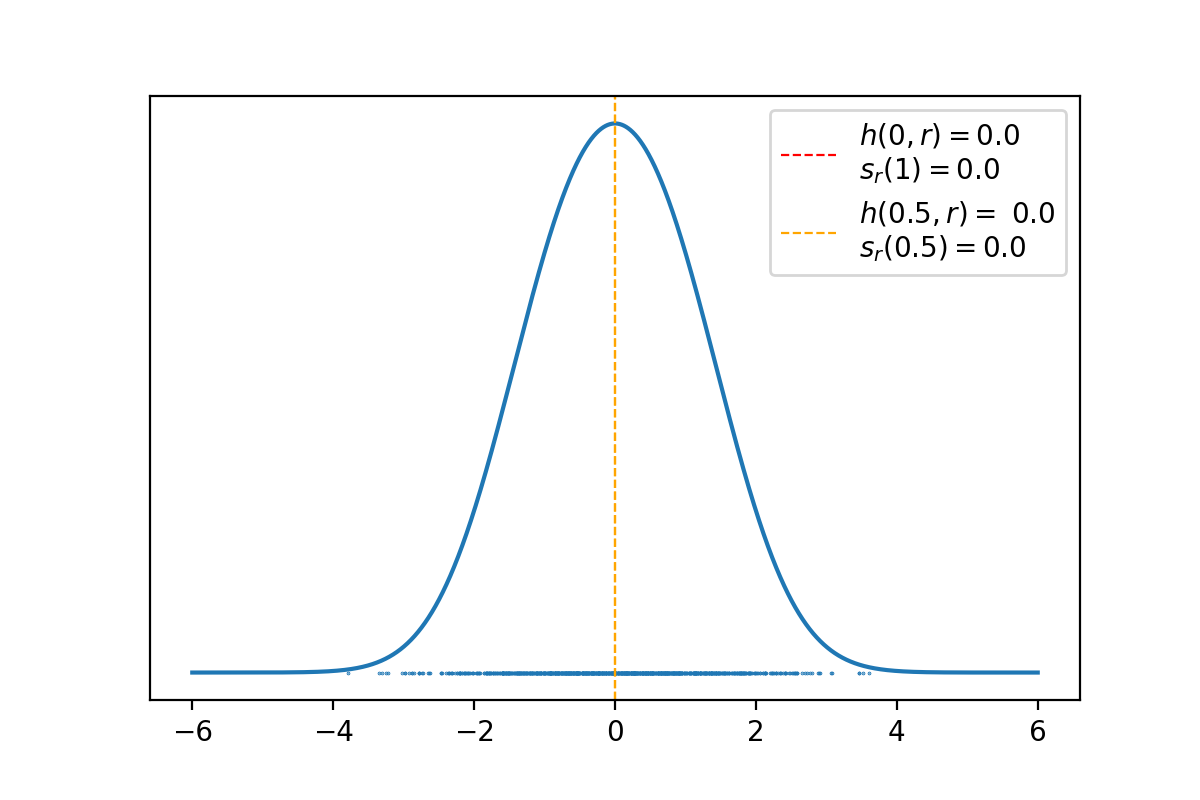}
		\caption{$\sr = 0.71$}
	  \end{subfigure}
   %\hfill
	  \begin{subfigure}[b]{0.32\textwidth}
	    \includegraphics[width=\textwidth]{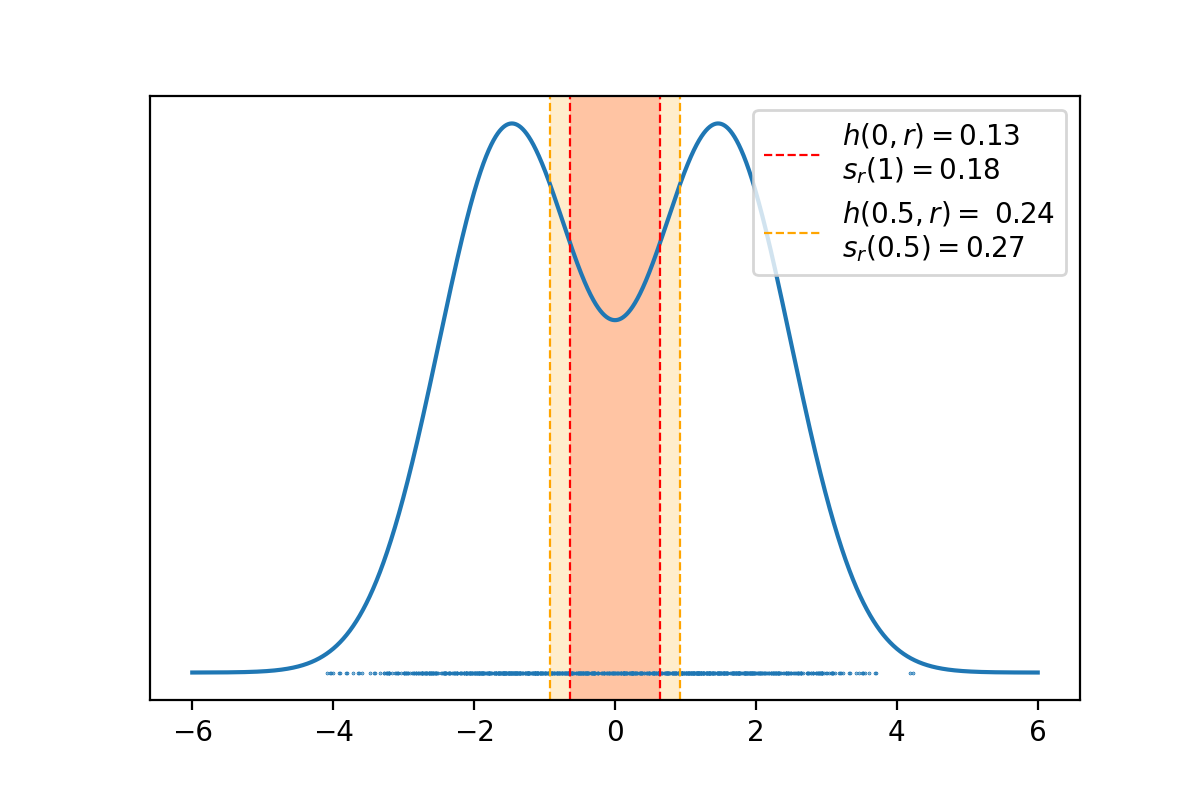}
		\caption{$\sr = 1.50$}
	  \end{subfigure}
   \begin{subfigure}[b]{0.32\textwidth}
	    \includegraphics[width=\textwidth]{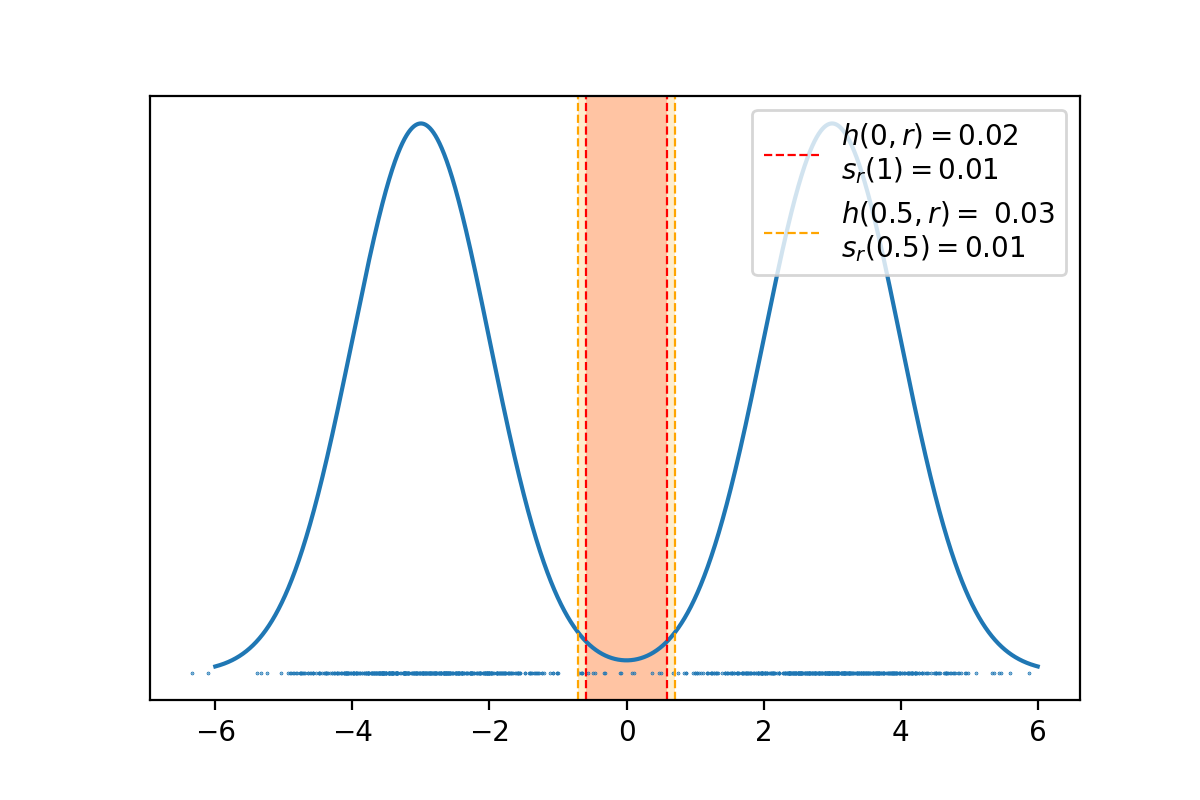}
		\caption{$\sr = 3.00$}
	  \end{subfigure}
   \caption{We plot the density $p_{\bY_{\sr}}(\cdot)$, for three SNR $\sr$'s. Red shaded area corresponds to non-log-concave region with $\nabla^2 \log p_{\bY_{\sr}}(\cdot) > -\delta$ with $\delta = 0$, and Orange shaded area corresponds to $\delta = 0.5$. For each $\delta$, we report the integrated tail $h_{\mu}(\delta, \sr)$ and survival function $s_\sr(1-\delta)$ for $\delta \in \{ 0, 0.5 \}$. (a) low $\sr = 0.71$, $s_{\sr}(1) = 0$, $s_{\sr}(0.5) = 0$;  (b) mid $\sr = 1.50$, $s_{\sr}(1) = 0.18$, $s_{\sr}(0.5) = 0.27$, non-trivial mass of bad locations; (c) high $\sr = 3.00$, $s_{\sr}(1) = 0.01$, $s_{\sr}(0.5) = 0.01$, though bad locations do exist, samples $\bY_{\sr}$ rarely end up there.}
   \label{fig:intro_density}
\end{figure}

How is this multi-scale complexity useful? Here we plot the survival function $s_\sr(u)$, indexed by the different SNR $\sr$. Curiously, the growth rate of the integrated tail complexity $h_{\mu}(\delta, \sr)$ is non-monotonic in SNR $\sr$: both low SNR $\sr \leq 1$ and high $\sr \geq 2$ have extremely slow growth in the integrated tail; the hardest non-log-concavity happens when $\sr \in (1, 2)$. Conceptually, for any given initial measure $\mu$, the multi-scale complexity tells us precisely at what time scale $\sr(t)$ the backward denoising chain suffers the most.
See Section~\ref{sec:examples} for more comprehensive examples.
\begin{figure}[H]
\centering
\begin{subfigure}[b]{0.47\textwidth}
	    \includegraphics[width=\textwidth]{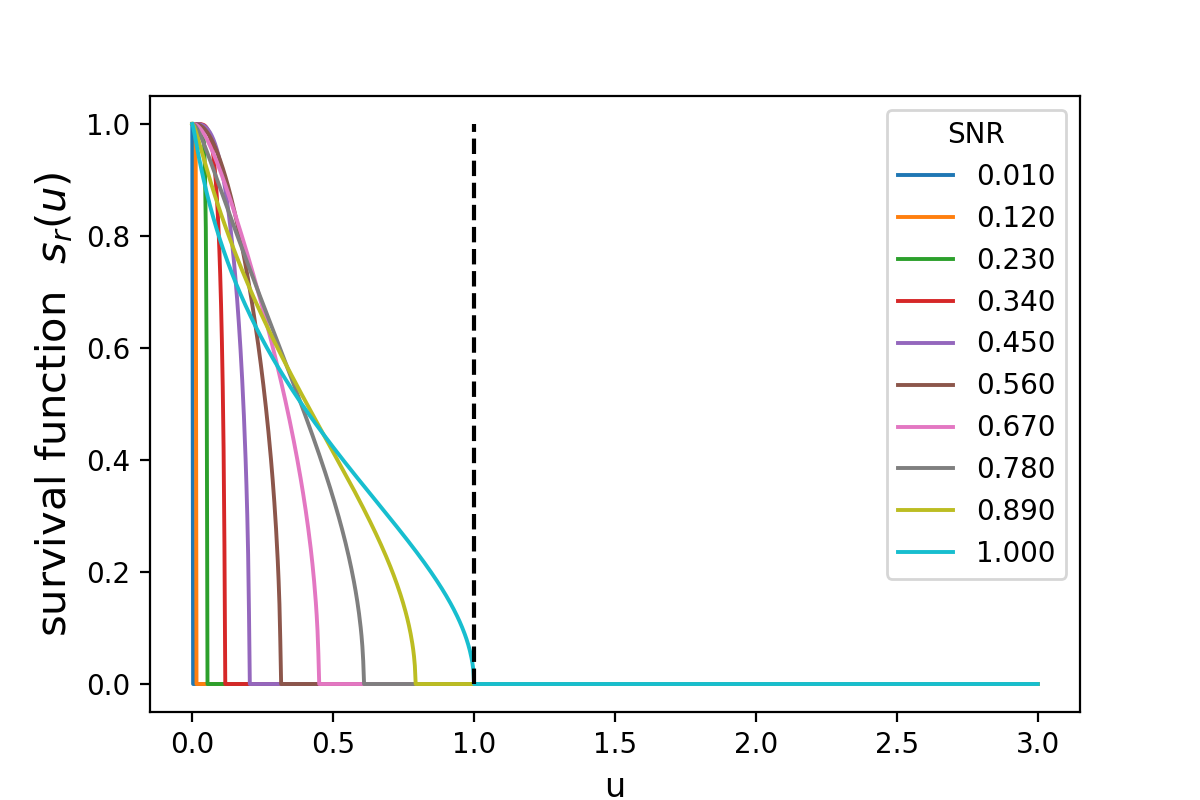}
		\caption{}
	  \end{subfigure}
   \hfill
	  \begin{subfigure}[b]{0.47\textwidth}
	    \includegraphics[width=\textwidth]{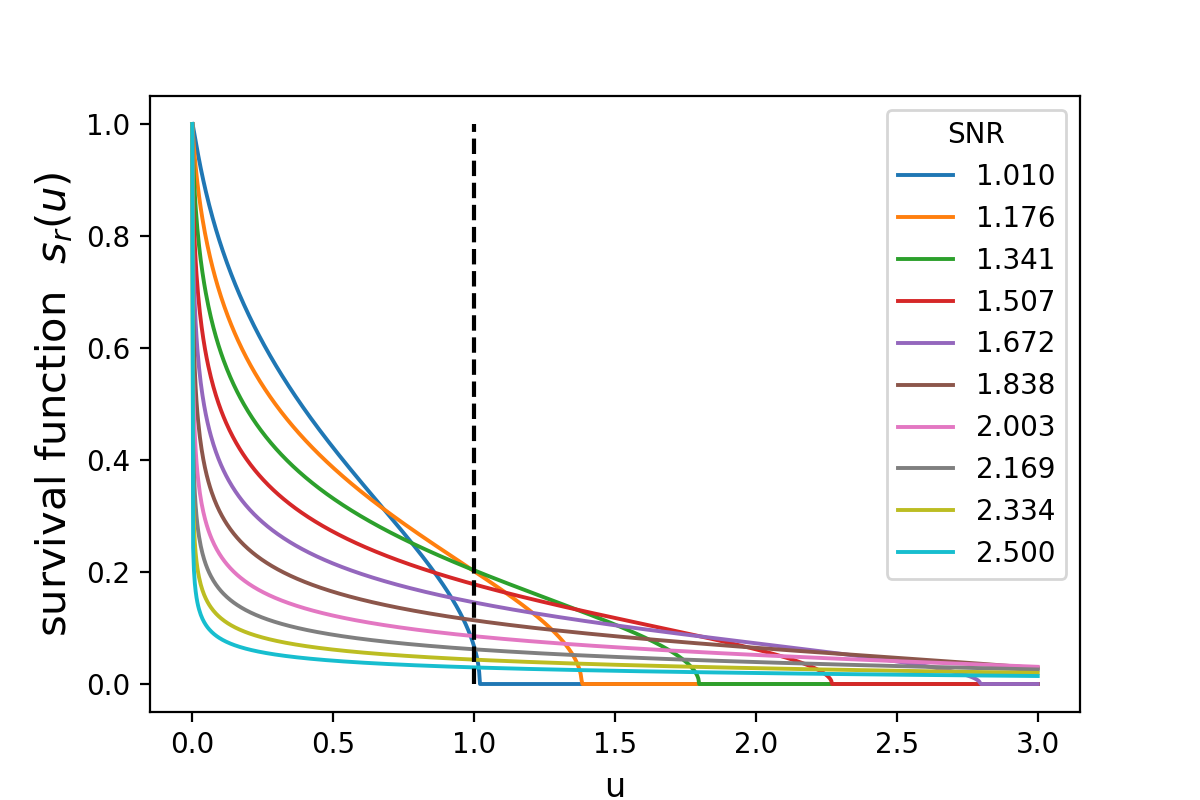}
		\caption{}
	  \end{subfigure}
   \caption{(a) $s_\sr(u)$ Low SNR. (b) $s_\sr(u)$ High SNR.}
   \label{fig:intro}
\end{figure}

\subsection{Preliminaries and Notations}
\paragraph{Notation} Throughout this paper, we consider $\mu \in \cP_2(X)$, the space of probability measures with a bounded second moment with $X \subseteq \R^d$. $\cP_2^{r}(X) \subset \cP_2(X)$ denotes probability measures absolutely continuous to the Lebesgue measure. For a measure $\nu \in \cP_2^r(X)$, denote $\nu = p_{\nu} \cdot \cL^{d}$ the Radon-Nikodym derivative w.r.t Lebesgue measure, where $p_{\nu} \in C^1(X)$ is the density function. We reserve $\cG, \cF, \cE$ for functionals $\cP_2(X) \rightarrow \mathbb{R}$, and $h, g, f$ for real-valued functions $X \rightarrow \mathbb{R}$. Let $\boldsymbol{\xi} \in C_c^\infty(X; X)$ denote smooth vector fields; $\bi$ denotes the identity map, $\bt, \bbf, \bb$ denotes the optimal transport maps, in Definition~\ref{def:ot}. We use $\bX, \bY, \bZ, \bB$ to denote random vectors. For a matrix $M$, $\|M\|_{\op}$ denotes the operator norm, $\tr(M)$ denotes the trace of the matrix. For a vector $v$, $\| v\|$ denotes the Euclidean norm.

The forward Langevin diffusion process is a stochastic differential equation in the form,
\begin{align}
	\label{eqn:langevin}
	\dd \bX_t = -\nabla f(\bX_t) \dd t + \sqrt{2\beta^{-1}} \dd \bB_t, ~~ \nabla f\in C^\infty_c(X, X) \;.
\end{align}
Here $\beta^{-1}$ serves as the temperature parameter, which controls the magnitude of the injected noise $\dd \bB_t$. The probability measure of $\bX_t$, denoted as $\mu_t$ with density $\rho_t := p_{\mu_t}$, evolves according to the Fokker-Planck partial differential equation
\begin{align}
	\label{eqn:fokker-planck}
	\partial_t \rho_t =  \nabla \cdot \big( \rho_t ( \nabla f + \beta^{-1} \nabla \log \rho_t ) \big) \;.
\end{align}

Forward diffusion can also be viewed as a gradient flow in the Wasserstein space. For two measures $\mu, \nu \in \cP^r_2(X)$, define the Wasserstein metric as
\begin{align*}
		W_2(\mu, \nu) := \min_{\pi \in \Pi(\mu, \nu)} \left(  \int \| x - y \|^2 \dd \pi(x, y) \right)^{1/2} \;.
\end{align*}
\citet{jordan1998variational} showed that the forward Fokker-Planck PDE, $\mu_{t} \rightarrow \mu_{t+\eta}$, can be viewed as the steepest descent with respect to the Wasserstein metric in the infinitesimal limit, as $\eta \rightarrow 0$
\begin{align}
	\label{eqn:forward-wass-descent}
	\mu_{t + \eta} := \argmin_{\nu \in \cP_2^r(X) }\frac{1}{2\eta} W_2^2(\mu_t, \nu) + \cG(\nu), ~\cG : \cP^r_2(X) \rightarrow \R \;.
\end{align}
Here the functional $\cG(\nu)= \cF(\nu) + \beta^{-1} \cE(\nu)$ consists of two parts, a potential functional $\cF$ and an entropy functional $\cE$
\begin{align*}
	\cF(\nu) &:= \int f \dd{\nu}  = \int f(x) p_{\nu}(x) \dd{x} \;,\\
	\cE(\nu) &:= \int \log( \frac{\dd{\nu}}{\dd{x}} )  \dd{\nu} = \int p_{\nu}(x) \log p_{\nu}(x) \dd{x} \;.
\end{align*}

\subsection{Related Work}
Probabilistic generative models, including generative adversarial networks, flow-based generative models, and diffusion-based generative models, have recently received broad research interest, both empirically \citep{goodfellow2020generative, song2019generative, song2020improved, song2021maximum, karras2022elucidating, nichol2021improved, kingma2021variational, huang2023make} and theoretically \citep{papamakarios2021normalizing, Liang_2021, HurGuoLiang_2024, oko2023diffusion, lubeck2022neural, chen2024probability, block2020generative, de2021diffusion, lee2022convergence, montanari2023sampling}.
Generative models with diffusion-based sampling can be traced back to three influential formulations: Denoising Diffusion Probabilistic Models (DDPMs) \citep{ho2020denoising, sohl2015deep}, Score-based Generative Models (SGMs) \citep{song2019generative}, and Score-based Stochastic Differential Equations \citep{song2021maximum, karras2022elucidating}. The latter universalizes the frameworks by discretizing a particular Stochastic Differential Equation at specific signal-to-noise ratios (SNR). Sampling can be viewed as Langevin dynamics for the time-reversed SDE \citep{anderson1982reverse, ho2020denoising, dockhorn2021score}, or deterministic transports via a probability flow ODE \citep{song2020score, maoutsa2020interacting, Guo_2023}. The deterministic denoising outlined in \citet{song2020score} is optimal in terms of transportation of measure in the Wasserstein metric. This was previously noted by \citet{chen2024probability, jordan1998variational} in the infinitesimal limit $\eta \rightarrow 0$. We show in Proposition~\ref{prop:score-ot-map} that for a fixed stepsize $\eta$, the score denoising map is the optimal transport map in the backward chain. 

Rigorous justification for diffuse-then-denoise can be found in the seminal work of \cite{saremi2019neural} who unified two distinct schemes: (1) smoothing via Gaussian convolution $\bY =  \bX + \tfrac{1}{\sr} \mathcal{N}(0, I_d)$, and (2) denoising via the score function $\E[\bX|\bY = y] = y +\tfrac{1}{\sr^2} \nabla \log p_{\bY}(y)$, to design a single machine for sampling $\bX$. Their approach is motivated by a fundamental result in the concentration of measure literature \citep{vershynin_2018}; that $d$-dimensional Gaussian vectors concentrate on a uniform sphere of radius $\sqrt{d}$ in high-dimensions. And so, while $\bX$ may be non-log-concave or confined to a low-dimensional manifold, $\bY$ may not be. Thus, diffuse ($\bX \to \bY$) then denoise ($\bY \to \bX$) presents a way to sample when the distribution of $\bX$ is misbehaved. Exploiting the machinery of measure transportation, we show that Gaussian convolution up to an appropriate signal-to-noise ratio $\sr$ promotes log-concavity, yielding desirable contractive properties for denoising, $\bY \to \bX$.

Recent empirical work \citep{song2019generative, song2020improved, song2021maximum, karras2022elucidating} demonstrates dramatic improvements in sample quality and log-likelihood metrics for diffusions by employing a rich scheme of noise scheduling, namely the effective SNR $t \mapsto \sr(t)$, a map between time and the corresponding SNR. Several authors consider jointly learning the schedule alongside diffusion network parameters \citep{nichol2021improved, kingma2021variational}. While state-of-the-art applications suggest that convolving at multiple noise scales is advantageous \citep{dhariwal2021diffusion, austin2021structured}, its theoretical benefit remains to be understood. Restricting to the Ornstein-Uhlenbeck process, we discover a multi-scale complexity measure that controls the effective contraction of diffuse-then-denoise at each SNR $\sr$. Particularly for non-log-concave distributions, a wide regime of SNR schedule $\sr(t)$ may be necessary as diffusing at a specific SNR scale $\sr$ pushes the problem into an effective near-log-concave setting with strong curvature. Our multi-scale complexity determines the effective curvature, thereby providing a vehicle for probing the noise-scheduling question.

Theoretical investigation of diffusion-based sampling mostly focuses on the non-expansive properties of the denoising process under $f$-divergence, say total variation $d_{\rm TV}$ or Kullback-Leibler $d_{\rm KL}$, where data processing inequities hold. The current theory falls short in addressing the behavior of the backward denoising process under the Wasserstein metric, where the backward denoising chain does incur expansion even for simple Gaussian distributions \citep{chen2022sampling}. Typically, the problem is coupled with the additional burden of having access to only an estimated score function \citep{block2020generative, de2021diffusion, lee2022convergence}. This presents a significant challenge; however, it is not the focus of the current paper. We isolate treatment of the backward denoising question: initializing at $\mu_T$, and the equilibrium measure, $\nu$, suppose we observe $\{\mu_0 \leftarrow_{\bb^\mu_1} \ldots \leftarrow_{\bb^\mu_K} \mu_K\}, \ \{\bar{\nu}_0 \leftarrow_{\bb^\mu_1} \ldots \leftarrow_{\bb^\mu_K} \bar{\nu}_K := \nu\}$. How does $d(\mu_K, \bar{\nu}_{K}) \rightarrow d(\mu_{K-1}, \bar{\nu}_{K-1}) \rightarrow \cdots \rightarrow d(\mu_0, \bar \nu_0)$ evolve? Is the backward denoising chain expansive or contractive at each time scale $t = k\eta$? What geometric complexities govern the quality of the denoising at each time scale? We view our focus as complementary to the current literature, which conducts careful sensitivity analyses of score estimation and time discretization.

Provided the score functions are accurate in an $L_2$ sense,  \citet{block2020generative, lee2022convergence} established results in the Wasserstein metric for unimodal distributions, for example, those satisfying strong dissipativity or a log-Sobolev inequality. These results are extended in \citet{lee2023convergence, chen2022sampling, chen2023improved} to allow substantial non-log-concavity for reverse SDE sampling, and in \citet{chen2024probability} for the probability flow ODE. However, these analyses are conducted in $d_{\rm TV}$ or $d_{\rm KL}$ and avoid the detailed curvature information in the backward map. For $d_{\rm TV}$ or $d_{\rm KL}$, and a finite $K$, one can always appeal to a data processing inequality to bound $d(\mu_0, \bar \mu_0) \leq d(\mu_K, \nu)$, solely determined by the forward diffusion chain. This is not fine-grained enough to understand denoising. It overlooks the curvature information and the amount of non-log-concavity at any specific SNR $\sr$. As in the introduction, the backward denoising step could expand significantly or contract much faster than the forward diffusion under the Wasserstein metric. 

More recently, \citet{bruno2023diffusion, gao2025wasserstein} establish results in the Wasserstein metric under log-concavity assumptions. The concurrent work \citep{gentiloni2025beyond} tackles the problem assuming weak log-concavity of the target density, a property with origins in dissipativity - that the target density is essentially log-concave outside a region of compact support. Our work makes no such assumptions. Similar to \cite{gentiloni2025beyond}, we note that the denoising process can exhibit non-contractive behavior. In our analysis, the fine-grained behavior of the denoising process is governed by a notion of average curvature, defined as an integrated tail function measuring the relative mass of locations with positive curvature versus those with negative curvature. Notions of average curvature have been used to derive improved convergence bounds for diffusion models with respect to $d_{\rm KL}$ \citep{chen2023improved, benton2023nearly}. Our paper shows that a different notion of average curvature also plays a role in the sensitivity analysis when the metric is the Wasserstein distance. Our perspective helps bridge the parallel theoretical analyses of diffusion models under $f$-divergence and Wasserstein distance.

More importantly, previous approaches disconnect denoising from diffusion when evaluating the success of score-based sampling. We show that the net benefit of the diffuse-then-denoise process matters: the contraction of the forward diffusion, offset by the possible backward expansion, results in a net benefit as long as there is enough negative curvature on average. This paradigm shift is enabled by a new notion of multi-scale complexity governing the effective curvature and ``localization'' effect of the diffuse-then-denoise process. This allows an analysis that extends beyond log-concavity assumptions.

% This new viewpoint generates a paradigm shift that discovers a new notion of multi-scale complexity governing the curvature and the localization effect of the diffuse-then-denoise process, extending beyond log-concavity.

\section{Denoising and Localization: Score and Curvature}
\label{sec:score-curvature}
\subsection{Score Function: Denoising and Optimal Transport}

The score function arises naturally in the Fokker-Planck PDE \eqref{eqn:fokker-planck}. For a valid probability density $\rho$, the vector field $\nabla \log \rho : X \rightarrow \R^d$ defines the score function.
\begin{definition}[Score Function]
	For a measure $\nu \in \cP_2^r(X)$, denote its density with respect to the Lebesgue measure as $p_{\nu}$ where $\nu = p_{\nu} \cdot \cL^{d}$. The score function is $x \mapsto \nabla \log p_{\nu}(x)$.
\end{definition}

It turns out, even for non-vanishing $\eta$, the score function $\nabla \log \rho_{t+\eta}$ induces an optimal plan to localize and denoise from $\mu_{t+\eta}$, defined in \eqref{eqn:forward-wass-descent}, back to $\mu_t$, in the sense of optimal transport (OT). In the $\eta \rightarrow 0$ case, the score induced by the OT map was studied in \citet{chen2024probability,song2020score,jordan1998variational}.
We first introduce the OT map.

\begin{definition}[OT Map, \citep{brenier1987decomposition}]
	\label{def:ot}
	For $\mu, \nu \in \cP^r_2(X)$,
	there exists a unique optimal transport map $\bt^{\mu}_{\nu} : X \rightarrow X$ that solves the Monge problem
	\begin{align*}
		\bt^{\mu}_{\nu} = \argmin_{\bt: \bt_\# \nu = \mu}  \left(  \int \| y - \bt(y) \|^2 \dd{\nu(y)} \right)^{1/2} \;.
	\end{align*}
	and attains the minimum of the Kantorovich problem
	\begin{align*}
		W_2(\mu, \nu) = \left(  \int \| \bi - \bt^{\mu}_{\nu} \|^2 \dd \nu \right)^{1/2} \;.
	\end{align*}
\end{definition}

\begin{proposition}[Score Function and Backward OT Map]
	\label{prop:score-ot-map}
	Consider the Wasserstein gradient descent as in \eqref{eqn:forward-wass-descent} with a lower-semicontinuous $\cG = \cF + \beta^{-1} \cE$. Assume that there exists $\eta_\star >0$, such that for all $\eta \in (0, \eta_\star)$, $\mu_{t+\eta}$ in \eqref{eqn:forward-wass-descent} admits a well-defined minimizer. Then for any $\eta \in (0, \eta_\star)$, the optimal transport map $\bt^{\mu_{t}}_{\mu_{t+\eta}}$ as in Definition~\ref{def:ot} takes the form,
	\begin{align*}
		\frac{1}{\eta} (\bt^{\mu_{t}}_{\mu_{t+\eta}} - \bi)(x)  = \nabla f(x) + \beta^{-1} \nabla \log p_{\mu_{t+\eta}}(x), ~\text{for $\mu_{t+\eta}$-a.e. $x \in \R^d$} \;.
	\end{align*}
\end{proposition}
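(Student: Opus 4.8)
The plan is to exploit the variational characterization \eqref{eqn:forward-wass-descent} of the forward Fokker--Planck step and the first-order optimality conditions of the minimization problem on Wasserstein space. First I would fix $\mu_t \in \cP_2^r(X)$ and write $\mu_{t+\eta}$ as the unique minimizer of $\nu \mapsto \tfrac{1}{2\eta} W_2^2(\mu_t, \nu) + \cF(\nu) + \beta^{-1}\cE(\nu)$. The key is that we know the minimizer explicitly enough to perturb it: take a smooth compactly supported vector field $\boldsymbol{\xi} \in C_c^\infty(X; X)$, let $\Phi_s = \bi + s\boldsymbol{\xi}$ be the associated flow for small $s$, and set $\nu_s := (\Phi_s)_\# \mu_{t+\eta}$. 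Since $\mu_{t+\eta}$ is the minimizer, the derivative $\tfrac{d}{ds}\big|_{s=0}$ of the functional evaluated along $\nu_s$ must vanish. Computing the three contributions gives the stationarity condition, which I expect to read, after integration by parts, as
\begin{align*}
\int \Big\langle \tfrac{1}{\eta}\big(\bi - \bt^{\mu_t}_{\mu_{t+\eta}}\big)(x) - \nabla f(x) - \beta^{-1}\nabla \log p_{\mu_{t+\eta}}(x),\ \boldsymbol{\xi}(x)\Big\rangle\, \dd\mu_{t+\eta}(x) = 0
\end{align*}
for all test fields $\boldsymbol{\xi}$; since $\boldsymbol{\xi}$ is arbitrary, the bracketed vector field vanishes $\mu_{t+\eta}$-a.e., which is exactly the claim after rearranging the sign.

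The three derivative computations are the technical heart. For the potential term $\cF(\nu_s) = \int f\, d\nu_s = \int f(\Phi_s(x))\, d\mu_{t+\eta}(x)$, differentiation under the integral gives $\int \langle \nabla f(x), \boldsymbol{\xi}(x)\rangle\, d\mu_{t+\eta}(x)$. For the entropy term, one uses the standard change-of-variables formula for the density of a pushforward, $p_{\nu_s}(\Phi_s(x)) = p_{\mu_{t+\eta}}(x)/\det D\Phi_s(x)$, so that $\cE(\nu_s) = \int \big(\log p_{\mu_{t+\eta}}(x) - \log\det D\Phi_s(x)\big)\, d\mu_{t+\eta}(x)$; since $\tfrac{d}{ds}\big|_{s=0}\log\det D\Phi_s = \nabla\cdot\boldsymbol{\xi}$, the derivative is $-\int (\nabla\cdot\boldsymbol{\xi})\, d\mu_{t+\eta} = \int \langle \nabla\log p_{\mu_{t+\eta}}, \boldsymbol{\xi}\rangle\, d\mu_{t+\eta}$ after integration by parts (using $p_{\mu_{t+\eta}} \in C^1$ and compact support of $\boldsymbol{\xi}$). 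For the Wasserstein term, I would invoke the first-variation formula for $\nu \mapsto \tfrac12 W_2^2(\mu_t,\nu)$: its Wasserstein gradient at $\nu$ is the vector field $\bi - \bt^{\mu_t}_{\nu}$ (Brenier's theorem identifies the optimal map, and the envelope/Kantorovich-duality argument shows the derivative along $(\Phi_s)_\#\nu$ is $\int \langle \bi - \bt^{\mu_t}_\nu, \boldsymbol{\xi}\rangle\, d\nu$). Assembling and dividing by $\eta$ yields the stationarity identity above.

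The main obstacle is making the first variation of the squared Wasserstein distance rigorous at fixed $\eta$ without circular reasoning, since this is precisely the nontrivial ingredient that goes beyond the $\eta\to 0$ infinitesimal heuristics in \cite{jordan1998variational,chen2024probability,song2020score}. I would handle it by using the Kantorovich dual potentials: writing $\tfrac12 W_2^2(\mu_t,\nu_s) = \int \varphi_s\, d\mu_t + \int \psi_s\, d\nu_s$ with $(\varphi_s,\psi_s)$ optimal Kac--Brenier potentials, the envelope theorem lets one ignore the $s$-dependence of the potentials at first order, leaving only $\tfrac{d}{ds}\big|_{s=0}\int \psi_0\, d\nu_s = \int \langle \nabla\psi_0, \boldsymbol{\xi}\rangle\, d\mu_{t+\eta}$, and Brenier's characterization gives $\nabla\psi_0(x) = x - \bt^{\mu_t}_{\mu_{t+\eta}}(x)$ a.e. A secondary point requiring care is regularity: one needs enough smoothness/integrability of $p_{\mu_{t+\eta}}$ (guaranteed here by $p_\nu \in C^1(X)$ in the standing assumptions and the smoothing effect of the Fokker--Planck flow, since $\nabla f \in C_c^\infty$) to justify the integration by parts and the differentiation under the integral sign; I would state these as the regularity hypotheses already in force and not belabor them.
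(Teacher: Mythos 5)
Your route is genuinely different from the paper's: the paper disposes of this in two lines by citing Lemma 10.1.2 of Ambrosio--Gigli--Savar\'e (the rescaled displacement $\tfrac{1}{\eta}(\bt^{\mu_t}_{\mu_{t+\eta}}-\bi)$ lies in the strong subdifferential $\partial\cG(\mu_{t+\eta})$) together with Lemma 10.4.1 (any element of $\partial(\cF+\beta^{-1}\cE)(\nu)$ equals $\nabla f+\beta^{-1}\nabla\log p_{\nu}$ a.e.), whereas you rederive the Euler--Lagrange condition from scratch by perturbing the minimizer along $(\bi+s\boldsymbol{\xi})_{\#}\mu_{t+\eta}$. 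Your three first-variation computations (potential term, entropy via the $\log\det$ change of variables and integration by parts, Wasserstein term via the Brenier/Kantorovich potential) are exactly the content of those two lemmas in the smooth, compactly supported setting, so the approach is sound and has the virtue of being self-contained; the paper's proof buys brevity by outsourcing precisely these steps.

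Two points need fixing or can be simplified. First, a sign inconsistency: your own three computations assemble into
\begin{align*}
\int \Big\langle \tfrac{1}{\eta}\big(\bi-\bt^{\mu_t}_{\mu_{t+\eta}}\big)(x) \;+\; \nabla f(x) \;+\; \beta^{-1}\nabla\log p_{\mu_{t+\eta}}(x),\ \boldsymbol{\xi}(x)\Big\rangle\, \dd\mu_{t+\eta}(x)=0
\qquad \forall\,\boldsymbol{\xi}\in C_c^\infty(X;X),
\end{align*}
with all three terms carrying the \emph{same} sign, and the vanishing of this bracket is the proposition verbatim. The identity you displayed, with minus signs on $\nabla f$ and $\beta^{-1}\nabla\log p_{\mu_{t+\eta}}$, would instead yield $\tfrac{1}{\eta}(\bt^{\mu_t}_{\mu_{t+\eta}}-\bi)=-\nabla f-\beta^{-1}\nabla\log p_{\mu_{t+\eta}}$, the negative of the claim; ``rearranging the sign'' cannot repair that, so the display must be corrected to match your computations. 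Second, the envelope-theorem step you single out as the main obstacle is avoidable: majorize the objective by $\tilde J(s):=\tfrac{1}{2\eta}\int\|x+s\boldsymbol{\xi}(x)-\bt^{\mu_t}_{\mu_{t+\eta}}(x)\|^2\dd\mu_{t+\eta}(x)+\cF(\nu_s)+\beta^{-1}\cE(\nu_s)$, using the suboptimal coupling $(\bi+s\boldsymbol{\xi},\,\bt^{\mu_t}_{\mu_{t+\eta}})_{\#}\mu_{t+\eta}$. Then $\tilde J(s)\ge J(\nu_s)\ge J(\mu_{t+\eta})=\tilde J(0)$, so the differentiable function $\tilde J$ has a minimum at $s=0$ and $\tilde J'(0)=0$, which is exactly the identity above --- no differentiability of $s\mapsto W_2^2(\mu_t,\nu_s)$ and no duality/envelope argument is needed (this one-sided comparison is essentially how the cited AGS lemma is proved).
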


Recall that discretized Langevin diffusion reads $\bX_{t+\eta} = \bX_{t} - \eta \nabla f(\bX_t) + \sqrt{2\beta^{-1}\eta} \bZ$. Another interpretation of the score function is that it induces a backward denoising step for the diffusion, namely, quantifying the barycenter $\E[\bX_t - \eta \nabla f(\bX_t) ~|~ \bX_{t+\eta} = y]$. In other words, score estimation can be cast as a prediction problem based on the diffusion process, where one aims to predict $\bX_t - \eta \nabla f(\bX_t)$ based on $\bX_{t+\eta}$ \citep{saremi2019neural}. 
\begin{proposition}[Score and Backward Denoising]
	\label{prop:score-denoising}
	Consider $\bY = \bX + \sigma \bZ$, where $\bX \sim \mu$ and $\bZ \sim \cN(0,I_d)$ and $\bX, \bZ$ are independent. Let $p_{\bY}$ denote the density function associated with the random variables $\bY$. Then
	\begin{align*}
		\nabla \log p_{\bY}(y) &= - \frac{1}{\sigma^2} \Big\{ y -  \E[\bX | \bY = y]  \Big\} \;. 
	\end{align*}
\end{proposition}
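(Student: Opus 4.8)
The plan is to compute the density $p_{\bY}$ explicitly as a Gaussian convolution and differentiate under the integral sign. Since $\bY = \bX + \sigma \bZ$ with $\bX \sim \mu$ and $\bZ \sim \cN(0, I_d)$ independent, the density of $\bY$ is
\begin{align*}
	p_{\bY}(y) = \int \varphi_\sigma(y - x) \, \dd\mu(x), \quad \text{where} \quad \varphi_\sigma(z) = (2\pi\sigma^2)^{-d/2} \exp\!\big( -\tfrac{\|z\|^2}{2\sigma^2} \big) \;.
\end{align*}
Here I am using that $\sigma\bZ \sim \cN(0, \sigma^2 I_d)$ has density $\varphi_\sigma$.

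First I would compute $\nabla_y \varphi_\sigma(y-x) = -\tfrac{1}{\sigma^2}(y - x)\, \varphi_\sigma(y-x)$, which is immediate from the Gaussian form. Then, differentiating the convolution integral with respect to $y$ (justified because $\varphi_\sigma$ is smooth with rapidly decaying derivatives, so dominated convergence applies uniformly on compact sets and $p_{\bY} \in C^1$), I get
\begin{align*}
	\nabla p_{\bY}(y) = \int \nabla_y \varphi_\sigma(y - x) \, \dd\mu(x) = -\frac{1}{\sigma^2} \int (y - x)\, \varphi_\sigma(y-x) \, \dd\mu(x) \;.
\end{align*}
Next I would divide by $p_{\bY}(y)$ to obtain the score:
\begin{align*}
	\nabla \log p_{\bY}(y) = \frac{\nabla p_{\bY}(y)}{p_{\bY}(y)} = -\frac{1}{\sigma^2} \left( y - \frac{\int x\, \varphi_\sigma(y-x)\, \dd\mu(x)}{\int \varphi_\sigma(y-x)\, \dd\mu(x)} \right) \;.
\end{align*}

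Finally, I would identify the ratio of integrals as the posterior mean $\E[\bX \mid \bY = y]$. This is just Bayes' rule: the joint density of $(\bX, \bY)$ at $(x,y)$ is $\varphi_\sigma(y-x)\,\dd\mu(x)$, so the conditional density of $\bX$ given $\bY = y$ is $\varphi_\sigma(y-x)\,\dd\mu(x) / p_{\bY}(y)$, and integrating $x$ against it gives exactly that ratio. Substituting yields $\nabla \log p_{\bY}(y) = -\tfrac{1}{\sigma^2}\{ y - \E[\bX \mid \bY = y]\}$, as claimed. I do not anticipate a genuine obstacle here; the only point requiring a word of care is the justification for differentiating under the integral sign — this is routine since $\mu \in \cP_2(X)$ has finite second moment and the Gaussian kernel and its gradient are bounded by integrable (indeed Schwartz-class in $y$, locally uniformly) functions, so $p_{\bY}$ is smooth and the interchange is valid.
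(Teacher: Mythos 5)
Your proposal is correct and follows essentially the same route as the paper's proof (Tweedie's formula): differentiate the Gaussian kernel inside the convolution/expectation, divide by $p_{\bY}(y)$, and recognize the resulting ratio as the conditional mean $\E[\bX \mid \bY = y]$ via Bayes' rule. The only cosmetic difference is that you write the mixture as an integral against $\dd\mu(x)$ while the paper phrases it through expectations over $\bX$, and you additionally spell out the differentiation-under-the-integral justification.
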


The score function is the optimal transport map to denoise the diffusion process, but how accurate is the denoising step? Conceptually, the denoising quality depends on the localization $\Cov[\bX_t - \eta \nabla f(\bX_t) ~|~ \bX_{t+\eta} = y]$.
As we shall see next, the localization quality of the score function as a backward denoising step depends on the curvature function, $x \mapsto \nabla^2 \log p_{\nu}(x)$.

\subsection{Curvature Function: Backward Localization}
The curvature function, namely, the derivative of the score function, governs whether the backward denoising step is localized. The following proposition describes the variability of the backward denoising, $\Cov[\bX_t - \eta \nabla f(\bX_t) ~|~ \bX_{t+\eta} = y]$. Intuitively, a large positive curvature of the log density function results in a large conditional covariance and in turn, makes the denoising process hard.
\begin{proposition}[Curvature and Localization]
	\label{prop:curvature-localization}
	Consider $\bY = \bX + \sigma \bZ$, where $\bX \sim \mu$ and $\bZ \sim \cN(0,I_d)$ and $\bX, \bZ$ are independent. Let $p_{\bY}$ denote the density function associated with the random variables $\bY$. Then
	\begin{align*}
		\nabla^2 \log p_{\bY} (y) &= -\frac{1}{\sigma^2} \Big\{ I_d - \Cov[ \tfrac{\bX}{\sigma} | \bY = y ] \Big\} \;.
	\end{align*}
\end{proposition}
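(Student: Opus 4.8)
### Proof Proposal

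The plan is to compute $\nabla^2 \log p_{\bY}$ directly by differentiating the identity from Proposition~\ref{prop:score-denoising}, which already gives $\nabla \log p_{\bY}(y) = -\sigma^{-2}\bigl(y - \E[\bX \mid \bY = y]\bigr)$. Taking one more gradient, the first term contributes $-\sigma^{-2} I_d$, so the whole problem reduces to showing that the Jacobian of the posterior mean map $y \mapsto \E[\bX \mid \bY = y]$ equals $\sigma^{-2}\Cov[\bX \mid \bY = y]$. This is the well-known Tweedie/Bayes covariance identity, and the cleanest route is to write the posterior density explicitly via the Gaussian likelihood.

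Concretely, I would start from $p_{\bY}(y) = \int \varphi_\sigma(y - x)\, \dd\mu(x)$, where $\varphi_\sigma$ is the $\cN(0,\sigma^2 I_d)$ density, so that $\nabla_y \varphi_\sigma(y-x) = -\sigma^{-2}(y-x)\varphi_\sigma(y-x)$ and $\nabla^2_y \varphi_\sigma(y-x) = \sigma^{-2}\bigl(\sigma^{-2}(y-x)(y-x)^\top - I_d\bigr)\varphi_\sigma(y-x)$. Differentiating under the integral sign twice (justified by the dominated convergence theorem together with the subgaussian tails of $\varphi_\sigma$ and the bounded-second-moment assumption $\mu \in \cP_2(X)$), I get expressions for $\nabla p_{\bY}$ and $\nabla^2 p_{\bY}$ as integrals against $\mu$. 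Then I apply the general formula $\nabla^2 \log p = \frac{\nabla^2 p}{p} - \frac{\nabla p \,(\nabla p)^\top}{p^2}$. Recognizing that $\frac{\varphi_\sigma(y-x)\,\dd\mu(x)}{p_{\bY}(y)}$ is exactly the posterior law of $\bX$ given $\bY = y$, the term $\frac{\nabla^2 p_{\bY}}{p_{\bY}}$ becomes $\sigma^{-2}\bigl(\sigma^{-2}\E[(y-\bX)(y-\bX)^\top \mid \bY = y] - I_d\bigr)$, and $\frac{\nabla p_{\bY}(\nabla p_{\bY})^\top}{p_{\bY}^2}$ becomes $\sigma^{-4}\E[y-\bX \mid \bY=y]\,\E[y-\bX \mid \bY=y]^\top$. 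Subtracting and using $\Cov[\bX \mid \bY=y] = \E[(y-\bX)(y-\bX)^\top \mid \bY=y] - \E[y-\bX\mid \bY=y]\E[y-\bX\mid \bY=y]^\top$ (the shift by the constant $y$ does not change the covariance) collapses everything to $\nabla^2 \log p_{\bY}(y) = -\sigma^{-2}\bigl(I_d - \sigma^{-2}\Cov[\bX \mid \bY=y]\bigr)$, which is the claim since $\sigma^{-2}\Cov[\bX\mid\bY=y] = \Cov[\sigma^{-1}\bX \mid \bY=y]$.

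The only real obstacle is the analytic justification for differentiating twice under the integral sign and for the smoothness of $p_{\bY}$ (needed so that $\nabla^2 \log p_{\bY}$ is well-defined pointwise). This is routine: the Gaussian kernel is $C^\infty$ with all derivatives bounded by polynomial-times-Gaussian factors, and $\int \|x\|^2 \dd\mu(x) < \infty$ supplies the integrable dominating functions on compact neighborhoods of any $y$; in fact $p_{\bY} \in C^\infty(\R^d)$ and is strictly positive, so the logarithm and its Hessian are well-defined everywhere. An alternative, even shorter, derivation would differentiate the Proposition~\ref{prop:score-denoising} identity directly: $\nabla^2 \log p_{\bY}(y) = -\sigma^{-2}\bigl(I_d - \nabla_y \E[\bX\mid\bY=y]\bigr)$, and then establish $\nabla_y \E[\bX \mid \bY=y] = \sigma^{-2}\Cov[\bX\mid\bY=y]$ by differentiating $\E[\bX\mid\bY=y] = \int x\, \frac{\varphi_\sigma(y-x)\dd\mu(x)}{p_{\bY}(y)}$ and again recognizing posterior moments — this isolates exactly the Bayes covariance identity and avoids recomputing the first-derivative term. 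I would present whichever is shorter, but both reduce to the same elementary Gaussian-kernel differentiation.
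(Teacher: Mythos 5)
Your proposal is correct and follows essentially the same route as the paper's proof: differentiate the Gaussian kernel $\varphi_\sigma(y-x)$ twice under the integral against $\mu$ (the paper phrases this as differentiating the conditional density $p_{\bY}(y\mid \bX)$ and taking expectations), apply $\nabla^2 \log p = \nabla^2 p/p - (\nabla p/p)(\nabla p/p)^\top$, and recognize the resulting posterior moments as $\Cov[y-\bX\mid \bY=y]=\Cov[\bX\mid\bY=y]$. The added justification for interchanging differentiation and integration is a harmless refinement the paper leaves implicit.
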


The validity of the diffusion-then-denoise process depends on the spectrum of the conditional covariance function. Motivated by this, we shall define a multi-scale complexity measure in Section~\ref{sec:multi-scale}. Before concluding this section, we show that the average-case curvature is always negative, although the worst-case curvature is positive for non-log-concave measures. In other words, average curvature $\tr[\nabla^2 \log p_{\nu}(y)]$ weighed by $p_{\nu}(y)$, is strictly negative for any $\nu$. This will be useful later.

\begin{proposition}[Curvature and Score]
	\label{prop:average-curvature}
	\begin{align*}
		\int \tr [\nabla^2 \log p_{\nu} (x) ] \dd \nu(x) = - \int  \| \nabla \log p_{\nu}(x) \|^2 \dd \nu(x) \;.
	\end{align*}
\end{proposition}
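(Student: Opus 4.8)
The plan is to start from the identity $\nabla^2 \log p_\nu = \nabla^2 p_\nu / p_\nu - (\nabla p_\nu / p_\nu)(\nabla p_\nu / p_\nu)^\top$, take the trace, and integrate against $\nu = p_\nu\,\cL^d$. The second term contributes exactly $-\int \|\nabla\log p_\nu\|^2\,\dd\nu$, which is the right-hand side, so the whole content of the proposition is that the first term integrates to zero:
\begin{align*}
	\int \tr\Big[ \frac{\nabla^2 p_\nu(x)}{p_\nu(x)} \Big] p_\nu(x)\,\dd x = \int \Delta p_\nu(x)\,\dd x = 0 \;.
\end{align*}
So the task reduces to justifying that the Laplacian of the density integrates to zero over $X\subseteq\R^d$.

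The main step is this vanishing-integral claim, and the natural route is the divergence theorem: $\int \Delta p_\nu = \int \nabla\cdot(\nabla p_\nu)$, which is a boundary term that vanishes provided $\nabla p_\nu$ decays sufficiently fast at infinity (or $X$ is such that the flux through $\partial X$ is zero). First I would make the integrability/decay hypotheses explicit — the cleanest setting is $p_\nu \in C^2$ with $p_\nu$, $\nabla p_\nu$ vanishing at the boundary/infinity and with $\|\nabla\log p_\nu\|^2 \in L^1(\nu)$ so that the right-hand side is finite; in the paper's running setup the relevant densities are smoothings $p_{\bY_\sr}$ with $\bX$ compactly supported (or the Langevin marginals $\rho_t$), which are smooth with Gaussian-type tails, so the decay is automatic. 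Then an exhaustion argument on balls $B_R$, applying the divergence theorem on each and sending $R\to\infty$, kills the surface integral $\int_{\partial B_R} \partial_n p_\nu$.

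Alternatively — and perhaps more in the spirit of the paper — one can derive the identity via integration by parts directly at the level of $\nabla\log p_\nu$: for each coordinate $i$,
\begin{align*}
	\int \partial_{ii}\log p_\nu \, p_\nu\,\dd x = -\int \partial_i \log p_\nu\, \partial_i p_\nu\,\dd x = -\int (\partial_i\log p_\nu)^2 p_\nu\,\dd x \;,
\end{align*}
using $\partial_i p_\nu = (\partial_i\log p_\nu)\,p_\nu$ and discarding the boundary term; summing over $i$ gives the claim. This is really the same computation, just organized so the boundary term to be discarded is $\int_{\partial B_R} (\partial_i\log p_\nu)\,p_\nu\,n_i$ rather than $\int_{\partial B_R}\partial_n p_\nu$. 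I would present this version as the main line of argument.

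The only genuine obstacle is the boundary/tail term: one must ensure the relevant decay so the integration by parts is legitimate and the right-hand side is well-defined (a priori $\int\|\nabla\log p_\nu\|^2\dd\nu$ could be infinite, in which case the statement should be read as $+\infty = +\infty$, or one restricts to $\nu$ for which it is finite). I expect the paper handles this by invoking its standing smoothness assumptions ($p_\nu\in C^1$, and in the applications $C^\infty$ with rapidly decaying tails), so in the write-up I would state the mild regularity condition under which the boundary term vanishes, note it holds in all cases of interest here, and otherwise the computation is the two-line integration by parts above.
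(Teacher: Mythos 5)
Your main line of argument---coordinate-wise integration by parts using $\partial_i p_\nu = (\partial_i \log p_\nu)\,p_\nu$ and summing over $i$---is exactly the paper's proof, which performs the same two-step integration by parts and discards the boundary term under its standing smoothness assumptions. Your additional care about the decay hypotheses needed to kill the boundary term (and the alternative route via $\int \Delta p_\nu = 0$) is a reasonable elaboration but not a different argument.
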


An immediate implication is that the average localization radius is bounded.
\begin{align*}
	\E \| \Cov[ \tfrac{\bX}{\sigma} | \bY ]  \|_{\op} \leq \E \tr\big[ \Cov[ \tfrac{\bX}{\sigma} | \bY ] \big] \leq d \;.
\end{align*}
As we shall see in Definition~\ref{def:snr-multi-scale} in Section~\ref{sec:survival-and-tail}, the tail behavior of $\| \Cov[ \tfrac{\bX}{\sigma} | \bY ]  \|_{\op}$ governs the complexity of the diffuse-then-denoise process.

We conclude this section by noting that the score function $\nabla \log p(y)$, the curvature function $\nabla^2 \log p(y)$, and higher-order generalizations play a central role in characterizing the quality of distributional denoising. In particular, \cite{liang2025distributional_i} analyzes the universality of distributional denoisers constructed from score and curvature functions, while \cite{liang2025distributional_ii} proposes a hierarchy of denoisers based on higher-order scores that, in the limit, converges to the optimal map transporting $P_{\bY}$ to $P_{\bX}$.

\section{Forward Contraction and Backward Expansion}
\label{sec:forward-backward}
In this section, we study how the curvature governs the effectiveness of the diffuse-then-denoise process in diffusion models. 

\subsection{Contraction of Forward Chain}

Consider two forward diffusion chains $\mu_t, \nu_t$ at a given time $t$, with an infinitesimal time increment $\eta$. Recall the Fokker-Planck PDE in \eqref{eqn:fokker-planck}; the measures are implemented by
\begin{align}
	\label{eqn:forward-one-step}
	\begin{split}
		\mu_{t+\eta} := \bbf^{\mu, \eta}_\# \mu_t, ~\text{where}~ \bbf^{\mu, \eta} = \bi - \eta (\nabla f + \beta^{-1} \nabla \log p_{\mu_t}) \;,\\
	\nu_{t+\eta} := \bbf^{\nu, \eta}_{\#} \nu_t, ~\text{where}~ \bbf^{\nu, \eta} = \bi - \eta (\nabla f + \beta^{-1} \nabla \log p_{\nu_t}) \;.
	\end{split}
\end{align}
Again, these are the Euler discretization of the Wasserstein gradient flow as in \eqref{eqn:forward-wass-descent}, with functional $\cG(\nu)= \cF(\nu) + \beta^{-1} \cE(\nu)$.

\begin{theorem}[Forward Contraction]
	\label{thm:forward-contraction}
	Assume for some $\lambda \in \R_+$, $\nabla^2 f (x) \succeq \lambda \cdot I_d,~ \forall x \in X$. For any $\mu_t \neq \nu_t \in \cP^r_2(X)$ and any $\beta \in \R_+$, the one-step forward process \eqref{eqn:forward-one-step} satisfies 
	\begin{align*}
		\limsup_{\eta \rightarrow 0} \frac{1}{\eta}\frac{W_2^2(\mu_{t+\eta}, \nu_{t+\eta}) - W_2^2(\mu_{t}, \nu_{t})}{W_2^2(\mu_{t}, \nu_{t})} \leq - 2\lambda \;.
	\end{align*}
\end{theorem}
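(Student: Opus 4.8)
The plan is to differentiate the squared Wasserstein distance $W_2^2(\mu_t,\nu_t)$ along the simultaneous forward flows and extract a contraction estimate from the convexity of $f$ together with the displacement convexity of the entropy functional. First I would set up the first-order geometry: let $\bt_t := \bt^{\nu_t}_{\mu_t}$ denote the optimal transport map from $\mu_t$ to $\nu_t$, so that $W_2^2(\mu_t,\nu_t) = \int \|x - \bt_t(x)\|^2 \dd\mu_t(x)$. Using the explicit Euler maps from \eqref{eqn:forward-one-step}, namely $\bbf^{\mu,\eta} = \bi - \eta(\nabla f + \beta^{-1}\nabla\log p_{\mu_t})$ and $\bbf^{\nu,\eta} = \bi - \eta(\nabla f + \beta^{-1}\nabla\log p_{\nu_t})$, I would push forward the coupling: the plan $(\bbf^{\mu,\eta}, \bbf^{\nu,\eta}\circ\bt_t)_\#\mu_t$ is admissible between $\mu_{t+\eta}$ and $\nu_{t+\eta}$, hence gives an upper bound
\begin{align*}
	W_2^2(\mu_{t+\eta},\nu_{t+\eta}) \leq \int \big\| \bbf^{\mu,\eta}(x) - \bbf^{\nu,\eta}(\bt_t(x)) \big\|^2 \dd\mu_t(x) \;.
\end{align*}
Expanding in $\eta$ and dividing by $\eta$, the $O(1)$ term is $W_2^2(\mu_t,\nu_t)$ and the $O(\eta)$ coefficient is $-2\int \langle x - \bt_t(x),\, (\nabla f(x) + \beta^{-1}\nabla\log p_{\mu_t}(x)) - (\nabla f(\bt_t(x)) + \beta^{-1}\nabla\log p_{\nu_t}(\bt_t(x)))\rangle \dd\mu_t(x)$, so that the $\limsup$ in the statement is bounded above by this quantity divided by $W_2^2(\mu_t,\nu_t)$.

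Next I would bound the two pieces separately. For the potential part, $\lambda$-convexity of $f$ gives the pointwise inequality $\langle x - \bt_t(x),\, \nabla f(x) - \nabla f(\bt_t(x))\rangle \geq \lambda \|x - \bt_t(x)\|^2$, which after integration contributes $-2\lambda\, W_2^2(\mu_t,\nu_t)$ to the numerator — exactly the claimed rate. For the entropy part I need $\int \langle x - \bt_t(x),\, \nabla\log p_{\mu_t}(x) - \nabla\log p_{\nu_t}(\bt_t(x))\rangle \dd\mu_t(x) \geq 0$; this is precisely the statement that the relative entropy (equivalently, the negative entropy functional $\cE$) is displacement convex along the $W_2$-geodesic joining $\mu_t$ and $\nu_t$, a classical fact of McCann. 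Concretely, it follows because $t\mapsto \cE(\rho_s)$ along the geodesic $\rho_s$ from $\mu_t$ to $\nu_t$ has nonnegative second derivative, and the quantity above is (up to sign) the difference of the first derivatives of $\cE$ at the two endpoints, which is nonnegative by convexity. Combining, the numerator is $\leq -2\lambda\, W_2^2(\mu_t,\nu_t)$, and dividing through yields the bound $-2\lambda$.

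The main obstacle I anticipate is making the $\eta\to 0$ expansion rigorous: the maps $\nabla\log p_{\mu_t}$ need not be globally bounded or Lipschitz on all of $X$, so controlling the remainder term $O(\eta^2)$ in the expansion of $\|\bbf^{\mu,\eta}(x) - \bbf^{\nu,\eta}(\bt_t(x))\|^2$ uniformly enough to pass the $\limsup$ inside requires care — one would lean on $\nabla f \in C_c^\infty$ and regularity/integrability of the score along the flow (e.g. finiteness of Fisher information, standard for solutions of the Fokker–Planck equation \eqref{eqn:fokker-planck} at positive times), or alternatively argue directly at the level of the PDE via the Benamou–Brenier / Otto-calculus formula $\frac{1}{2}\frac{\dd}{\dd t}W_2^2(\mu_t,\nu_t) = -\int \langle \bt_t - \bi,\, \nabla f + \beta^{-1}\nabla\log p_{\mu_t}\rangle\dd\mu_t + (\text{symmetric term})$. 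A secondary subtlety is that the displacement-convexity inequality for $\cE$ is usually stated for the relative entropy against Lebesgue measure on convex $X$; since $X\subseteq\R^d$ need not be convex, one should either assume convexity of $X$ or note that the smooth compactly supported drift keeps the dynamics in the interior where the argument applies.
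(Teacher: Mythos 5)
Your proposal is correct and follows essentially the same route as the paper's proof: the same admissible coupling $(\bbf^{\mu,\eta},\,\bbf^{\nu,\eta}\circ\bt)_{\#}\mu_t$, the $O(\eta)$ expansion, strong monotonicity of $\nabla f$ for the $-2\lambda$ term, and the score-monotonicity inequality $\int \langle \bi-\bt,\ \nabla\log p_{\mu_t}-\nabla\log p_{\nu_t}(\bt)\rangle\,\dd\mu_t \geq 0$, which the paper obtains by adding the two above-tangent (subdifferential) inequalities for $\cE$ — exactly the displacement-convexity fact you invoke. Your remarks on the $O(\eta^2)$ remainder and regularity of the score are a reasonable refinement of a point the paper also leaves implicit.
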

\begin{remark}
	\rm
	The above Theorem~\ref{thm:forward-contraction} shows that as $\eta \rightarrow 0$, for any $\mu_t, \nu_t$
	\begin{align*}
		\frac{W_2(\mu_{t+\eta}, \nu_{t+\eta})}{W_2(\mu_{t}, \nu_{t})} \leq 1 - \eta \lambda \;.
	\end{align*}
	With the choice $\nu_t = \nu_\star$, the invariant measure, we have $\nu_{t+\eta} = \nu_\star$, and thus
	\begin{align*}
		\frac{W_2(\mu_{t+\eta}, \nu_{\star})}{W_2(\mu_{t}, \nu_{\star})} \leq 1 - \eta \lambda \;.
	\end{align*}
	Conceptually, the one-step contraction rate for the forward diffusion process is governed by $\lambda$. This contraction rate is sharp and holds equality for some $\mu, \nu$'s, for example, take $\mu, \nu$ as Dirac measures.

\end{remark}

\subsection{Expansion of Backward Chain}
Recall the backward OT map as in Proposition~\ref{prop:score-ot-map}
\begin{align}
	\label{eqn:backward-one-step}
	\bb^{\mu, \eta} = \bi + \eta ( \nabla f +  \beta^{-1} \nabla \log p_{\mu_t} ) \;.
\end{align}
This backward OT map encapsulates the plan to revert the chain $\bb^{\mu, \eta}_\# \mu_t \rightarrow \mu_{t-\eta}$, which we will formally prove in Theorem~\ref{thm:forward-then-backward-higher-order} in next section. In this section, we conduct a sensitivity analysis on the backward map: consider a measure $\nu$ that is a small perturbation to $\mu_t$, will $\bb^{\mu, \eta}_{\#} \nu$ stay close to $\bb^{\mu, \eta}_{\#} \mu_t$? This question concerns the expansion rate of the backward chain. 

We first derive a warm-up result in the simple case, employing a notion of worst-case curvature. Later, we will generalize the result in Section~\ref{sec:multi-scale}, elucidating how an average-case curvature gives rise to a multi-resolution complexity measure that governs the effectiveness of the diffuse-then-denoise process.

\begin{theorem}[Backward Expansion]
	\label{thm:backward-expansion}
	Consider $\nabla^2 f(x), \nabla^2 \log p_{\mu_t}(x)$ with bounded eigenvalues over $x \in X$. Assume for some $\kappa \in \R_+$, $\nabla^2 f (x) \preceq \kappa \cdot I_d,~ \forall x \in X$. Assume $\nabla \log p_{\mu_t} \in C_c^\infty(X, X)$ and for some $\zeta \in \R$
	\begin{align*}
		\nabla^2 \log p_{\mu_t}(x) \preceq - \zeta \cdot I_d, ~\forall x \in X \;.
	\end{align*}
		Then for any $\beta \in \R_+$, the one-step backward map \eqref{eqn:backward-one-step} satisfies 
	\begin{align*}
		\limsup_{\eta \rightarrow 0} \sup_{\nu \in \cP^r_2(X)} \frac{1}{\eta} \frac{W_2^2(\bb^{\mu, \eta}_{\#} \mu_t, \bb^{\mu, \eta}_{\#} \nu) - W_2^2(\mu_t, \nu) }{W_2^2(\mu_t, \nu)  } \leq
		 2( \kappa  -  \beta^{-1} \zeta )  
		 \;.
	\end{align*}
\end{theorem}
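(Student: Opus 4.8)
## Proof Proposal for Theorem~\ref{thm:backward-expansion}

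The plan is to linearize the backward map around the identity and track how $W_2^2$ changes to first order in $\eta$. Fix $\mu_t$ and an arbitrary perturbation $\nu \in \cP^r_2(X)$. Let $\bt = \bt^{\nu}_{\mu_t}$ be the optimal transport map pushing $\mu_t$ to $\nu$, so that $W_2^2(\mu_t,\nu) = \int \|\bi - \bt\|^2 \dd\mu_t$. The key point is that $\bb^{\mu,\eta} = \bi + \eta\, \bbf$ with $\bbf := \nabla f + \beta^{-1}\nabla\log p_{\mu_t}$ a fixed smooth (compactly supported gradient) field independent of $\nu$. Then $(\bb^{\mu,\eta})_\#\mu_t$ and $(\bb^{\mu,\eta})_\#\nu = (\bb^{\mu,\eta}\circ\bt)_\#\mu_t$ are both pushforwards of $\mu_t$, so the coupling $x \mapsto (\bb^{\mu,\eta}(x), \bb^{\mu,\eta}(\bt(x)))$ is admissible and gives the upper bound

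\begin{align*}
	W_2^2\big((\bb^{\mu,\eta})_\#\mu_t, (\bb^{\mu,\eta})_\#\nu\big) \leq \int \big\| \bb^{\mu,\eta}(x) - \bb^{\mu,\eta}(\bt(x)) \big\|^2 \dd\mu_t(x)\;.
\end{align*}

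Next I would expand the integrand. Writing $\bb^{\mu,\eta}(x) - \bb^{\mu,\eta}(\bt(x)) = (x - \bt(x)) + \eta(\bbf(x) - \bbf(\bt(x)))$, the squared norm is $\|x-\bt(x)\|^2 + 2\eta \langle x - \bt(x), \bbf(x)-\bbf(\bt(x))\rangle + \eta^2\|\bbf(x)-\bbf(\bt(x))\|^2$. Since $\bbf$ is smooth with compact support, $\bbf(x) - \bbf(\bt(x)) = \nabla\bbf(x)(x - \bt(x)) + O(\|x-\bt(x)\|^2)$, but more robustly I would use the exact identity $\bbf(x) - \bbf(\bt(x)) = \big(\int_0^1 \nabla\bbf(\bt(x) + s(x-\bt(x)))\,\dd s\big)(x - \bt(x))$. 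Subtracting $W_2^2(\mu_t,\nu) = \int\|x-\bt(x)\|^2\dd\mu_t$, dividing by $\eta$, and letting $\eta\to0$ kills the $\eta^2$ term and the averaging integral collapses to $\nabla\bbf$ evaluated along the diagonal contributions; the dominating term becomes $2\int \langle x-\bt(x), \nabla\bbf(x)(x-\bt(x))\rangle\dd\mu_t$ in the limit. Here I would need a dominated-convergence / uniform-continuity argument using the compact support and smoothness of $\bbf$ (hence bounded $\nabla\bbf$ and bounded second derivatives) together with $W_2^2(\mu_t,\nu) < \infty$ to justify passing the limit inside; this is the main technical obstacle, since the supremum over all $\nu$ must be controlled uniformly, but it reduces to the pointwise spectral bound below.

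Finally I would bound the quadratic form pointwise. We have $\nabla\bbf(x) = \nabla^2 f(x) + \beta^{-1}\nabla^2\log p_{\mu_t}(x)$, which is symmetric, and by hypothesis $\nabla^2 f(x) \preceq \kappa I_d$ and $\nabla^2\log p_{\mu_t}(x) \preceq -\zeta I_d$, so $\nabla\bbf(x) \preceq (\kappa - \beta^{-1}\zeta) I_d$ for every $x$. Therefore $\langle v, \nabla\bbf(x) v\rangle \leq (\kappa - \beta^{-1}\zeta)\|v\|^2$ for all $v$, giving $2\int\langle x-\bt(x),\nabla\bbf(x)(x-\bt(x))\rangle\dd\mu_t \leq 2(\kappa-\beta^{-1}\zeta)\int\|x-\bt(x)\|^2\dd\mu_t = 2(\kappa-\beta^{-1}\zeta)\,W_2^2(\mu_t,\nu)$. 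Dividing through by $W_2^2(\mu_t,\nu)$ yields the claimed bound $2(\kappa - \beta^{-1}\zeta)$, uniformly in $\nu$, which is what makes the $\sup_\nu$ harmless — the estimate was $\nu$-free from the start. The only care needed is that the coupling used is merely admissible (not necessarily optimal) for the pushed-forward measures, which is fine since we want an upper bound on $W_2^2$ of the images.
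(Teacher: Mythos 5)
Your overall strategy is the same as the paper's: couple $\bb^{\mu,\eta}_{\#}\mu_t$ and $\bb^{\mu,\eta}_{\#}\nu$ through $x \mapsto \big(\bb^{\mu,\eta}(x), \bb^{\mu,\eta}(\bt(x))\big)$ with $\bt$ the OT map pushing $\mu_t$ to $\nu$ (legitimate since $\mu_t\in\cP_2^r(X)$), and reduce everything to the Loewner bound $\nabla^2 f(x) + \beta^{-1}\nabla^2\log p_{\mu_t}(x) \preceq (\kappa-\beta^{-1}\zeta) I_d$. The difference is in the middle: the paper introduces the auxiliary function $g_{\boldsymbol{\xi}}(\epsilon)$ and a Gronwall-type differential inequality to obtain $W_2^2(\bb^{\mu,\eta}_{\#}\mu_t,\bb^{\mu,\eta}_{\#}\nu)\leq \sup_x\|\nabla\bb^{\mu,\eta}(x)\|_{\op}^2\,W_2^2(\mu_t,\nu)$ and then bounds the Lipschitz constant by $1+\eta(\kappa-\beta^{-1}\zeta)$ for small $\eta$, whereas you expand the squared norm directly in $\eta$. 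Your route is more elementary, is automatically uniform in $\nu$, and avoids the small-$\eta$ positivity caveat needed to identify the operator norm with the largest eigenvalue.

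However, one step is wrong as stated: you claim that as $\eta\to 0$ the segment average $\int_0^1 \big(\nabla^2 f+\beta^{-1}\nabla^2\log p_{\mu_t}\big)\big(\bt(x)+s(x-\bt(x))\big)\dd s$ collapses to its value at $x$, so that the first-order term becomes $2\int\langle x-\bt(x),\,(\nabla^2 f(x)+\beta^{-1}\nabla^2\log p_{\mu_t}(x))(x-\bt(x))\rangle\dd\mu_t$. That limit is not available: only $\eta$ tends to zero, while the displacement $x-\bt(x)$ is fixed by $\nu$, and the theorem takes the supremum over all $\nu\in\cP_2^r(X)$, so the segment from $\bt(x)$ to $x$ does not shrink. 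The misstep is inessential, though: the hypothesis gives the Loewner bound at every point $z$, hence also for the segment-averaged matrix, so pointwise $\langle x-\bt(x),\,$drift$(x)-$drift$(\bt(x))\rangle \leq (\kappa-\beta^{-1}\zeta)\|x-\bt(x)\|^2$ (a one-sided Lipschitz estimate), and the quadratic term is at most $\eta^2 L^2\|x-\bt(x)\|^2$ with $L$ the finite Lipschitz constant of the drift guaranteed by the $C_c^\infty$ assumptions. Integrating against $\mu_t$, dividing by $\eta\,W_2^2(\mu_t,\nu)$, and letting $\eta\to 0$ yields the claimed bound uniformly in $\nu$, with no dominated-convergence argument needed at all. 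With that one correction your proof is complete.
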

\begin{remark}
	\rm
	We emphasize that this theorem operates even when $\zeta < 0$, namely when $\mu_t$ is non-log-concave.
	The above theorem shows that, for any $\nu \in \cP^r_2(X)$, as $\eta \rightarrow 0$
	\begin{align*}
		\frac{W_2(\bb^{\mu, \eta}_{\#} \mu_t, \bb^{\mu, \eta}_{\#} \nu) }{W_2(\mu_t, \nu)  } \leq 1 + \eta (\kappa - \beta^{-1} \zeta) \;.
	\end{align*}
	Conceptually, the one-step expansion rate for the backward OT map is governed by $\kappa - \beta^{-1} \zeta$. We call this an expansion because it is positive when $\beta$ is large enough, regardless of the sign of $\zeta$.
\end{remark}
This backward expansion is inevitable, even for certain log-concave $\mu_t$. As a simple corollary of Theorem~\ref{thm:backward-expansion}, we show the expansion upper bound is tight.
\begin{corollary}[Lower Bound]
	\label{cor:backward-expansion-lower-bound}
	Assume for some $\kappa, \zeta \in \R_+$,
	$$\nabla^2 f (x)  = \kappa \cdot I_d, ~\nabla^2 \log p_{\mu_t}(x) = -\zeta \cdot I_d \;.$$
	Then for any $\beta > \zeta/\kappa$, the one-step backward map \eqref{eqn:backward-one-step} satisfies for all $\nu$
	\begin{align*}
		\lim_{\eta \rightarrow 0} \frac{1}{\eta} \frac{W_2^2(\bb^{\mu, \eta}_{\#} \mu_t, \bb^{\mu, \eta}_{\#} \nu) - W_2^2(\mu_t, \nu) }{W_2^2(\mu_t, \nu)  } = 
		 2( \kappa  -  \beta^{-1} \zeta )  > 0
		 \;.
	\end{align*}
\end{corollary}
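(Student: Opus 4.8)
The plan is to show that under the stated hypotheses the inequality in Theorem~\ref{thm:backward-expansion} is attained with equality for \emph{every} $\nu \in \cP_2^r(X)$, not merely in the supremum. The key structural fact is that when $\nabla^2 f = \kappa I_d$ and $\nabla \log p_{\mu_t} = -\zeta\,\bi$ (meaning $\nabla \log p_{\mu_t}(x) = -\zeta x$, i.e. $\mu_t$ is a centered Gaussian with covariance $\zeta^{-1} I_d$, up to translation), the backward map becomes \emph{affine}: from \eqref{eqn:backward-one-step},
\begin{align*}
	\bb^{\mu,\eta}(x) = x + \eta\big(\kappa x - \beta^{-1}\zeta x\big) = \big(1 + \eta(\kappa - \beta^{-1}\zeta)\big) x \;.
\end{align*}
An affine map $x \mapsto c_\eta x$ with $c_\eta = 1 + \eta(\kappa - \beta^{-1}\zeta) > 0$ scales all pairwise distances exactly by $c_\eta$, hence for any coupling and in particular for the optimal one, $W_2(\bb^{\mu,\eta}_\# \mu_t, \bb^{\mu,\eta}_\# \nu) = c_\eta \, W_2(\mu_t, \nu)$. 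This is because the pushforward of the optimal plan $\pi$ for $(\mu_t,\nu)$ under $(x,y)\mapsto(c_\eta x, c_\eta y)$ is a valid coupling of the pushforward measures with cost $c_\eta^2 \int \|x-y\|^2 d\pi$, and conversely optimality is preserved since the scaling is a bijection — so no strict inequality can sneak in either direction.

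From here the computation is immediate:
\begin{align*}
	\frac{W_2^2(\bb^{\mu,\eta}_\# \mu_t, \bb^{\mu,\eta}_\# \nu) - W_2^2(\mu_t,\nu)}{W_2^2(\mu_t,\nu)} = c_\eta^2 - 1 = 2\eta(\kappa - \beta^{-1}\zeta) + \eta^2(\kappa-\beta^{-1}\zeta)^2 \;,
\end{align*}
so dividing by $\eta$ and letting $\eta \to 0$ yields the limit $2(\kappa - \beta^{-1}\zeta)$ exactly, and this quantity is strictly positive precisely when $\beta > \zeta/\kappa$. Note the limit here is a genuine limit, not merely a $\limsup$, and it is uniform in (indeed independent of) $\nu$, which is what the statement asserts.

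The only point requiring a little care — and the place I expect the bulk of the (short) argument to go — is verifying that $\mu_t$ with $\nabla \log p_{\mu_t}(x) = -\zeta x$ is actually a legitimate member of $\cP_2^r(X)$ for which Proposition~\ref{prop:score-ot-map} applies, i.e. that such a $\mu_t$ arises as a diffused measure $\mu_{k\eta}$ along the forward chain \eqref{eqn:forward-one-step}, and that the backward map \eqref{eqn:backward-one-step} is indeed the OT map from $\bb^{\mu,\eta}_\#\mu_t$ to $\mu_{t-\eta}$; this is just the Gaussian fixed-family computation (a Gaussian stays Gaussian under the OU forward step, consistent with the running example in the introduction with $s^2 = \zeta^{-1}$). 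I would also remark that one should interpret the hypothesis ``$\nabla^2 f = \kappa I_d$'' on all of $X$ as, say, $f(x) = \tfrac{\kappa}{2}\|x\|^2$, so that the regularity assumption $\nabla \log p_{\mu_t} \in C_c^\infty$ invoked in Theorem~\ref{thm:backward-expansion} is understood in the appropriate limiting/compactly-supported-approximation sense already used there; since the present corollary gives an exact affine computation, it is cleanest to simply redo the one-line distance identity directly rather than quote Theorem~\ref{thm:backward-expansion}, and then observe consistency with the upper bound.
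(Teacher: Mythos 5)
Your proposal is correct, and it rests on the same structural observation as the paper's proof: under the stated hypotheses the backward map is the exact dilation $\bb^{\mu,\eta}=c_\eta\,\bi$ with $c_\eta=1+\eta(\kappa-\beta^{-1}\zeta)$. The execution differs in a worthwhile way, though. The paper does not prove the exact scaling identity; it obtains the lower bound by applying the Lipschitz-type inequality \eqref{eqn:sup-operator} (from the proof of Theorem~\ref{thm:backward-expansion}) to the inverse map $(\bb^{\mu,\eta})^{-1}=c_\eta^{-1}\bi$, and then quotes Theorem~\ref{thm:backward-expansion} itself for the matching upper bound, so the limit emerges as a two-sided sandwich. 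You instead prove directly that a dilation scales $W_2$ exactly, $W_2(\bb^{\mu,\eta}_{\#}\mu_t,\bb^{\mu,\eta}_{\#}\nu)=c_\eta\,W_2(\mu_t,\nu)$, by pushing the optimal coupling through $(x,y)\mapsto(c_\eta x,c_\eta y)$ and using bijectivity for the reverse inequality; this gives the ratio exactly for every $\eta>0$, not only in the $\eta\to 0$ limit, and it is self-contained. A side benefit of your route is that it avoids invoking Theorem~\ref{thm:backward-expansion}, whose formal hypothesis $\nabla\log p_{\mu_t}\in C_c^\infty(X,X)$ is not literally satisfied by the Gaussian score $-\zeta x$ implicit in the corollary (a regularity mismatch the paper's proof glosses over); your closing remarks on interpreting the hypotheses and on $\mu_t$ being a legitimate Gaussian member of $\cP_2^r(X)$ address exactly this point. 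Both proofs yield the same conclusion, including strict positivity for $\beta>\zeta/\kappa$; yours is arguably the cleaner self-contained argument, while the paper's emphasizes that the general upper bound of Theorem~\ref{thm:backward-expansion} is attained.
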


\subsection{Diffuse-then-Denoise: One-Step Improvement and Chaining}

Consider the special case of an Ornstein-Uhlenbeck process where $f(x) = \| x\|^2/2$.
Take any two chains $\mu_t$ and $\nu_t$, as $\eta \rightarrow 0$, then Theorem~\ref{thm:forward-contraction} claims the forward diffusion satisfies contraction
\begin{align*}
	\frac{W_2(\bbf^{\mu, \eta}_\# \mu_t, \bbf^{\nu, \eta}_\# \nu_t)}{W_2(\mu_{t}, \nu_{t})} = \frac{W_2(\mu_{t+\eta}, \nu_{t+\eta})}{W_2(\mu_{t}, \nu_{t})} \leq  1-\eta + O(\eta^2) \;.
\end{align*}
% where the forward chain implements $\bt^{\mu} = \bi - \eta (\nabla f + \beta^{-1} \nabla \log p_{\mu_{t+ \eta/2}})$ (look ahead gradient)
	
The backward denoising in Theorem~\ref{thm:backward-expansion} satisfies an expansion at most
\begin{align*}
	\frac{W_2(\bb^{\mu, \eta}_\# \bbf^{\mu, \eta}_\# \mu_t, \bb^{\mu, \eta}_\# \bbf^{\nu, \eta}_\# \nu_t)}{W_2(\bbf^{\mu, \eta}_\# \mu_t, \bbf^{\nu, \eta}_\# \nu_t)} = \frac{W_2(\bb^{\mu, \eta}_\# \mu_{t+\eta}, \bb^{\mu, \eta}_\# \nu_{t+\eta})}{W_2(\mu_{t+\eta}, \nu_{t+\eta})} \leq  1 + \eta ( 1 - \beta^{-1} \zeta) + O(\eta^2) \;.
\end{align*}
	% where the forward chain implements $\bb^{\mu} = \bi + \eta (\nabla f + \beta^{-1} \nabla \log p_{\mu_{t+ \eta/2}})$ (look ahead gradient)

We shall show next in Theorem~\ref{thm:forward-then-backward-higher-order} that
\begin{align*}
	W_2(\bb^{\mu, \eta}_\# \bbf^{\mu, \eta}_\# \mu_t, \mu_t) = O(\eta^2) \;.
\end{align*}
Then compared to $\nu_t$, the diffuse-then-denoise version $\bb^{\mu, \eta}_\# \bbf^{\nu, \eta}_\# \nu_t$ will get closer to $\mu_t$, in the following sense
\begin{align*}
	\frac{W_2( \mu_t, \bb^{\mu, \eta}_\# \bbf^{\nu, \eta}_\# \nu_t)}{W_2(\mu_{t}, \nu_{t})} \leq 1 - \eta \beta^{-1} \zeta + O(\eta^2) \;.
\end{align*}
The one-step diffuse-then-denoise process $\bb^{\mu, \eta} \circ \bbf^{\nu, \eta}$ has a net contraction $\beta^{-1} \zeta$ when $\zeta >0$, namely, the forward contraction, offset by the possible backward expansion, presents a net gain in localization.

\begin{theorem}
	\label{thm:forward-then-backward-higher-order}
	Define the one-step diffuse-then-denoise
	\begin{align*}
		\bbf^{\mu, \eta} &:= \bi - \eta (\nabla f + \beta^{-1} \nabla \log p_{\mu}), ~ 
	\mu_\eta := \bbf^{\mu, \eta}_\# \mu \;, \\
		\bb^{\mu, \eta} &:= \bi + \eta (\nabla f + \beta^{-1} \nabla \log p_{\mu_{\eta}}) \;.
	\end{align*}
	Assume $\nabla f, \nabla \log p_{\mu}, \nabla \log p_{\mu_{\eta}} \in C_c^\infty(X, X)$, 
	then
	\begin{align*}
		W_2(\bb^{\mu, \eta}_\# \bbf^{\mu, \eta}_\# \mu, \mu) = O(\eta^2) \;.
	\end{align*}
\end{theorem}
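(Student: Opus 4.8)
Here I outline the approach; the author's proof may differ.

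\medskip

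\noindent\textbf{Proof proposal.} The plan is to reduce the Wasserstein bound to a pointwise transport-error estimate via an explicit coupling, and then to show that one explicit-Euler forward step perturbs the score only at order $\eta$, which forces the composition $\bb^{\mu,\eta}\circ\bbf^{\mu,\eta}$ to agree with the identity up to $O(\eta^2)$. Write $v_\mu := \nabla f + \beta^{-1}\nabla\log p_\mu$ and $v_{\mu_\eta} := \nabla f + \beta^{-1}\nabla\log p_{\mu_\eta}$, so that $\bbf^{\mu,\eta} = \bi - \eta v_\mu$ and $\bb^{\mu,\eta} = \bi + \eta v_{\mu_\eta}$; by the $C_c^\infty$ hypotheses both fields are globally bounded and Lipschitz, uniformly for small $\eta$. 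Since $(\bi,\,\bb^{\mu,\eta}\circ\bbf^{\mu,\eta})_\#\mu$ is a coupling of $\mu$ and $\bb^{\mu,\eta}_\#\mu_\eta$,
\[
W_2\big(\bb^{\mu,\eta}_\#\bbf^{\mu,\eta}_\#\mu,\ \mu\big)\ \le\ \big\|\bb^{\mu,\eta}\circ\bbf^{\mu,\eta}-\bi\big\|_{L^2(\mu)}\ \le\ \big\|\bb^{\mu,\eta}\circ\bbf^{\mu,\eta}-\bi\big\|_\infty\,,
\]
which is finite, and it suffices to bound the last quantity by $O(\eta^2)$.

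\medskip

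\noindent Next I would carry out the elementary decomposition
\[
\bb^{\mu,\eta}(\bbf^{\mu,\eta}(x))-x\ =\ \eta\big[\,v_\mu(\bbf^{\mu,\eta}(x))-v_\mu(x)\,\big]\ +\ \eta\,\big[\,v_{\mu_\eta}-v_\mu\,\big]\!\big(\bbf^{\mu,\eta}(x)\big)\,.
\]
The first bracket is $O(\eta^2)$ uniformly: $\|\bbf^{\mu,\eta}(x)-x\|=\eta\|v_\mu(x)\|\le\eta\|v_\mu\|_\infty$ and $v_\mu$ is Lipschitz, so this term is at most $\eta^2\,\mathrm{Lip}(v_\mu)\|v_\mu\|_\infty$. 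The whole theorem therefore reduces to the single estimate
\[
\big\|v_{\mu_\eta}-v_\mu\big\|_\infty\ =\ \beta^{-1}\big\|\nabla\log p_{\mu_\eta}-\nabla\log p_\mu\big\|_\infty\ =\ O(\eta)\,,
\]
i.e. to showing that one forward step moves the score by at most $O(\eta)$.

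\medskip

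\noindent I expect this last estimate to be the main obstacle, and I would prove it through the change-of-variables formula. For $\eta$ small, $\bbf^{\mu,\eta}=\bi-\eta v_\mu$ is a $C^\infty$ diffeomorphism (its differential $I-\eta\,Dv_\mu$ is uniformly invertible since $\|Dv_\mu\|_\infty<\infty$), with $(\bbf^{\mu,\eta})^{-1}(y)=y+\eta v_\mu(y)+O(\eta^2)$ and $\det D(\bbf^{\mu,\eta})^{-1}(y)=1+\eta\,\nabla\cdot v_\mu(y)+O(\eta^2)$, these expansions holding uniformly together with their first derivatives. Substituting into $p_{\mu_\eta}(y)=p_\mu\big((\bbf^{\mu,\eta})^{-1}(y)\big)\,\big|\det D(\bbf^{\mu,\eta})^{-1}(y)\big|$ gives $p_{\mu_\eta}=p_\mu+\eta\,\nabla\cdot(p_\mu v_\mu)+O(\eta^2)$ in $C^1$, and hence, using $\nabla\log p=\nabla p/p$ together with the fact that $p_\mu$ (and $p_{\mu_\eta}$) stays uniformly positive on the compact set where the scores do not vanish — outside of which $\nabla\log p_{\mu_\eta}-\nabla\log p_\mu$ is identically zero —
\[
\nabla\log p_{\mu_\eta}-\nabla\log p_\mu\ =\ \eta\,\nabla\!\Big(\tfrac{\nabla\cdot(p_\mu v_\mu)}{p_\mu}\Big)+O(\eta^2)\ =\ O(\eta)\,.
\]
Feeding this back gives $\|\bb^{\mu,\eta}\circ\bbf^{\mu,\eta}-\bi\|_\infty\le\eta^2\,\mathrm{Lip}(v_\mu)\|v_\mu\|_\infty+\eta\cdot O(\eta)=O(\eta^2)$, completing the proof.

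\medskip

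\noindent The technical points to watch are keeping every Taylor remainder uniform in $x$ (for which the global bounds on $v_\mu, Dv_\mu, D^2v_\mu$ coming from $C_c^\infty$ are used), and verifying that $\bbf^{\mu,\eta}$ is genuinely a diffeomorphism with non-degenerate densities. A more structural alternative worth recording: Proposition~\ref{prop:score-ot-map} shows that the backward OT map associated with the proximal (JKO) step $\widehat\mu_\eta:=\argmin_\nu\tfrac{1}{2\eta}W_2^2(\mu,\nu)+\cG(\nu)$ is exactly $\bi+\eta(\nabla f+\beta^{-1}\nabla\log p_{\widehat\mu_\eta})$, which differs from $\bb^{\mu,\eta}$ only through replacing the explicit-Euler law $\mu_\eta=\bbf^{\mu,\eta}_\#\mu$ by $\widehat\mu_\eta$; combining the standard $O(\eta^2)$-comparison of explicit and implicit discretizations of a Wasserstein gradient flow with the same $O(\eta)$ score-perturbation bound as above and the Lipschitzness of $\bb^{\mu,\eta}$, one gets that $\bb^{\mu,\eta}_\#\mu_\eta$ lies within $O(\eta^2)$ of $(\bt^{\mu}_{\widehat\mu_\eta})_\#\widehat\mu_\eta=\mu$. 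Either route isolates the same essential content — a first-order-in-$\eta$ control of how the forward step perturbs the score.
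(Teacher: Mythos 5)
Your proposal is correct and follows essentially the same route as the paper: bound $W_2$ by the coupling $(\bi,\ \bb^{\mu,\eta}\circ\bbf^{\mu,\eta})_\#\mu$, observe the first-order cancellation in the composition, and control the score perturbation $\nabla\log p_{\mu_\eta}-\nabla\log p_\mu = O(\eta)$ via the change-of-variables formula for the pushforward density. The only cosmetic difference is that the paper applies the exact Jacobian identity along the map, expanding $\log p_{\mu_\eta}(\bbf^{\mu,\eta}(x)) = \log p_\mu(x) + \eta\,\tr(\nabla^2 f+\beta^{-1}\nabla^2\log p_\mu)(x) + O(\eta^2)$, whereas you expand $p_{\mu_\eta}$ at a fixed point in continuity-equation form; both yield the same $O(\eta)$ score estimate and hence the same conclusion.
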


A direct corollary concerning the diffuse-then-denoise chain now follows from Theorem~\ref{thm:forward-contraction}, \ref{thm:backward-expansion}, and \ref{thm:forward-then-backward-higher-order}.
We consider finite $K$-step diffuse-then-denoise chains:
	\begin{itemize}
		\item Forward diffuse: for the $\mu$ chain, set $\mu_0 = \mu$, and define $\bbf_{k}^{\mu, \eta} := \bi - \eta (\nabla f + \beta^{-1} \nabla \log p_{\mu_{(k-1)\eta}})$ and $\mu_{k\eta} := (\bbf_{k}^{\mu, \eta} )_\# \mu_{(k-1)\eta}$, recursively for $k = 1, 2, \ldots, K$. Same for the $\nu$ chain.
		\item Backward denoise: for the $\nu$ chain, recursively apply the backward OT map $\bb_{k}^{\mu, \eta} := \bi + \eta (\nabla f + \beta^{-1} \nabla \log p_{\mu_{k\eta}})$ to $\nu_{K\eta}$, for $k = K, K-1, \ldots, 1$.
	\end{itemize}
\begin{corollary}[Diffuse-then-Denoise: Chaining]
\label{cor:chaining}
	Let $f(x) = \| x\|^2/2$. 
	Define the diffuse-then-denoise map
	\begin{align*}
		\bbf^{\nu}_{[K]} :=   \bbf_{K}^{\nu, \eta} \circ \cdots \circ \bbf_2^{\nu, \eta} \circ \bbf_1^{\nu, \eta} \;,  \\
		\bb^{\mu}_{[K]} := \bb_{1}^{\mu, \eta} \circ \bb_{2}^{\mu, \eta}  \circ \cdots \circ \bb_{K}^{\mu, \eta}  \;.
	\end{align*}
	Assume there exists a sequence of $\zeta_{k\eta} \in \mathbb{R}$ such that
	\begin{align*}
		\nabla^2 \log p_{\mu_{k\eta}}(x) \preceq - \zeta_{k\eta} \cdot I_d, ~\forall x \in X, ~\forall k =1, 2,\ldots, K \;.
	\end{align*}
	Then the diffuse-then-denoise process on $\nu$ with fixed $K$ steps, denoted as $\big( \bb^{\mu}_{[K]} \circ \bbf^{\nu}_{[K]} \big)_\# \nu$, satisfies
	\begin{align}
		\label{eqn:diffuse-then-denoise}
		 \frac{W^2_2\big( \mu, \big( \bb^{\mu}_{[K]} \circ \bbf^{\nu}_{[K]} \big)_\# \nu \big)}{ W^2_2(\mu, \nu)} \leq  \exp( -2\eta  \beta^{-1} \sum_{k=1}^{K}  \zeta_{k\eta} + O(\eta^2) ) \;.
	\end{align}
\end{corollary}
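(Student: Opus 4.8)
The plan is to telescope the ratio $W_2^2(\mu,(\bb^\mu_{[K]}\circ\bbf^\nu_{[K]})_\#\nu)/W_2^2(\mu,\nu)$ through the $K$ intermediate measures produced by the forward chain and then by the backward chain, controlling each one-step factor with the estimates already established. Concretely, I would first insert the forward-diffused chain: writing $\nu_{k\eta}$ for the $\nu$-chain after $k$ forward steps, Theorem~\ref{thm:forward-contraction} (with $\lambda=1$ since $f(x)=\|x\|^2/2$) gives, for each $k$,
\begin{align*}
	\frac{W_2^2(\mu_{k\eta},\nu_{k\eta})}{W_2^2(\mu_{(k-1)\eta},\nu_{(k-1)\eta})} \le 1 - 2\eta + O(\eta^2) \;,
\end{align*}
so that after $K$ forward steps the product of these factors is $\le \exp(-2\eta K + O(K\eta^2))$. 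However — and this is the subtle point — we cannot afford to throw away this forward contraction, because the backward expansion of Theorem~\ref{thm:backward-expansion} would, step by step, give a factor $1+\eta(1-\beta^{-1}\zeta_{k\eta})+O(\eta^2)$ whose $+\eta$ part must be cancelled by the $-\eta$ of the matching forward step. So I would pair each backward step with its corresponding forward step rather than bounding the two chains separately.

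The central identity is Theorem~\ref{thm:forward-then-backward-higher-order}, applied at each scale: $W_2(\bb^{\mu,\eta}_k{}_\# \bbf^{\mu,\eta}_k{}_\# \mu_{(k-1)\eta}, \mu_{(k-1)\eta}) = O(\eta^2)$, i.e. the diffuse-then-denoise operator applied to the \emph{correct} chain $\mu$ returns to $\mu_{(k-1)\eta}$ up to $O(\eta^2)$. Using the triangle inequality in $W_2$ together with this, I would establish the one-step estimate
\begin{align*}
	\frac{W_2(\mu_{(k-1)\eta}, \bb^{\mu,\eta}_k{}_\# \bbf^{\nu,\eta}_k{}_\# \nu_{(k-1)\eta})}{W_2(\mu_{(k-1)\eta},\nu_{(k-1)\eta})} \le 1 - \eta\beta^{-1}\zeta_{k\eta} + O(\eta^2) \;,
\end{align*}
obtained by composing the forward factor $1-\eta+O(\eta^2)$ with the backward factor $1+\eta(1-\beta^{-1}\zeta_{k\eta})+O(\eta^2)$ from Theorem~\ref{thm:backward-expansion} (with $\kappa=1$), and absorbing the $O(\eta^2)$ error from Theorem~\ref{thm:forward-then-backward-higher-order} into the remainder. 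The real bookkeeping obstacle is that the backward maps $\bb^{\mu}_{[K]}$ are applied in reverse order to an object that has already been pushed forward $K$ times, so I must verify that, at the $k$-th backward step, the two measures being compared are exactly $\bb^{\mu,\eta}_k$ applied to (a) something $O(\eta^2)$-close to $\mu_{k\eta}$ and (b) the pushforward of $\nu_{(k-1)\eta}$ through $\bbf^{\nu,\eta}_k$; this requires a clean inductive statement peeling the diffuse-then-denoise sandwich one layer at a time from the outside in, and a uniform-in-$\eta$ control so the accumulated $O(\eta^2)$ errors over $K = O(1/\eta)$ steps remain $O(\eta)$ inside the exponent.

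Finally I would chain the $K$ one-step estimates multiplicatively: taking the product over $k=1,\dots,K$ and using $\prod_k (1 - \eta\beta^{-1}\zeta_{k\eta} + O(\eta^2)) \le \exp(-\eta\beta^{-1}\sum_k\zeta_{k\eta} + O(K\eta^2))$, then squaring, yields exactly \eqref{eqn:diffuse-then-denoise}. The main obstacle, as noted, is not any single inequality — all three ingredients are in hand — but the careful matching of the backward step $k$ with the forward step $k$ so that the $\pm\eta$ terms cancel and only $-\eta\beta^{-1}\zeta_{k\eta}$ survives, together with checking that Theorem~\ref{thm:forward-then-backward-higher-order}'s $O(\eta^2)$ residual, when propagated through the remaining (expansive, factor $1+O(\eta)$) maps and summed over $O(1/\eta)$ scales, does not degrade the bound below the claimed exponential rate.
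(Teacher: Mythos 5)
Your three ingredients and the final arithmetic are the right ones, but the central step of your plan --- the one-step estimate $W_2\big(\mu_{(k-1)\eta}, (\bb_k^{\mu,\eta}\circ\bbf_k^{\nu,\eta})_\#\nu_{(k-1)\eta}\big) \le (1-\eta\beta^{-1}\zeta_{k\eta}+O(\eta^2))\,W_2(\mu_{(k-1)\eta},\nu_{(k-1)\eta})$, chained multiplicatively over $k$ --- does not apply to the process in the corollary. The map in question is $\bb_1^{\mu,\eta}\circ\cdots\circ\bb_K^{\mu,\eta}\circ\bbf_K^{\nu,\eta}\circ\cdots\circ\bbf_1^{\nu,\eta}$: all $K$ forward steps are applied before any backward step, so the sandwich $\bb_k^{\mu,\eta}\circ\bbf_k^{\nu,\eta}$ never appears adjacently except at $k=K$, and the output of your $k$-th one-step sandwich is not the input to the $(k+1)$-st; hence the $K$ one-step factors do not compose into a bound on $W_2\big(\mu,(\bb^{\mu}_{[K]}\circ\bbf^{\nu}_{[K]})_\#\nu\big)$. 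Your attempted repair makes this concrete but misidentifies the intermediate measure: at the $k$-th backward step with $k<K$, the measure being denoised is $(\bb_{k+1}^{\mu,\eta}\circ\cdots\circ\bb_K^{\mu,\eta})_\#\nu_{K\eta}$ --- the fully diffused, partially denoised $\nu$ chain --- not $(\bbf_k^{\nu,\eta})_\#\nu_{(k-1)\eta}=\nu_{k\eta}$ as asserted in your item (b).

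The missing piece is exactly the paper's bookkeeping, which exploits that Theorem~\ref{thm:backward-expansion} is uniform over its second argument. Apply Theorem~\ref{thm:forward-contraction} $K$ times up front to get $W_2(\mu_{K\eta},\nu_{K\eta})\le \exp(-\eta K+O(\eta^2))\,W_2(\mu,\nu)$; then peel the backward maps one at a time: at scale $k$, Theorem~\ref{thm:backward-expansion} applied with second argument the partially denoised measure contributes a factor $\exp(\eta(1-\beta^{-1}\zeta_{k\eta})+O(\eta^2))$, and Theorem~\ref{thm:forward-then-backward-higher-order} plus the triangle inequality replaces $(\bb_k^{\mu,\eta})_\#\mu_{k\eta}$ by $\mu_{(k-1)\eta}$ at an additive $O(\eta^2)$ cost. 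The $+\eta$ from each backward step is thus offset against the stockpiled $e^{-\eta K}$ rather than against a literally adjacent forward step, leaving the net exponent $-\eta\beta^{-1}\sum_{k}\zeta_{k\eta}$; squaring gives \eqref{eqn:diffuse-then-denoise}. Finally, the corollary is stated for fixed $K$, so your concern about $K=O(1/\eta)$ error accumulation is not needed: with $K$ fixed (and $W_2(\mu,\nu)$ normalized to $1$) the per-step $O(\eta^2)$ remainders are absorbed directly into the exponent, as the paper does.
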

The above Corollary states the denoising quality of the diffuse-then-denoise chain is collectively determined by the curvature at each step $\zeta_{k\eta}$. For non-log-concave $\mu$, along the chain, some of the $\zeta_{k\eta}$ will be positive, and some will be negative. Therefore, we need a fine-grained understanding of the curvature at each time scale $t = k\eta$ to understand the whole chain behavior. This calls for a fine-resolution analysis of the curvature of $\mu_{t}$, going beyond the worst-case curvature, the focus of the next section.

\section{Beyond Log-Concavity: A Multi-Scale Complexity}
\label{sec:multi-scale}
In this section, we restrict to the Ornstein-Uhlenbeck process and discover a multi-scale complexity measure that controls the effective contraction/expansion of the diffuse-then-denoise process. The key is that for the Ornstein-Uhlenbeck process, $\mu_t$ is a smoothed version of $\mu$ at a particular signal-to-noise ratio scale $\sr = \sr(t)$. 

\subsection{OU Process}
\begin{definition}[Ornstein-Uhlenbeck Process]
	\label{def:OU-process}
	Define the Ornstein-Uhlenbeck Process with initialization $X_0 \sim \mu$, and potential $f(x) = \| x\|^2/2$
		\begin{align*}
			\dd \bX_t = - \bX_t \dd t + \sqrt{2\beta^{-1}} \dd \bB_t \;.
		\end{align*}
		Then the distribution of $\bX_t, \forall t \in \R_+$ admits the representation
		\begin{align}
			\label{eqn:OU-distribution}
			\bX_t \stackrel{\mathrm{\cL}}{\sim} e^{-t} \bX_0 + \sqrt{\beta^{-1} (1-e^{-2t})} \bZ, ~ \bZ\sim \cN(0, I_d) \;.
		\end{align}
\end{definition}

For any measure $\bX \sim \mu \in \cP_2(X)$, the above representation motivates us to consider a sequence of problems indexed by a multi-scale signal-to-noise ratio. 
Recall Proposition~\ref{prop:curvature-localization}, 
\begin{align}\label{eqn:key}
	\begin{split}
		&\sr(t) := \frac{e^{-t}}{\sqrt{\beta^{-1}(1-e^{-2t})}}, ~ \ss(t) := \sqrt{\beta^{-1}(1-e^{-2t})} \;, \\
		&\nabla^2 \log p_{\mu_t}(x) = -\frac{1}{\ss^2(t)} \left\{ I_d - \Cov[ \sr(t) \bX_0 | \bX_t = x ] \right\} \;.
	\end{split} 
\end{align}
The curvature at each scale quantifies and, collectively, determines the measure's complexity. 

If $\mu$ is a log-concave measure, then by Prékopa-Leindler Inequality, $\mu_t$'s are convolutions of a log-concave measure with a Gaussian measure, and therefore, log-concave. Hence, for any $t$, we know that $\nabla^2 \log p_{\mu_t}(x) \preceq 0$. As a direct consequence of Equation~\eqref{eqn:diffuse-then-denoise}, we know that the diffuse-then-denoise chain is effective. 

However, for non-log-concave measure $\mu$, the $\nabla^2 \log p_{\mu_t}(x)$ may have positive eigenvalues. Given Equation \eqref{eqn:key}, studying the positive eigenvalues of $\nabla^2 \log p_{\mu_t}$ is equivalent to understanding the tail behavior of the localization quantity $\Cov[ \sr(t) \bX_0 | \bX_t ]$, which motivates the following section.

\subsection{Multi-Scale Complexity: Non-Log-Concavity and Survival Function}
\label{sec:survival-and-tail}

We define a sequence of functions at various signal-to-noise ratio (SNR) scales. In a nutshell, the different scales of SNR probe the tail behavior of the localization quantity, $\Cov[ \sr(t) \bX_0 | \bX_t ]$. In turn, the localization quantity determines the landscape of the curvature of the measure $\mu_t$, a smoothed version of the original measure $\mu$ at a given SNR scale $\sr(t)$, as shown in Equations~\eqref{eqn:OU-distribution}-\eqref{eqn:key}. This multi-scale corruption also occurred in stochastic localization; see \citet{montanari2023sampling} for the connection between stochastic localization and diffusions. We give precise complexity measures of the denoising problem, cast in quantities determined by the survival function of the random variable $\Cov[ \sr(t) \bX_0 | \bX_t ]$. As a result, these complexity measures illustrate the effective contraction or expansion rate for the diffuse-then-denoise process, covering the case of non-log-concave $\mu$'s. 

Two direct consequences of the newly proposed complexity measures follow: (1) It gives rise to a fine-grained analysis for the diffuse-then-denoise process at any SNR scale, for any non-log-concave measures, to be shown in Section~\ref{sec:refined-analysis}; (2) It motivates a simulation-based numerical tool to visualize the bottleneck SNR scale for the denoising problem, as we shall demonstrate in Section~\ref{sec:examples}. Several curious phenomena are unveiled and rationalized by the new multi-scale complexity.

\begin{definition}[SNR, Localization and Survival Function]
	\label{def:snr-multi-scale}
	Given an initial measure $\mu \in \cP_2(\R^d)$, we define, for any SNR $\sr \in \mathbb{R}_{\geq 0}$
	\begin{align*}
		\bY_{\sr} := \sr  \bX + \bZ, 
		~ (\bX, \bZ) \sim \mu \otimes \cN(0, I_d)
	\end{align*}	
	where $\bX \sim \mu$ and $\bZ \sim \cN(0, I_d)$ are independent.

	Define the localization function and the associated random variable
	\begin{align*}
		L_{\sr}(y) = \|  \Cov[ \sr  \bX | \bY_{\sr} = y ] \|_{\op} ~, ~\text{and}~ \bL_{\sr} = \|  \Cov[ \sr  \bX | \bY_{\sr} ] \|_{\op} \;,
	\end{align*}
	and denote the survival function of $\bL_{\sr}$ as $s_{\sr}(\cdot): \R_+ \rightarrow [0, 1]$
	\begin{align*}
		s_{\sr}(u) := \mathbb{P}(\bL_{\sr} > u) \;.
	\end{align*}
\end{definition}
A few remarks follow from this definition: (i) When the measure $\mu = \cL(\bX)$ is log-concave, then by Prékopa-Leindler Inequality, for all $\sr$, the measure $\cL(\bY_{\sr})$ is log-concave as well. Proposition~\ref{prop:curvature-localization} implies that the random variable $\bL_{\sr} \leq 1$. Therefore, the survival function $s_{\sr}(1) = 0$. (ii) When the measure $\mu = \cL(\bX)$ is non-log-concave, then the measure $\cL(\bY_{\sr})$ will be non-log-concave for some $\sr$. For these $\br$'s,  we know that $\exists y, ~L_{\sr}(y) > 1$ and in turn implies $s_{\sr}(1) = \mathbb{P}(\bL_{\sr} >  1) > 0$.

This property is crucial and distinguishes the difficult settings for the backward transport. The validity and effectiveness of reverting the diffusion model depends on the integrated tail of the survival function $s_{\sr}(u), u \in [0, 1)$, which we define now.
\begin{definition}[Multi-Scale Complexity]
	\label{def:h-function}
	For $\delta \in (0, 1]$, define
	\begin{align*}
		 h_{\mu}(\delta, \sr) := \int_{1-\delta}^{\infty} s_{\sr}(u) \dd u \;, ~~ m_{\mu}(\delta, \sr) := \frac{h_{\mu}(\delta, \sr)}{\delta} \;.
	\end{align*}
	Define
	\begin{align*}
		\delta^\ast(\sr)  = \max \left\{ \zeta \in [0, 1] : \zeta \in \argmin_{\delta \in [0, 1]}~ m_{\mu}(\delta, \sr) \right\} \;,
	\end{align*}
	and the minimal value
	\begin{align*}
		m^\ast(\sr) := \min_{\delta \in [0, 1]}~ m_{\mu}(\delta, \sr)\;.
	\end{align*}
\end{definition}

A few observations follow for the $m_{\mu}(\cdot,  \sr) : \delta \mapsto h_{\mu}(\delta, \sr)/\delta$ function, which ensures $\delta^\star(\sr)$ and $m^\star(\sr)$ are well-defined.
\begin{proposition}
	\label{prop:shape-of-h/delta}
	The following properties for $m_{\mu}(\cdot,  \sr)$ hold:
	\begin{enumerate}
		\item Log-concave case: If $s_{\sr}(1) = 0$, then $m_{\mu}(\cdot,  \sr)$ is a non-decreasing function in $\delta \in [0, 1]$; 
		\item Non-log-concave case: If $s_{\sr}(1) > 0$, then $m_{\mu}(\cdot,  \sr)$ is either (i) non-increasing in $\delta \in [0, 1]$, or (ii) U-shaped in $\delta \in [0, 1]$, namely first non-increasing then non-decreasing.
	\end{enumerate}
\end{proposition}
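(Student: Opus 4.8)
The plan is to analyze the function $\delta \mapsto m_\mu(\delta,\sr) = h_\mu(\delta,\sr)/\delta$ directly through the survival function $s_\sr$. First I would record the basic structural facts: $s_\sr:\R_+\to[0,1]$ is non-increasing (it is a survival function), and $h_\mu(\delta,\sr) = \int_{1-\delta}^\infty s_\sr(u)\,\dd u$ is a non-decreasing, concave-from-the-right primitive in $\delta$ on $[0,1]$, because $\tfrac{\partial}{\partial\delta}h_\mu(\delta,\sr) = s_\sr(1-\delta)$ is itself non-decreasing in $\delta$ (since $u\mapsto s_\sr(u)$ is non-increasing and $\delta\mapsto 1-\delta$ is decreasing). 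So $h_\mu(\cdot,\sr)$ is a convex, non-negative, non-decreasing function of $\delta$ with $h_\mu(0,\sr)=0$. The whole proposition is then a statement about the quotient of such a convex function by $\delta$, specialized according to whether its right-derivative at $0$, namely $s_\sr(1)$, vanishes or not.

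For part (1), the log-concave case $s_\sr(1)=0$: I would show $m_\mu(\cdot,\sr)$ is non-decreasing by a standard convexity argument. Since $h_\mu(0,\sr)=0$, for any $0<\delta_1<\delta_2\le 1$ convexity of $h_\mu(\cdot,\sr)$ gives $h_\mu(\delta_1,\sr) = h_\mu\big(\tfrac{\delta_1}{\delta_2}\delta_2 + (1-\tfrac{\delta_1}{\delta_2})\cdot 0,\sr\big) \le \tfrac{\delta_1}{\delta_2}h_\mu(\delta_2,\sr)$, i.e. $h_\mu(\delta_1,\sr)/\delta_1 \le h_\mu(\delta_2,\sr)/\delta_2$. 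This argument in fact does not use $s_\sr(1)=0$ at all — the quotient of a convex function vanishing at $0$ by its argument is always non-decreasing — so part (1) follows purely from convexity of $h_\mu(\cdot,\sr)$. (The hypothesis $s_\sr(1)=0$ is what makes $m_\mu$ continuous at $\delta=0$ with value $0$, so the "non-decreasing" claim is non-vacuous there; I'd note that $\lim_{\delta\to 0^+}m_\mu(\delta,\sr)=s_\sr(1)=0$ by L'Hôpital or by $h_\mu(\delta,\sr)\le \delta\, s_\sr(1-\delta)\to 0$.)

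For part (2), the non-log-concave case $s_\sr(1)>0$: here the sign structure comes from comparing $m_\mu$ to its derivative. Writing $m_\mu(\delta,\sr) = h_\mu(\delta,\sr)/\delta$, its derivative (where $s_\sr$ is continuous, hence a.e.) is $m_\mu'(\delta,\sr) = \big(\delta\, s_\sr(1-\delta) - h_\mu(\delta,\sr)\big)/\delta^2$. Define $\phi(\delta) := \delta\, s_\sr(1-\delta) - h_\mu(\delta,\sr)$; the sign of $m_\mu'$ equals the sign of $\phi$. Now $\phi(0)=0$ and, using $\tfrac{\dd}{\dd\delta}h_\mu(\delta,\sr)=s_\sr(1-\delta)$, one computes $\phi'(\delta) = s_\sr(1-\delta) + \delta\cdot\tfrac{\dd}{\dd\delta}\big[s_\sr(1-\delta)\big] - s_\sr(1-\delta) = \delta\cdot\tfrac{\dd}{\dd\delta}\big[s_\sr(1-\delta)\big]$. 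Since $\delta\mapsto s_\sr(1-\delta)$ is non-decreasing, $\phi'(\delta)\ge 0$: $\phi$ is non-decreasing on $[0,1]$, with $\phi(0)=0$. Hence $\phi$ changes sign at most once, from $\le 0$ to $\ge 0$; equivalently $m_\mu'$ is $\le 0$ then $\ge 0$, so $m_\mu(\cdot,\sr)$ is first non-increasing then non-decreasing — either monotone (if the sign change is at an endpoint) or genuinely U-shaped. Finally, to separate the log-concave conclusion (non-decreasing) from this one, I'd observe that when $s_\sr(1)>0$ we have $\phi(\delta) = \delta\,s_\sr(1-\delta) - h_\mu(\delta,\sr) \le \delta\,s_\sr(1-\delta) - \delta\, s_\sr(1) \le 0$ is \emph{not} forced — rather, near $\delta=0^+$, $h_\mu(\delta,\sr)\approx \delta\, s_\sr(1)$ while $s_\sr(1-\delta)\ge s_\sr(1)$, so $\phi(\delta)\ge 0$ possibly fails; more carefully, $\phi(\delta) = \delta\, s_\sr(1-\delta) - \int_{1-\delta}^1 s_\sr \le \delta\, s_\sr(1-\delta) - \int_{1-\delta}^1 s_\sr(u)\,\dd u$, and since $s_\sr(1-\delta)\ge s_\sr(u)$ for $u\in[1-\delta,1]$... this actually gives $\phi(\delta)\ge 0$. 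I would need to be careful here: in fact $\phi\ge 0$ always would force $m_\mu$ non-decreasing always, contradicting the stated U-shape. The resolution — and the step I expect to be the main obstacle — is that $h_\mu(\delta,\sr) = \int_{1-\delta}^\infty s_\sr(u)\,\dd u$ integrates $s_\sr$ over $[1-\delta,\infty)$, \emph{not} just $[1-\delta,1]$; the extra mass $\int_1^\infty s_\sr(u)\,\dd u = h_\mu(0,\sr)>0$ when $s_\sr(1)>0$ is precisely what can make $\phi(\delta) = \delta\, s_\sr(1-\delta) - \int_{1-\delta}^1 s_\sr - h_\mu(0,\sr)$ negative for small $\delta$. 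So the careful bookkeeping of the constant term $h_\mu(0,\sr)$ versus $h_\mu(\delta,\sr)-h_\mu(0,\sr)=\int_{1-\delta}^1 s_\sr$ is the delicate point: $\phi$ non-decreasing and $\phi(0) = 0 - h_\mu(0,\sr) = -h_\mu(0,\sr)$, which is $0$ in the log-concave case and strictly negative in the non-log-concave case — giving exactly the dichotomy. I would also handle the endpoint/continuity issues (right-continuity of $s_\sr$, the a.e. differentiability, and the definition of $\delta^\ast(\sr)$ as the maximal minimizer, which is well-defined precisely because $m_\mu(\cdot,\sr)$ is continuous and U-shaped/monotone so its argmin is a closed interval).
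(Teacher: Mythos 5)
Your argument is correct and takes essentially the same route as the paper: both hinge on the sign of the numerator $\delta\, s_{\sr}(1-\delta) - h_{\mu}(\delta,\sr)$ of the derivative of $m_{\mu}(\cdot,\sr)$, the fact that this numerator is non-decreasing in $\delta$, and the dichotomy coming from whether $h_{\mu}(0,\sr)=\int_{1}^{\infty}s_{\sr}(u)\,\dd u$ vanishes ($s_{\sr}(1)=0$) or is strictly positive ($s_{\sr}(1)>0$); your chord-slope convexity argument for part (1) is just the integrated form of the paper's derivative-sign computation. The only blemish is the transient assertions that $h_{\mu}(0,\sr)=0$ and $\phi(0)=0$ hold in general---false precisely when $s_{\sr}(1)>0$---but you catch and correct this yourself, so the final proof stands.
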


At a high level, if for some $\delta \in [0, 1]$ the integrated tail $\int_{1-\delta}^{\infty} s_{\sr}(u) \dd u$ is small, then the effective contraction of the diffuse-then-denoise process will be governed by this $\delta$.  By Proposition \ref{prop:shape-of-h/delta}, $m^\ast(\sr) \geq 0$ is well-defined and, as we shall see in the next section, this will quantify the type of perturbation that the diffuse-then-denoise step can tolerate and still effectively contract.

\subsection{Backward Expansion: Refined Analysis}
\label{sec:refined-analysis}

In this section, we give a proof of why the multi-scale survival function at each SNR schedule $ \sr(t) = \tfrac{e^{-t}}{\sqrt{\beta^{-1} (1- e^{-2t})}}$ for $t \in [0, \infty]$ controls the backward expansion of the diffusion-then-denoise model. The result generalizes Theorem~\ref{thm:backward-expansion} to the case of arbitrary non-log-concave measures.

\begin{definition}
	\label{def:M}
	For a measure $\mu \in \cP_2^r(X)$, define a class of measures in reference to $\mu$,
	\begin{align*}
		\cM(\mu, M) := \{ \nu \in \cP_2^r(X) ~:~ \frac{\sup_{x \in \mathrm{Dom}(\mu)} \| (\bt_{\mu}^{\nu} - \bi)(x)\|^2 }{ \int \| (\bt_{\mu}^{\nu} - \bi)(x)\|^2 \dd \mu} \leq M \} \;.
	\end{align*}
\end{definition}
If $M = 1$, the set $\cM(\mu, M)$ consists of simple measures that are a location shift of $\mu$. If $M = \infty$, the set $\cM(\mu, M) = \cP_2^r(X)$ all Wasserstein space. Here $M$ controls the richness of perturbations around $\mu$.

\begin{theorem}[Backward Expansion: Beyond Log-Concavity]
	\label{thm:backward_expansion_non_log_concave}
	Recall the $h_{\mu}(\delta, \sr)$ function in Definition~\ref{def:h-function}. 
	Assume $\nabla^2 \log p_{\mu_t}(x)$ has bounded eigenvalues over $x \in X$.
	Consider the backward OT map for the OU process as in Definition~\ref{def:OU-process},
	then for any $\delta \in [0, 1]$
	\begin{align*}
		\limsup_{\eta \rightarrow 0} \limsup_{\nu \in  \cM(\mu_t, M): \nu \stackrel{W_2}{\rightarrow} \mu_t } \frac{1}{\eta} \frac{W_2^2(\bb^{\mu, \eta}_{\#} \mu_t, \bb^{\mu, \eta}_{\#} \nu) - W_2^2(\mu_t, \nu) }{W_2^2(\mu_t, \nu)  } \leq 2   -   \tfrac{2}{1 - e^{-2t}}   \big[ \delta -  M \cdot  h_{\mu}(\delta, \sr(t))  \big] \;.
	\end{align*}
	Here the SNR schedule $\sr(t)$ is defined in \eqref{eqn:key}.
\end{theorem}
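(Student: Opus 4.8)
\medskip
\noindent\textbf{Proof plan.}
The strategy is a local sensitivity analysis of the backward map $\bb^{\mu,\eta}$: linearize it around $\mu_t$, use the curvature--localization identity to turn the leading term into a quadratic form of the conditional covariance $\Cov[\sr(t)\bX_0\mid\bX_t]$, and then apply a thresholding argument that converts a pointwise operator-norm bound on that covariance into the integrated-tail functional $h_\mu(\delta,\sr(t))$. The constraint $\nu\in\cM(\mu_t,M)$ is what prevents a single point of large positive curvature from dominating, which is precisely the gap between this bound and the worst-case Theorem~\ref{thm:backward-expansion}.

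\smallskip
\noindent\emph{Step 1: coupling and linearization.}
Fix $\eta>0$ and $\nu\in\cM(\mu_t,M)$; let $\bt^{\nu}_{\mu_t}$ be the Brenier map with $(\bt^{\nu}_{\mu_t})_\#\mu_t=\nu$ and set $v:=\bt^{\nu}_{\mu_t}-\bi$, so that $\int\|v\|^2\dd\mu_t=W_2^2(\mu_t,\nu)=:V$ and, by definition of $\cM(\mu_t,M)$, $\|v(x)\|^2\le MV$ for $\mu_t$-a.e.\ $x$. Transporting $x\sim\mu_t$ to $(\bb^{\mu,\eta}(x),\bb^{\mu,\eta}(x+v(x)))$ is an admissible coupling of $\bb^{\mu,\eta}_\#\mu_t$ and $\bb^{\mu,\eta}_\#\nu$, so
\[
W_2^2(\bb^{\mu,\eta}_\#\mu_t,\bb^{\mu,\eta}_\#\nu)\ \le\ \int\big\|\bb^{\mu,\eta}(x)-\bb^{\mu,\eta}(x+v(x))\big\|^2\dd\mu_t(x).
\]
For $f(x)=\|x\|^2/2$ one has $\bb^{\mu,\eta}=(1+\eta)\bi+\eta\beta^{-1}\nabla\log p_{\mu_t}$; Taylor-expanding at $x$ gives $\bb^{\mu,\eta}(x+v(x))-\bb^{\mu,\eta}(x)=D\bb^{\mu,\eta}(x)\,v(x)+R(x)$ with $D\bb^{\mu,\eta}(x)=(1+\eta)I_d+\eta\beta^{-1}\nabla^2\log p_{\mu_t}(x)$ and $\|R(x)\|\le\tfrac12\,\eta\beta^{-1}\big(\sup_z\|\nabla^3\log p_{\mu_t}(z)\|\big)\|v(x)\|^2$. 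Squaring, integrating, dividing by $V$, using $\|v(x)\|^2\le MV$ to absorb the cross- and remainder-terms into a quantity that tends to $0$ as $W_2(\mu_t,\nu)\to0$ uniformly over $\cM(\mu_t,M)$, and dividing by $\eta$, I arrive at
\[
\frac1\eta\,\frac{W_2^2(\bb^{\mu,\eta}_\#\mu_t,\bb^{\mu,\eta}_\#\nu)-W_2^2(\mu_t,\nu)}{W_2^2(\mu_t,\nu)}\ \le\ 2+2(1+\eta)\,\frac{\int v^\top\!\big(\beta^{-1}\nabla^2\log p_{\mu_t}\big)v\,\dd\mu_t}{\int\|v\|^2\dd\mu_t}+O(\eta)+o(1),
\]
where the $O(\eta)$ is controlled by $\sup_z\|\nabla^2\log p_{\mu_t}(z)\|_{\op}$ and the $o(1)$ is in the limit $\nu\to\mu_t$. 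Taking $\limsup$ first over $\nu\to\mu_t$ in $\cM(\mu_t,M)$ and then over $\eta\to0$ removes the $O(\eta)$ and $o(1)$ terms, and it remains to bound the quadratic-form quotient over all measurable $v$ with $\|v(x)\|^2\le M\int\|v\|^2\dd\mu_t$.

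\smallskip
\noindent\emph{Step 2: curvature identity and thresholding.}
By Proposition~\ref{prop:curvature-localization} applied to the OU representation $\bX_t\overset{d}{=}e^{-t}\bX_0+\sqrt{\beta^{-1}(1-e^{-2t})}\,\bZ$ (cf.\ \eqref{eqn:key}), writing $\Sigma(x):=\Cov[\sr(t)\bX_0\mid\bX_t=x]\succeq0$ one has
\[
\beta^{-1}\nabla^2\log p_{\mu_t}(x)=-\frac{1}{1-e^{-2t}}\big(I_d-\Sigma(x)\big),\qquad \|\Sigma(x)\|_{\op}=L_{\sr(t)}\!\big(\ss(t)x\big).
\]
Thus the quotient from Step 1 equals $-\tfrac{1}{1-e^{-2t}}\big(1-\tfrac{\int v^\top\Sigma(x)v\,\dd\mu_t}{\int\|v\|^2\dd\mu_t}\big)$, and the whole theorem reduces to the estimate $\tfrac{\int v^\top\Sigma(x)v\,\dd\mu_t}{\int\|v\|^2\dd\mu_t}\le(1-\delta)+M\,h_\mu(\delta,\sr(t))$. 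I prove this by thresholding at level $1-\delta$: pointwise,
\[
v(x)^\top\Sigma(x)v(x)\le\|\Sigma(x)\|_{\op}\|v(x)\|^2\le(1-\delta)\|v(x)\|^2+\big(\|\Sigma(x)\|_{\op}-(1-\delta)\big)_+\|v(x)\|^2,
\]
and on the excess term I replace $\|v(x)\|^2$ by the uniform bound $MV$. Integrating gives $\int v^\top\Sigma v\,\dd\mu_t\le(1-\delta)V+MV\!\int\big(\|\Sigma(x)\|_{\op}-(1-\delta)\big)_+\dd\mu_t(x)$. Finally, since $\ss(t)\bX_t\overset{d}{=}\bY_{\sr(t)}$ jointly with $\bX_0$, the variable $\|\Sigma(\bX_t)\|_{\op}=L_{\sr(t)}(\ss(t)\bX_t)$ has the law of $\bL_{\sr(t)}$, so the layer-cake formula yields $\int\big(\|\Sigma(x)\|_{\op}-(1-\delta)\big)_+\dd\mu_t(x)=\E\big[(\bL_{\sr(t)}-(1-\delta))_+\big]=\int_{1-\delta}^{\infty}s_{\sr(t)}(u)\,\dd u=h_\mu(\delta,\sr(t))$. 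Dividing by $V$ and chaining the displays produces exactly the claimed bound $2-\tfrac{2}{1-e^{-2t}}\big[\delta-M\,h_\mu(\delta,\sr(t))\big]$.

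\smallskip
\noindent\emph{Main obstacle.}
The conceptual crux is the thresholding in Step 2: it is membership in $\cM(\mu_t,M)$, forcing $\|v(x)\|^2\le M\,W_2^2(\mu_t,\nu)$, that lets us charge the ``bad'' region $\{x:L_{\sr(t)}(\ss(t)x)>1-\delta\}$ only through its probability mass (weighted by $M$) rather than through the worst curvature value there; without this, the quotient $\int v^\top\Sigma v/\int\|v\|^2$ could be as large as $\sup_x L_{\sr(t)}(\ss(t)x)$ and we would merely recover Theorem~\ref{thm:backward-expansion}. The remaining work is technical: making the nested $\limsup$ rigorous, i.e.\ checking that the Taylor remainder and the $\eta$-order terms, after normalization by $V=W_2^2(\mu_t,\nu)$, are $o(1)$ \emph{uniformly} over $\cM(\mu_t,M)$ as $\nu\to\mu_t$ (this is where the defining ratio of $\cM(\mu_t,M)$ is used), together with ensuring $\nabla^2\log p_{\mu_t}$ and $\nabla^3\log p_{\mu_t}$ are bounded, which follows from $p_{\mu_t}$ being a nondegenerate Gaussian mollification of $\mu$.
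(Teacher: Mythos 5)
Your proposal is correct and follows essentially the same route as the paper's proof: perturb along the Brenier map with the $\cM(\mu_t,M)$ constraint giving the uniform bound $\|v(x)\|^2\le M\,W_2^2(\mu_t,\nu)$, linearize $\bb^{\mu,\eta}$ and reduce to a quadratic form in $\nabla^2\log p_{\mu_t}$, rewrite the curvature via Proposition~\ref{prop:curvature-localization} as the conditional covariance at SNR $\sr(t)$, and threshold at level $1-\delta$ so the bad region is charged only through $M$ times the layer-cake integral, which is exactly $h_\mu(\delta,\sr(t))$ (the paper's split over $R_\delta$ and $R_\delta^c$). The only differences are cosmetic (unnormalized $v$ versus the normalized field $\boldsymbol{\xi}$, and an explicit third-derivative remainder bound in place of the paper's Taylor argument on the auxiliary function $g$), with the same implicit regularity assumptions on $\bb^{\mu,\eta}$.
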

    To build intuition for this result, we will isolate the following term, 
    \begin{align*}
        \zeta_M^\ast(t) =\sup_{\delta \in [0, 1]}  \tfrac{1}{1-e^{-2t}}\left[\delta - M \cdot h_\mu(\delta, \sr(t)) \right].
    \end{align*}
	We shall extensively explore how the multi-scale complexity affects this effective curvature complexity $\zeta_M^\ast(t)$ with several non-log-concave examples in the next section. The curious reader may wonder whether this multi-scale complexity recovers the result obtained in the previous section for the log-concave case. The answer is yes.

 In the log-concave case, namely, there exists $\zeta \geq 0$, $\nabla^2 \log p_{\mu_t}(x) \preceq -\zeta \cdot I_d$. For $\tilde\zeta(t) = (1-e^{-2t})\zeta$ and $\sr = \sr(t)$, we have $\nabla^2 \log p_{\bY_{\sr}}(y) \preceq -\tilde\zeta \cdot I_d$. In this case, $L_\sr(y) \leq 1- \tilde\zeta$ and  $s_{\sr}(1-\tilde\zeta) = 0$. Then,  $m_{\mu}(\delta, \sr) = 0$ for all $\delta \in [0, \tilde\zeta]$, and
    \begin{align*}
        &m^\ast(\sr) = 0, \quad \delta^\ast(\sr) = \tilde\zeta,\\
        &\zeta_\infty^\ast(t) = \lim_{M \to \infty} \frac{1}{1-e^{-2t}} \left[\delta^\ast(\sr) - M\cdot h_\mu(\delta^\ast(\sr), \sr) \right]  = \frac{\tilde\zeta}{1-e^{-2t}} = \zeta.
    \end{align*}

In the non-log-concave case, for any $M < \frac{1}{m^\ast(\sr)}$, it is possible to obtain $\zeta_M^\ast(t) > 0$, and thus, a net contraction for the diffuse-then-denoise process. We will explore this extensively in the next section and delineate the behavior of $\zeta_M^\ast(t)$ for a host of fundamental non-log-concave distributions.

\section{Examples}
\label{sec:examples}

Theorem~\ref{thm:backward_expansion_non_log_concave} depends on an integrated survival function $h_{\mu}(\delta, \sr)$ whose shape is difficult to guess. Luckily, its complete behavior can be captured by understanding the following quantities. 
\begin{align*}
    s_\sr(u) = \mathbb{P}(\bL_{\sr}>u),~ m^\ast(\sr) = \min_{\delta \in [0, 1]} m_\mu(\delta, \sr),~ \delta^\ast(\sr)  = \max \Big\{ \zeta \in [0, 1] : \zeta \in \argmin_{\delta \in [0, 1]}~ m_{\mu}(\delta, \sr) \Big\}.
\end{align*}
We will methodically calculate and plot these objects for a range of fundamental distributions. Given a target distribution, empirical or with explicit form, we can use a Monte Carlo simulation method to visualize the functions above.
\begin{itemize}
    \item For any empirical measure $\mu_0$,  \eqref{eqn:empirical} defines an expression for the empirical version of $\bL_r(y)$.
    \item One can simulate $\bY_\sr$ by sampling $\sr X + \mathcal{N}(0, 1), \ \text{for} \ X \sim \mu_0$.
    \item Then one can obtain the empirical survival function, $s_\sr(u)$. $m^\ast(\sr)$, $\delta^\ast(\sr)$ and $\zeta^\ast_M(t)$ follow. 
\end{itemize}
 
Capturing this behavior at precise time scales will allow us to ascertain (1) when contraction is easy, and if not, (2) at what SNR, $\sr$, the process transitions to a non-log-concave setting. In the following, we will restrict ourselves to the OU process, with $\beta = 1$, and in one dimension.

\subsection{Warm-Up: Log-Concave}
Suppose the target measure, $\mu_0$ is log-concave. 
Via Prékopa-Leindler Inequality, the full chain of measures generated by the OU process admits log-concavity. That is, there exists a sequence of non-negative constants $\zeta(\sr)>0$ such that  
\begin{align*}
		\nabla^2 \log p_{\bY_{\sr}}(y) \preceq -\zeta(\sr), \quad \forall \sr \geq 0.
	\end{align*}
We know from the previous section that this implies,
\begin{align*}
    m^\ast(\sr) = 0, \quad \delta^\ast(\sr) = \zeta(\sr), \quad  \zeta_{\infty}^\ast(t) = \frac{\delta^\ast(r(t))}{1-e^{-2t}}.
\end{align*}
We visualize and validate this result for three log-concave distributions. 

\begin{example}[Point Mass]
\rm
 Consider the simplest case where the target measure $\mu_0$ is a Dirac measure, $\delta_0$. We assess the localization of the backward transport by investigating the random covariance, $\bL_{\sr}$. 
$$- \nabla^2 \log p_{\bY_\sr}(y) =  1 , \quad \implies \bL_{\sr} \equiv 0.$$   
In this simple case, $\bL_{\sr}$ is a point mass at $0$. The backward transport localizes completely. It follows that $s_\sr(u) = 0$. Therefore, 
$$ m^\ast(\sr) = 0, \quad \delta^\ast(\sr) = 1, \quad \zeta^\ast_\infty(t) = \frac{1}{1-e^{-2t}}.$$
\begin{figure}[H]
   \centering
	  \begin{subfigure}[b]{0.45\textwidth}
	    \includegraphics[width=\textwidth]{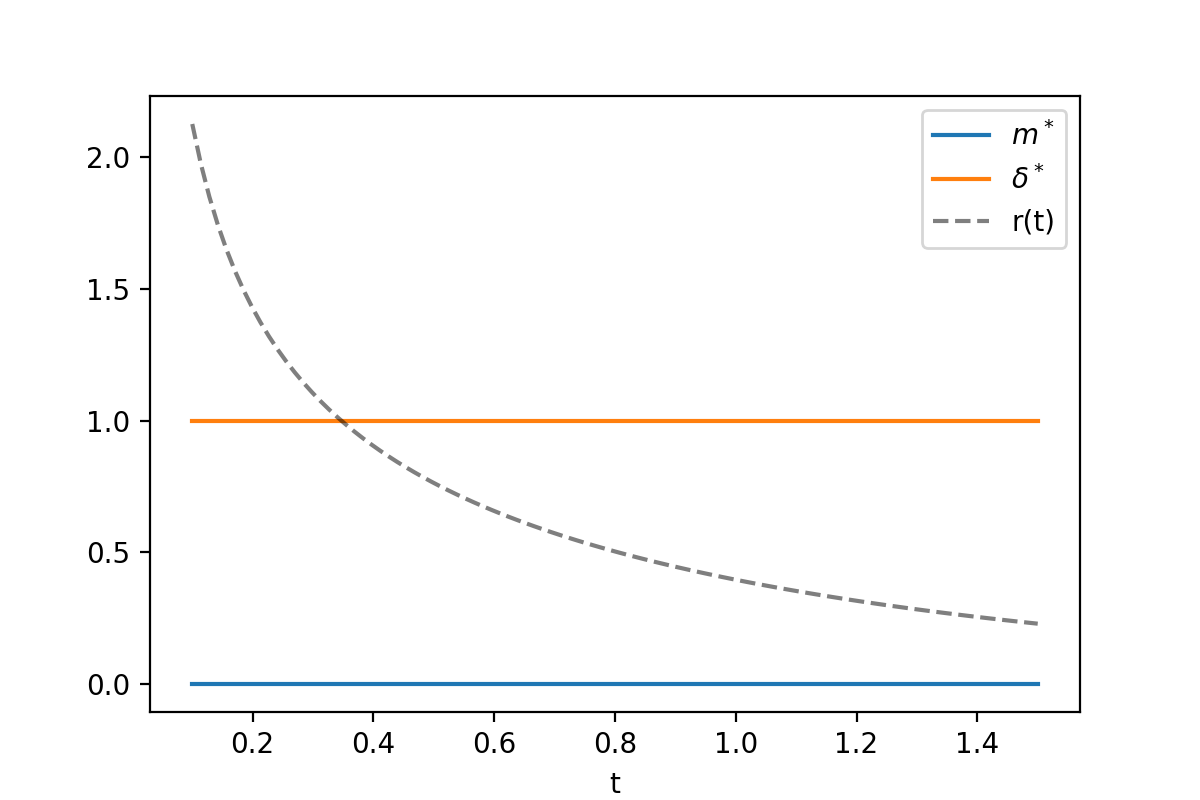}
		\caption{}
	  \end{subfigure}
	  \begin{subfigure}[b]{0.45\textwidth}
	    \includegraphics[width=\textwidth]{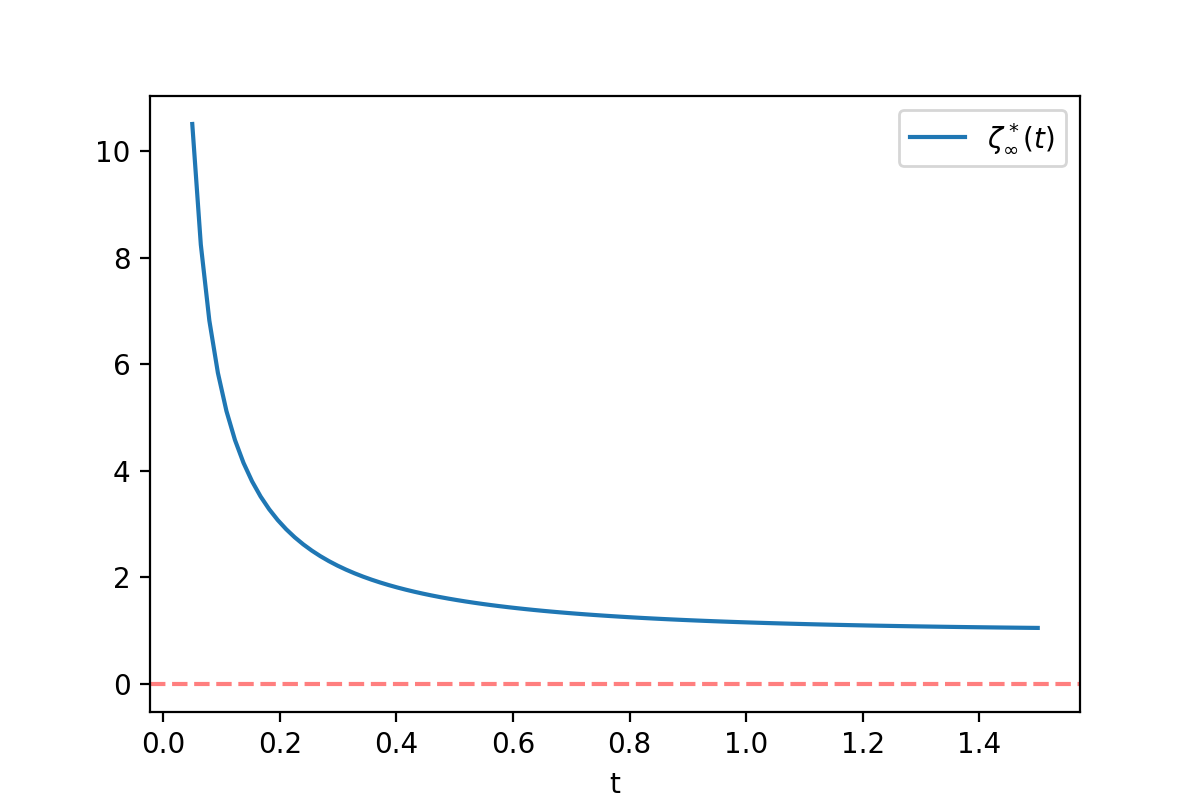}
		\caption{}
	  \end{subfigure}
   \caption{(a) $m^\ast(\sr), \delta^\ast(\sr)$. (b) $\zeta^\ast(t)$.}
   \label{fig:ContractionPointMass}
\end{figure}
\end{example}

\begin{example}[Normal]
\rm
Consider now that $\mu_0$ is a normal distribution $\cN(m, \sigma^2)$. 
Again, we calculate the localization quantity, 
$$
-\nabla^2 \log p_{\bY_\sr}(y) = \frac{1}{\sigma^2\sr^2 + 1}, \quad \implies  \bL_{\sr} = \frac{\sigma^2\sr^2}{\sigma^2\sr^2 + 1}.
$$ 
 
$\bL_\sr$ is a point mass with location depending on the SNR, $\sr$. In turn, the survival function is not 0, but a step function with an explicit threshold,
 \begin{align*}
      s_\sr(u) = \begin{cases}
		 &1, \quad \forall u \in  [0, \tfrac{\sigma^2\sr^2}{\sigma^2\sr^2 + 1} )\\
		&0, \quad \forall u \in [\tfrac{\sigma^2\sr^2}{\sigma^2\sr^2 + 1}, \infty)
		\end{cases}
 \end{align*}
 
We plot this in Figure \ref{fig:ContractionNormal} (a). The backward transport localizes completely as $\sr \to 0$ (or equivalently $ t \to \infty$). 
 
For any $\sr$, we can find $\delta \in [0, 1]$ such that $m_\mu(\delta, \sr) = 0$. It follows, 
    \begin{align*}
        m^\ast(\sr) = 0, \quad \delta^\ast(\sr) = \frac{1}{\sigma^2\sr^2+1}, \quad \zeta_\infty^\ast(t) = \frac{1}{1+e^{-2t}(\sigma^2 - 1)}.
    \end{align*}
We confirm a variety of $\sigma^2$ choices in Figure \ref{fig:ContractionNormal} (b) and (c). 
\begin{figure}[H]
\centering
\begin{subfigure}[b]{0.45\textwidth}
	    \includegraphics[width=\textwidth]{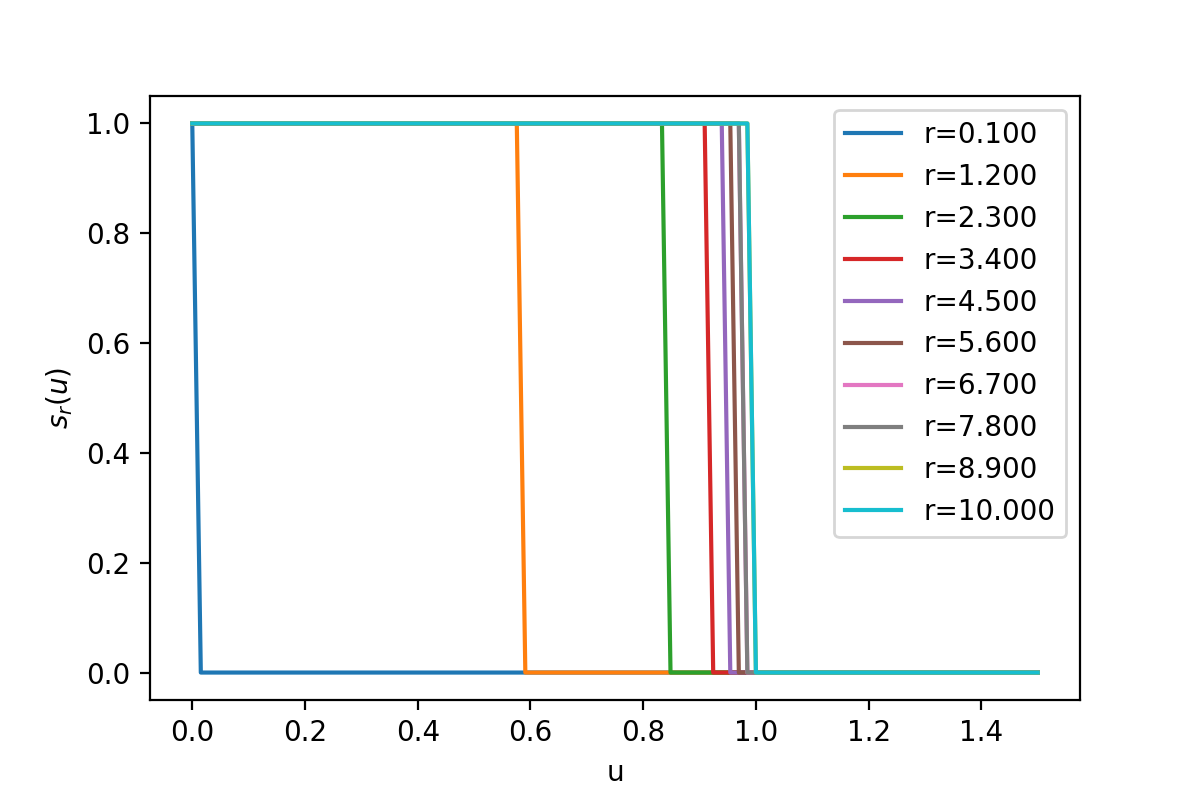}
     \caption{}
	  \end{subfigure}
   \begin{subfigure}[b]{0.45\textwidth}
	    \includegraphics[width=\textwidth]{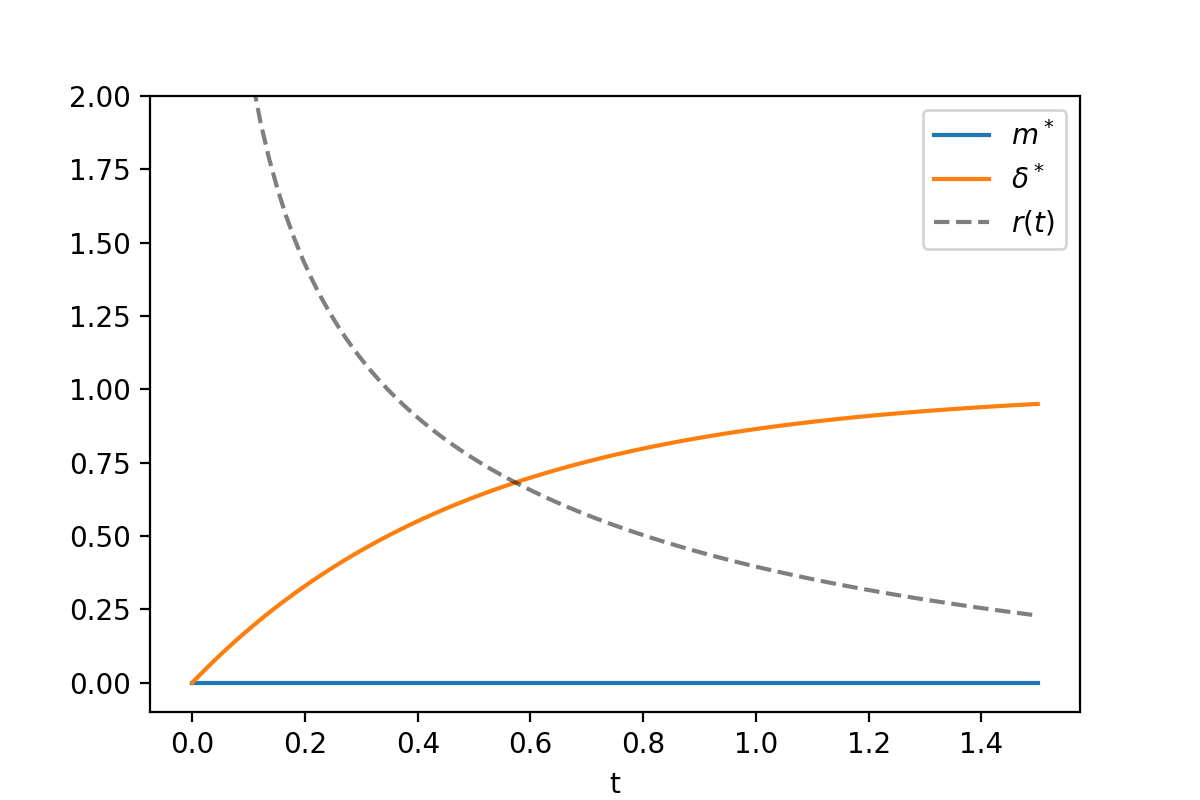}
     \caption{}
	  \end{subfigure}\\
	  \begin{subfigure}[b]{0.45\textwidth}
	    \includegraphics[width=\textwidth]{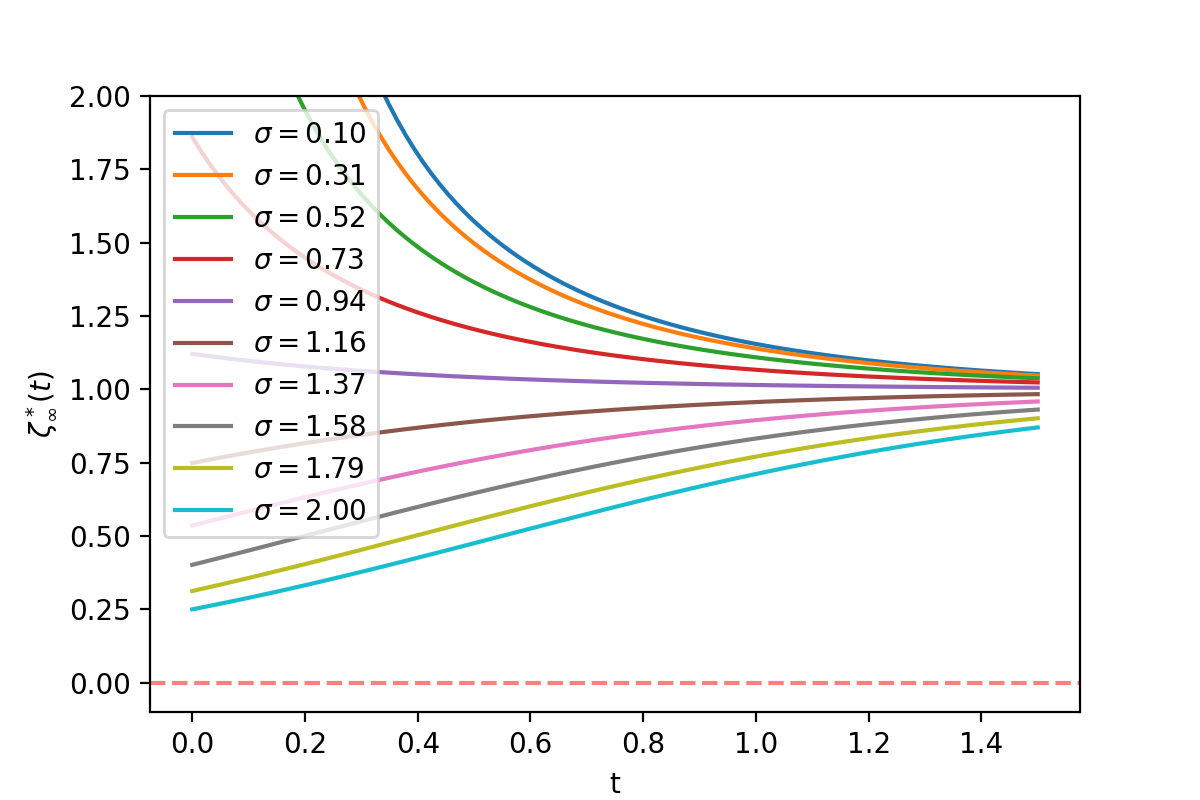}	
     \caption{}
	  \end{subfigure}

   \caption{(a) $s_\sr(u)$. (b) $m^\ast(\sr),\delta^\ast(\sr)$. (c) $\zeta^\ast(t)$. }
   \label{fig:ContractionNormal}
\end{figure} 
\end{example}

\begin{example}[Uniform]
\rm
Consider the case when the target distribution is uniform $\mu_0=\mathrm{Unif}(-1, 1)$.
As before, we gauge the localization via the random covariance of the backward transport, $\bL_{\sr}$.
	\begin{align*}
	\nabla^2 \log p_{\bY_\sr}(y) & = \Bigl[\frac{-(y+\sr) \phi(y+\sr) + (y-\sr) \phi(y-\sr)}{\Phi(y+\sr) - \Phi(y-\sr)} - \left( \frac{ \phi(y+\sr) -  \phi(y-\sr)}{\Phi(y+\sr) - \Phi(y-\sr)} \right)^2 \Bigr] \;,\\
    L_\sr(Y_\sr) &= 1 + \nabla^2 \log p_{Y_\sr}(Y_\sr) \;.
 	\end{align*}
  
Based on this characterization we simulate $s_{\sr}(u), \ m^\ast(\sr), \  \delta^\ast(\sr), \ \zeta^\ast_\infty(t)$. As expected we see, 
\begin{align*}
    s_{\sr}(1) = 0, \quad m^\ast(\sr) = 0, \quad \zeta^\ast_\infty(t) >0.
\end{align*}

We recover the same qualitative behavior as in the preceding examples (point mass, Gaussian). However, the fine-grained behavior is remarkably different; see Figure~\ref{fig:ContractionUniform} (c). Note that though $\zeta^\ast_\infty(t) >0$, we have the non-monotonic behavior of the effective contract at different time scales. The slowest effective contraction is happening in some small time scales.

The uniting technicality that allows this is:
\begin{align*}
    \bL_{\sr} \leq 1, \iff s_\sr(1) = 0, \quad \forall \sr\geq 0.
\end{align*}
By Proposition \ref{prop:curvature-localization}, this is exactly saying that the curvature, $\nabla^2 \log p_{\bY_\sr}(y) \leq 0$.

\begin{figure}[H]
\centering
\begin{subfigure}[b]{0.45\textwidth}
	    \includegraphics[width=\textwidth]{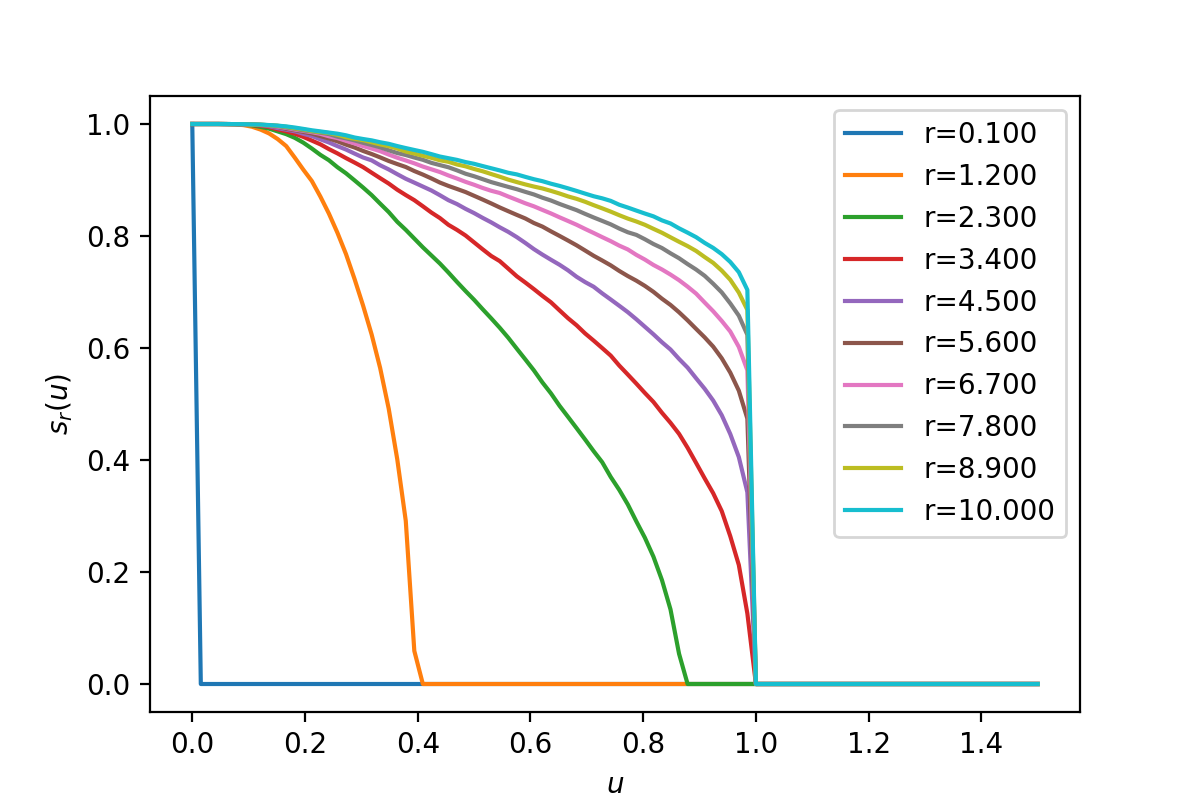}
     \caption{}
	  \end{subfigure} 
   \begin{subfigure}[b]{0.45\textwidth}
	    \includegraphics[width=\textwidth]{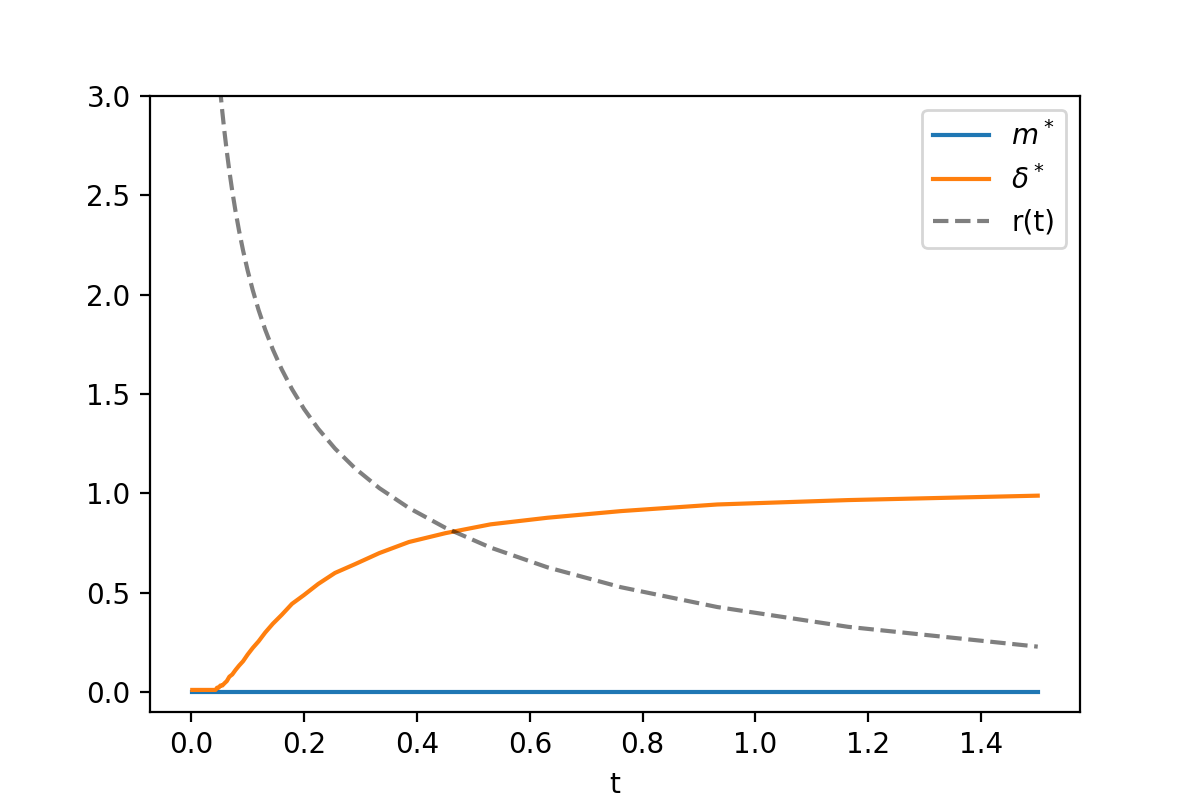}
		\caption{}
	  \end{subfigure}
	  \\
	  \begin{subfigure}[b]{0.45\textwidth}
	    \includegraphics[width=\textwidth]{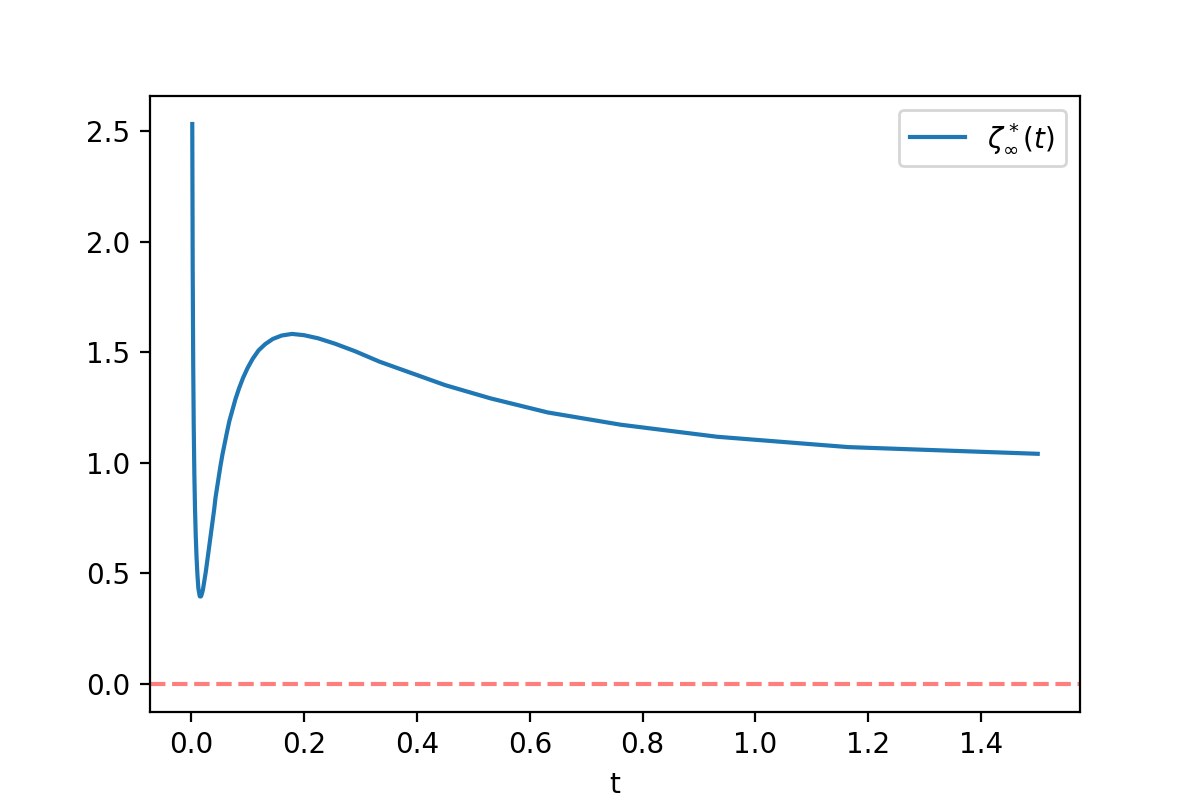}	
		\caption{}
	  \end{subfigure}
   \caption{(a) $s_\sr(u)$. (b) $m^\ast(\sr),\delta^\ast(\sr)$. (c) $\zeta^\ast(t)$.}
   \label{fig:ContractionUniform}
\end{figure}

\end{example}

\subsection{Beyond Log-Concavity}
With general non-log-concave $\mu$ as initialization, the Ornstein-Uhlenbeck process at time $t$ may not have log-concavity. Indeed, $\nabla^2 \log p_{\bY_\sr}(y)$ is not uniformly upper bounded by $0$, or equivalently, the random covariance $\bL_{\sr}$ cannot be uniformly bounded from above by $1$. We separate from the regime of the previous three examples, all of which exploited this property to show contraction at all scales $\sr$.

It is important to understand (i) when the backward chain enters this non-log-concave regime, and (ii) the expansion behavior in this regime. To that end, we make use of the following notation: 
\begin{align*}
    \sr^\ast = \max\{\sr : s_\sr(1) = 0\}.
\end{align*}

\begin{example}[Two Point Mass]
\rm
We consider $X_0 \sim \frac{1}{2} \delta_{-\mu} + \frac{1}{2} \delta_{\mu}$ for some $\mu > 0$.
We assess the localization via the random variance, $\bL_{\sr}$. 
 \begin{align*}
     L_\sr(y) &= (\sr\mu)^2 \Bigl[1-\Bigl(\frac{\phi(y-\sr\mu )-\phi(y+\sr\mu )}{\phi(y-\sr\mu ) + \phi(y+\sr\mu )}\Bigr)^2\Bigr], \quad \sup_y~ L_\sr(y) = L_\sr(0) = (\sr\mu)^2 . \label{eqn:LTwoPoint}
 \end{align*}
 We see immediately, $\sr^\ast = \mu^{-1}$. We can explicitly calculate $s_\sr(u)$.
\begin{align}
s_{\sr}(u) 
  =~ &\Phi\left(\frac{1}{2(\sr \mu)}  \log \frac{\sr \mu + \sqrt{(\sr \mu)^2 - u}}{\sr \mu - \sqrt{(\sr \mu)^2 - u}} + \sr \mu \right)  +\Phi\left(\frac{1}{2(\sr \mu)}  \log \frac{\sr \mu + \sqrt{(\sr \mu)^2 - u}}{\sr \mu - \sqrt{(\sr \mu)^2 - u}} - \sr \mu \right) - 1 \\ 
s_\sr(u) =~ &0, \quad u \in [(\sr\mu)^2, \infty).\label{eqn:sr_uniform}
\end{align}
	
 (i) $(\sr \leq \mu^{-1})$: In this setting, we expect log-concave behavior. By \eqref{eqn:sr_uniform}, 
\begin{align*}
    m^\ast(\sr) = 0, \quad \delta^\ast(\sr) = 1 - (\sr\mu)^2, \quad \zeta^\ast_\infty(t) = \frac{1-(\sr(t)\mu)^2}{1-e^{-2t}} \stackrel{t \to \infty}{\to} 1.
\end{align*}

(ii) $(\sr \to \infty)$: For extremely high SNR, we can see $s_\sr(u) \to 0$. This phenomenon is qualitatively the same as log-concavity and can be rationalized as essentially recovering the single-point mass behavior in the limit. In particular, 
\begin{align*}
    m^\ast(\sr) \to 0, \quad \delta^\ast(\sr) \to 1, \quad \zeta^\ast_{\infty}(t) \to \infty. 
\end{align*}

(iii) Contraction should be hardest at mid-range SNR, where we no longer have log-concavity. The behavior is simulated and displayed in Figure \ref{fig:ContractionTwoPoint} (c). In this setting, we can tolerate a complexity $M \leq \frac{1}{m^\ast(\sr)} < \infty$ in order for non-expansion $\zeta^\ast_M(t) \geq 0$. This is visualized in Figure \ref{fig:ContractionTwoPoint} (d). Note the non-monotonic behavior of the curvature complexity $\zeta^\ast_M(t)$: there seems to be a mid-range of time at which the diffuse-then-denoise chain could expand, when the type of perturbation is complex with large $M$.
    
The conclusion is not pessimistic. For most of the process, regardless of $M$, $\zeta^\ast_M(t)$ is positive and contraction occurs. We remind the reader that $M$ is a proposed upper bound in Definition~\ref{def:M}. In our numerical examples, this is typically small. Refer to Figure \ref{fig:DiffusionTwoPoint} to see the success of diffuse-then-denoise in this case. 
 \begin{figure}[H]
\centering
\begin{subfigure}[b]{0.45\textwidth}
	    \includegraphics[width=\textwidth]{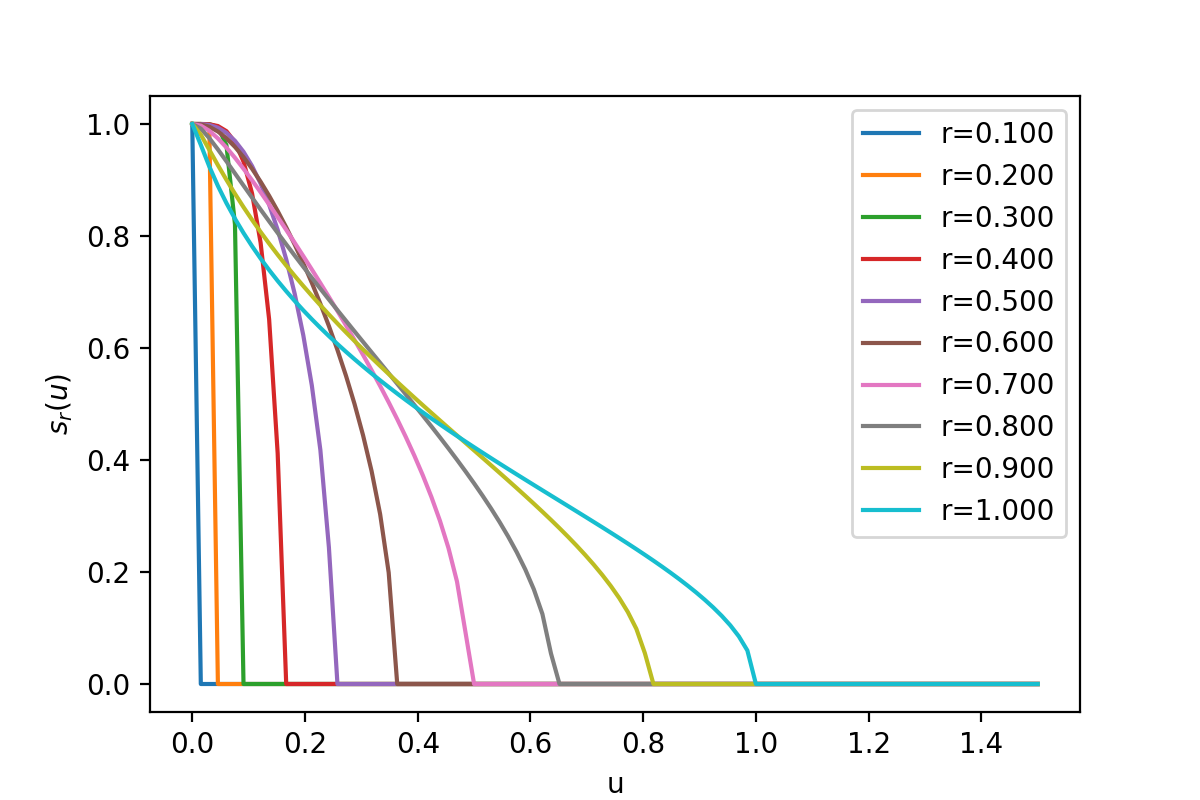}
		\caption{}
	  \end{subfigure}
   %\hfill
	  \begin{subfigure}[b]{0.45\textwidth}
	    \includegraphics[width=\textwidth]{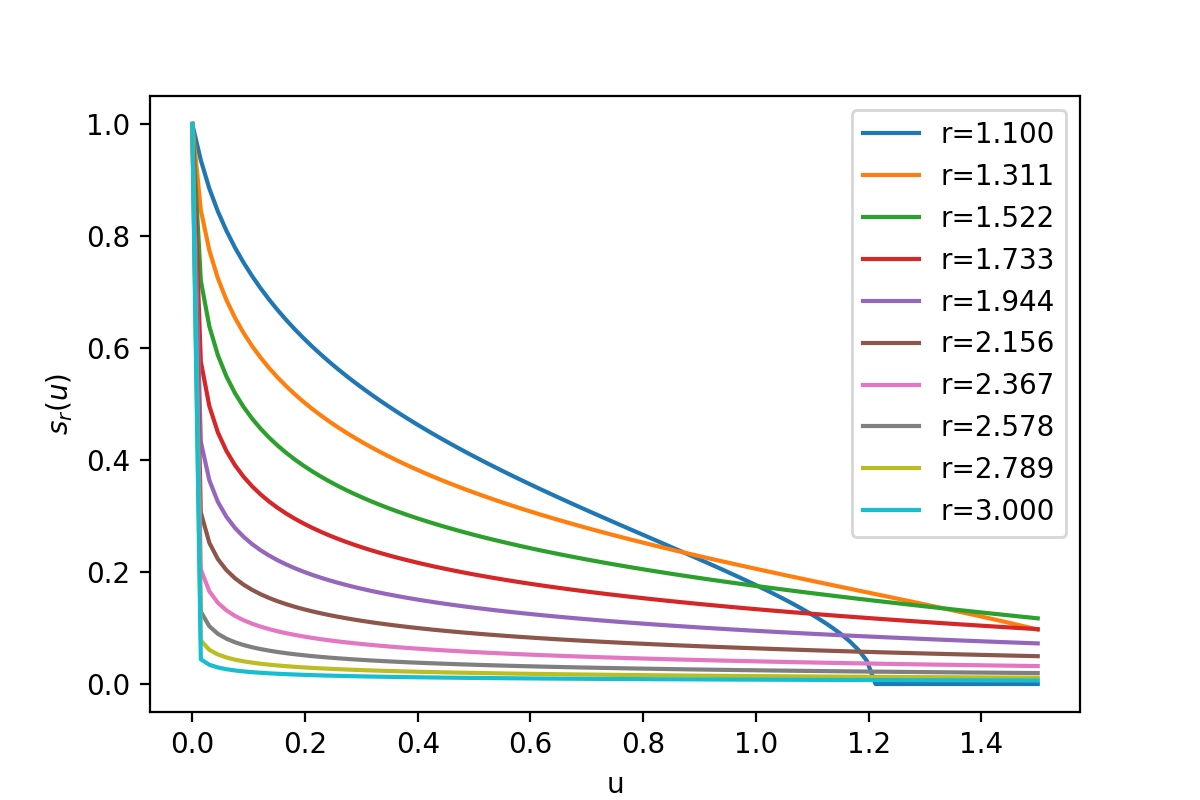}
		\caption{}
	  \end{subfigure}\\
   \begin{subfigure}[b]{0.45\textwidth}
	    \includegraphics[width=\textwidth]{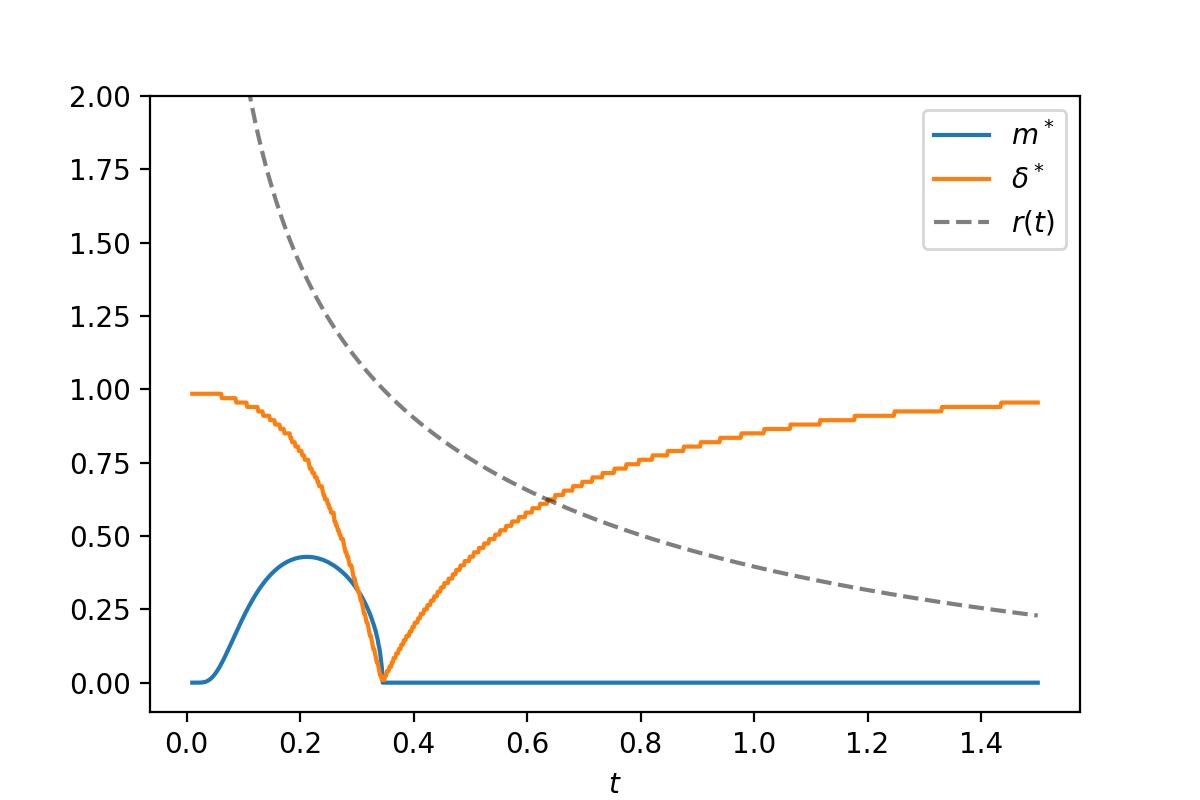}
		\caption{}
	  \end{subfigure}
	  %\hfill
	  \begin{subfigure}[b]{0.45\textwidth}
	    \includegraphics[width=\textwidth]{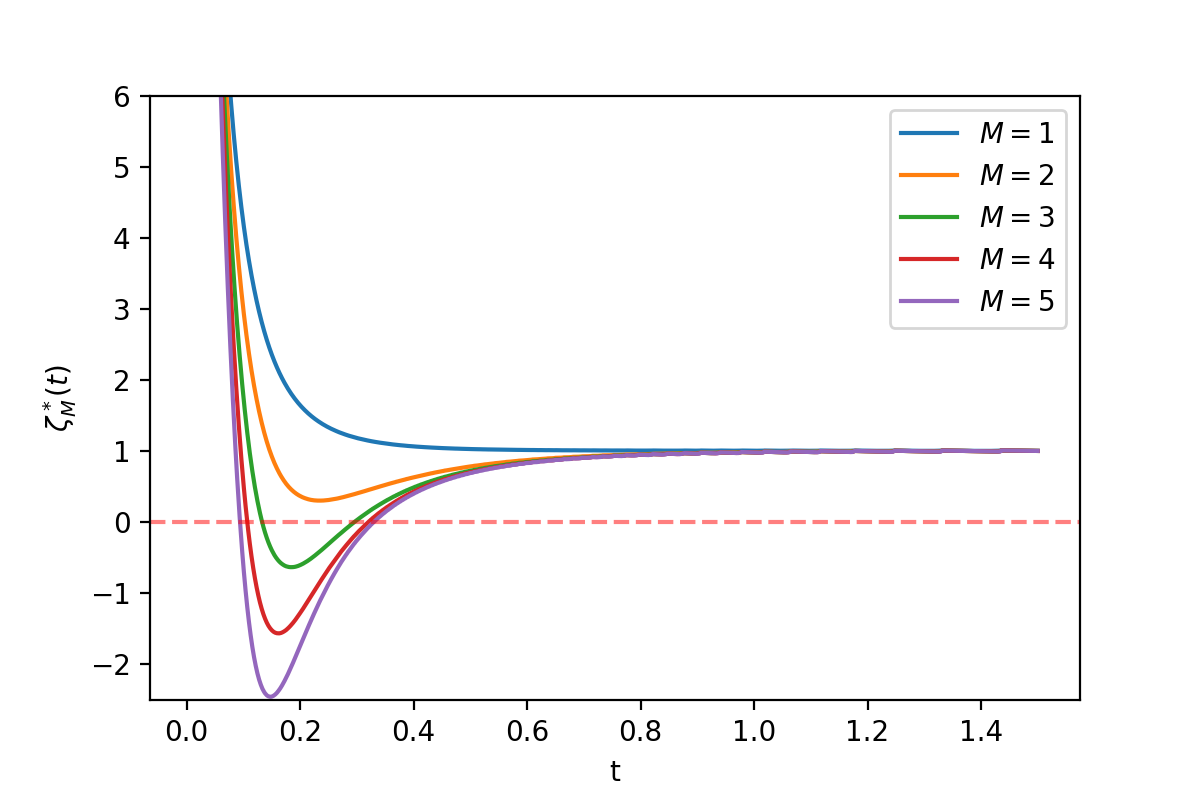}	
		\caption{}
	  \end{subfigure}
   \caption{(a) $s_\sr(u)$ Low SNR. (b) $s_\sr(u)$ High SNR.  (c) $m^\ast(\sr),\delta^\ast(\sr)$. (d) $\zeta^\ast_M(t)$.}
   \label{fig:ContractionTwoPoint}
\end{figure}

 \begin{figure}[H]
 \centering
    \begin{subfigure}[b]{0.48\textwidth}
	    \includegraphics[width=\textwidth]{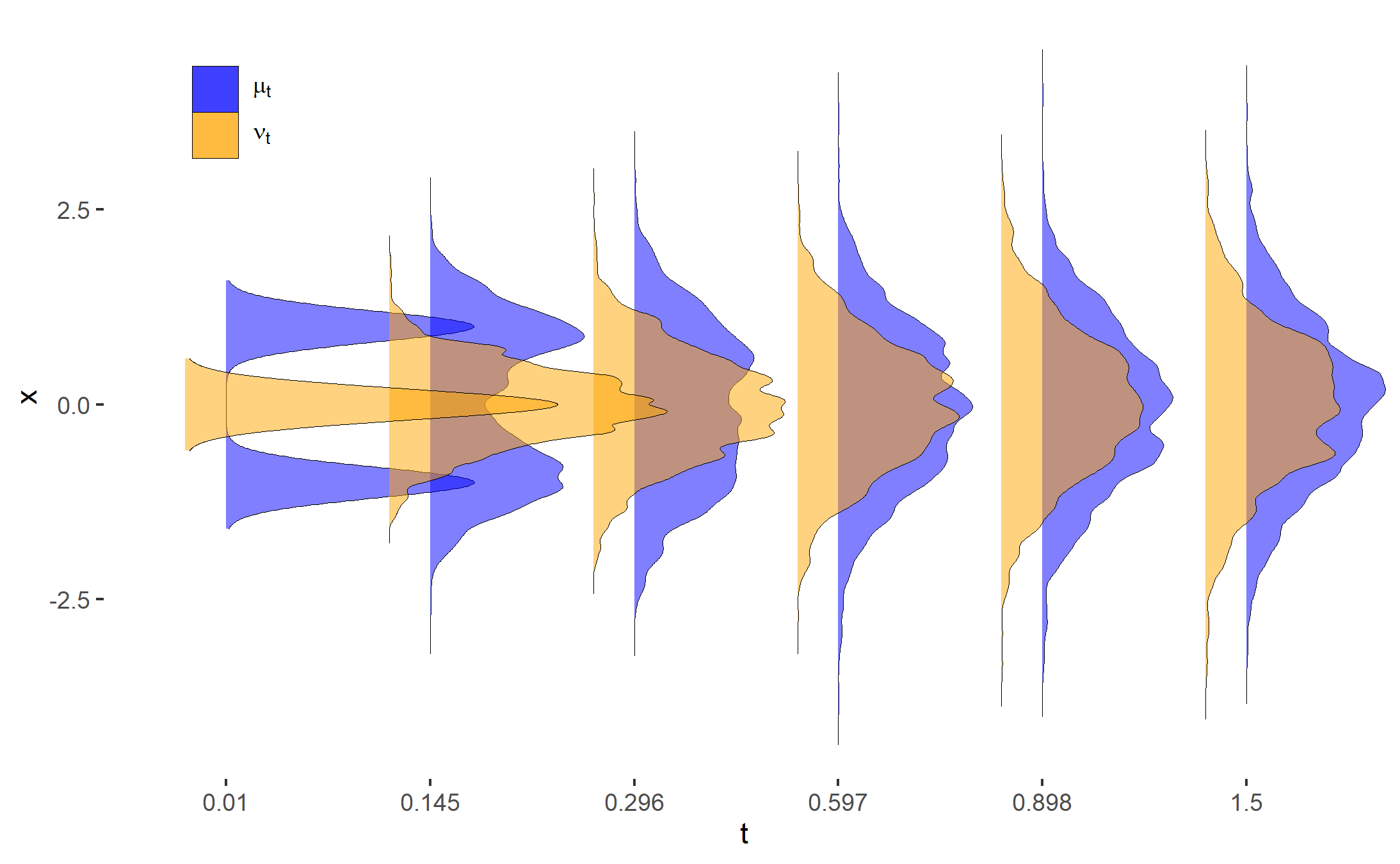}
		\caption{Forward Diffusion via OU}
		\label{fig:lo_snr}
	  \end{subfigure}
	  \begin{subfigure}[b]{0.48\textwidth}
	    \includegraphics[width=\textwidth]{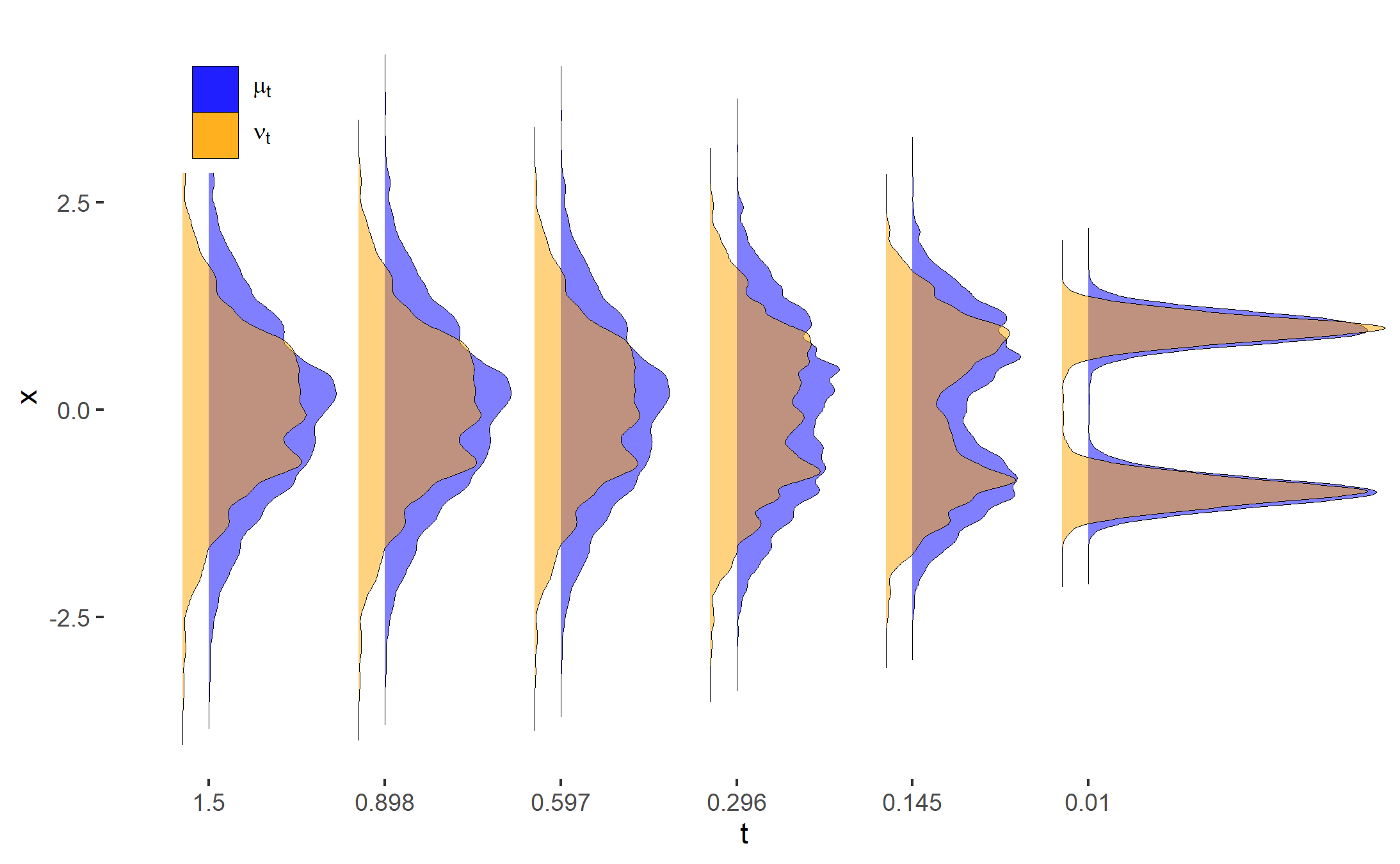}
		\caption{Backward Transport via OT}
		\label{fig:hi_snr}
	  \end{subfigure}
   \caption{(a) Forward diffusion initialized at $\mu_0 = 0.5 \delta_1 + 0.5 \delta_{-1}, \ \nu_0 = \delta_0$. $T=100, \ \eta = 0.01$. (b) Backward transport initialized at $\mu_T$, $\nu_T$, and applying the backward OT map $\bb^{\mu, \eta}$. }
   \label{fig:DiffusionTwoPoint}
	\end{figure}
\end{example}

\begin{example}[Mixture of Point Mass and Gaussian] 
\rm
\begin{figure}[H]
\centering
\begin{subfigure}[b]{0.45\textwidth}
	    \includegraphics[width=\textwidth]{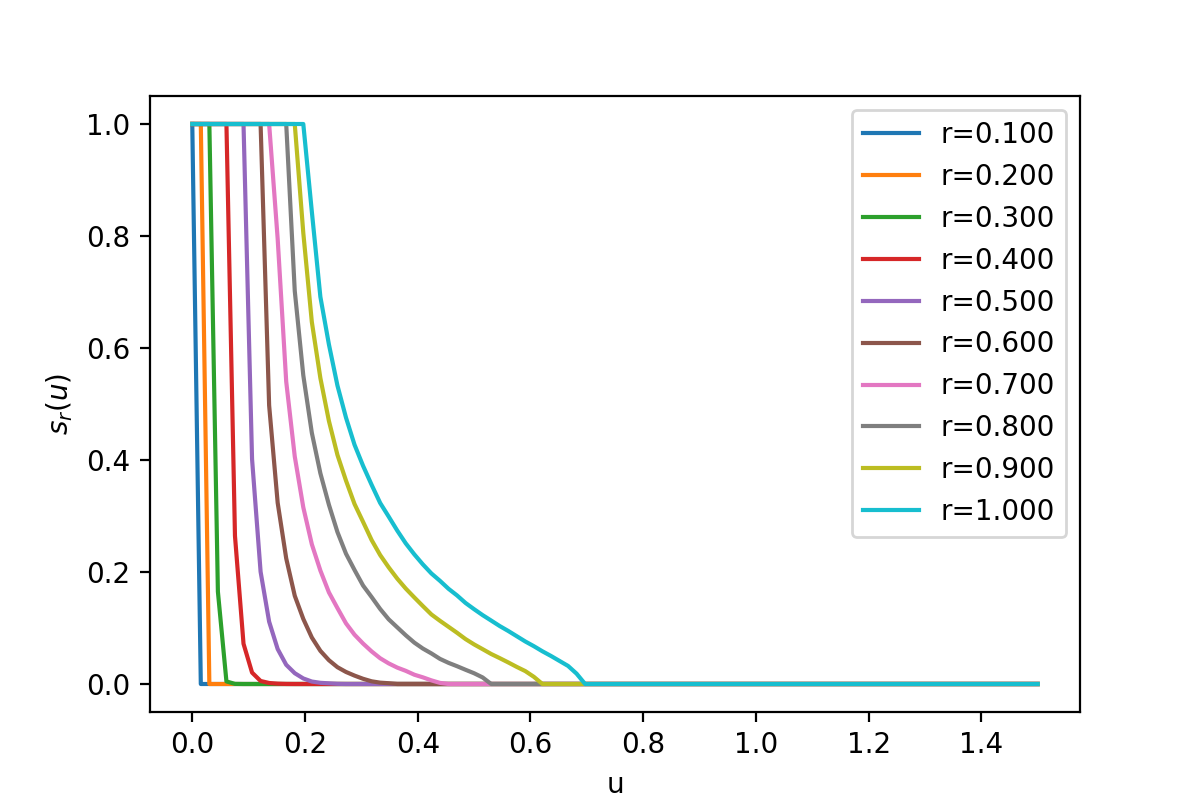}
		\caption{}
	  \end{subfigure}
   %\hfill
	  \begin{subfigure}[b]{0.45\textwidth}
	    \includegraphics[width=\textwidth]{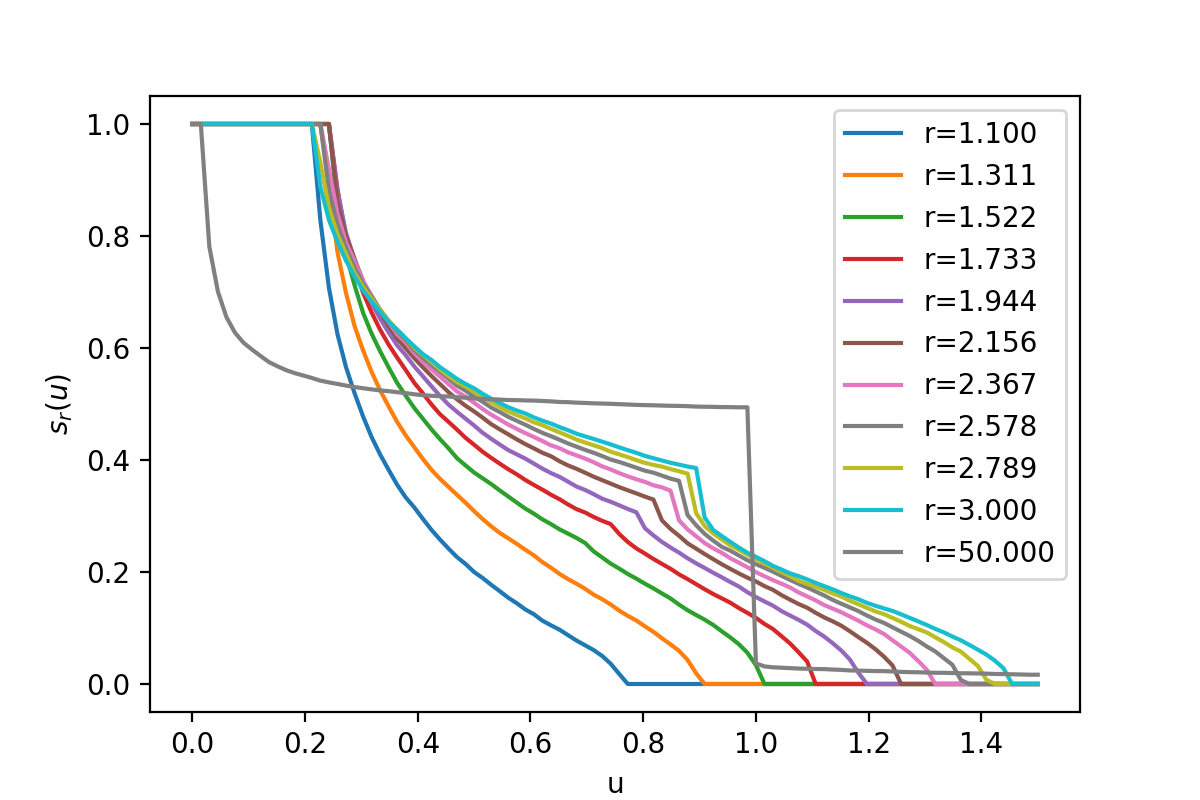}
		\caption{}
	  \end{subfigure}\\
   \begin{subfigure}[b]{0.45\textwidth}
	    \includegraphics[width=\textwidth]{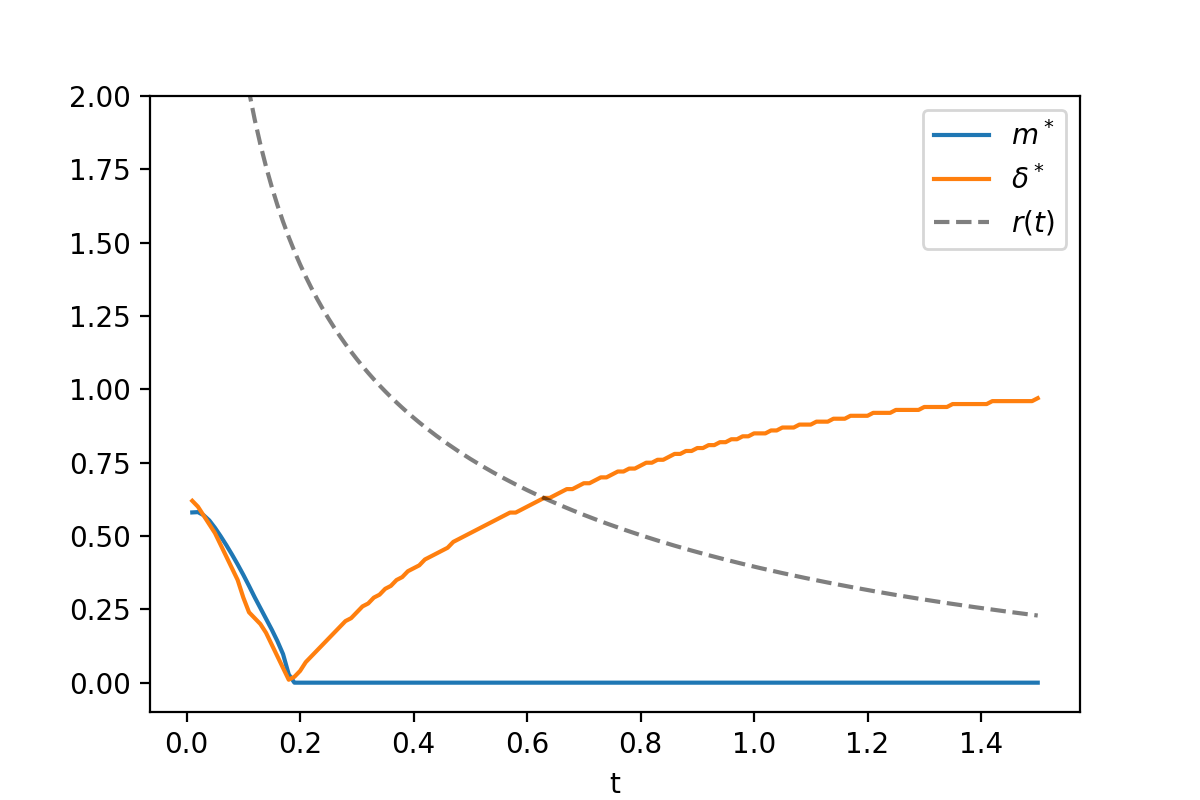}
		\caption{}
	  \end{subfigure}
	  %\hfill
	  \begin{subfigure}[b]{0.45\textwidth}
	    \includegraphics[width=\textwidth]{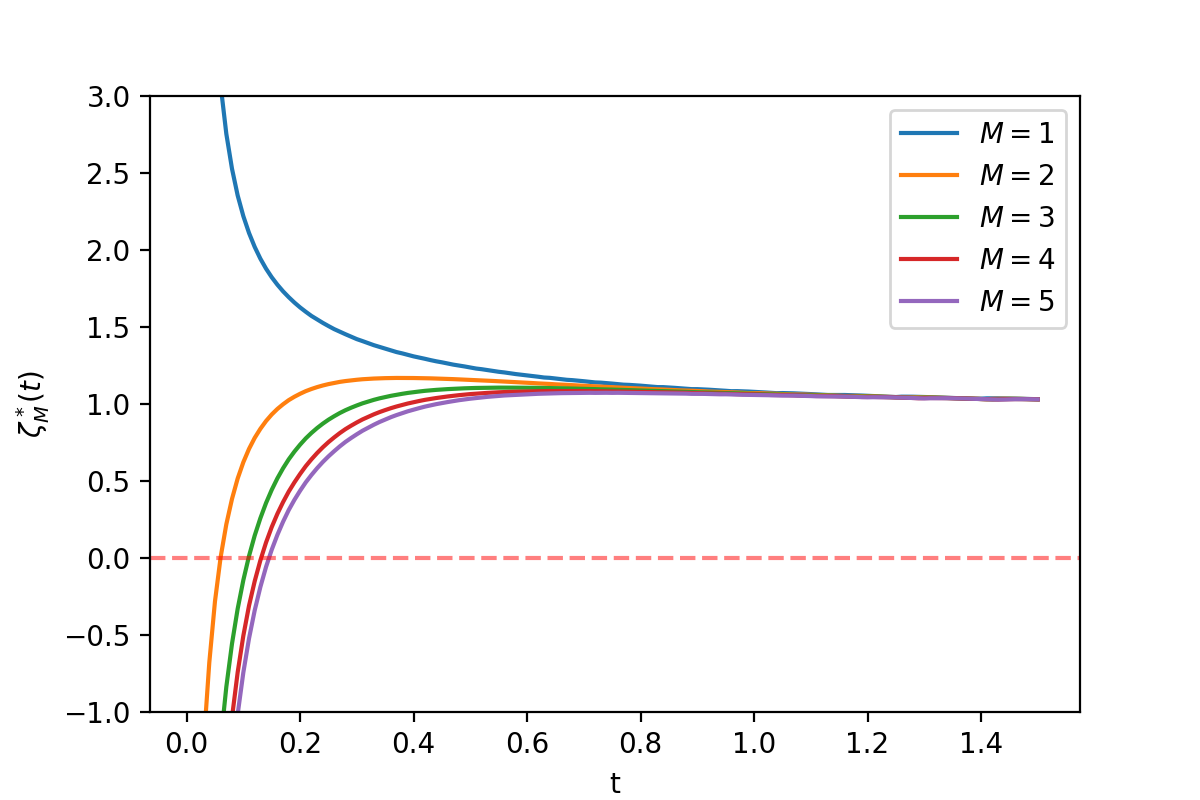}	
		\caption{}
	  \end{subfigure}
   \caption{(a) $s_\sr(u)$ Low SNR. (b) $s_\sr(u)$ High SNR.  (c) $m^\ast(\sr),\delta^\ast(\sr)$. (d) $\zeta^\ast_M(t)$.}
   \label{fig:ContractionPointGauss}
\end{figure}

We now consider diffuse-then-denoise for the target distribution; $ \tfrac{1}{2}\delta_0 + \tfrac{1}{2}\mathcal{N}(0, 1)$.
Let $v = \sqrt{\sr^2 + 1}$. We first calculate the covariance,
\begin{align*}
     L_\sr(y)=1 + \frac{v^{-3}\left(y^2/v^2-1\right)\phi\left(y/v\right) + (y^2-1)\phi(y)}{v\phi\left(y/v\right) + \phi(y)} - \left( \frac{v^{-3}y\phi\left(y/v\right) + y\phi(y)}{v\phi\left(y/v\right) + \phi(y)}\right)^2.
 \end{align*}
 We simulate the survival function $s_\sr(u) = \mathbb{P}(\bL_{\sr} > u) $ via Monte Carlo, see Figures \ref{fig:ContractionPointGauss} (a) and (b). A lack of log-concavity is clear for high SNR.
 
 (i) $(\sr\leq \sr^\ast)$: We expect $m^\ast(\sr) = 0$. While we don't have an explicit form for $\sr^\ast$, simulations in Figure \ref{fig:ContractionPointGauss} validate this. See also Monte Carlo-based simulations for $\delta^\ast, \ \zeta^\ast_\infty$.
 
(ii) $(\sr > \sr^\ast)$: We move away from the log-concave regime for large SNR (small $t$). We conjecture that as $r \to \infty$, $m^\ast(\sr) \to \frac{1}{2}$. To see this, note that $s_\sr(t)$ approaches a step function form with a step at $0.5$. In this setting, the behavior of $\zeta^\ast_M$ is captured in Figure \ref{fig:ContractionPointGauss} (d). We note that regardless of $M$, the range of time when the process effectively expands is minimal compared to the full process. 

Refer to Figure \ref{fig:DiffusionPointGauss} to see the success of diffuse-then-denoise in this case. 

\begin{figure}[H]
\centering
    \begin{subfigure}[b]{0.48\textwidth}
	    \includegraphics[width=\textwidth]{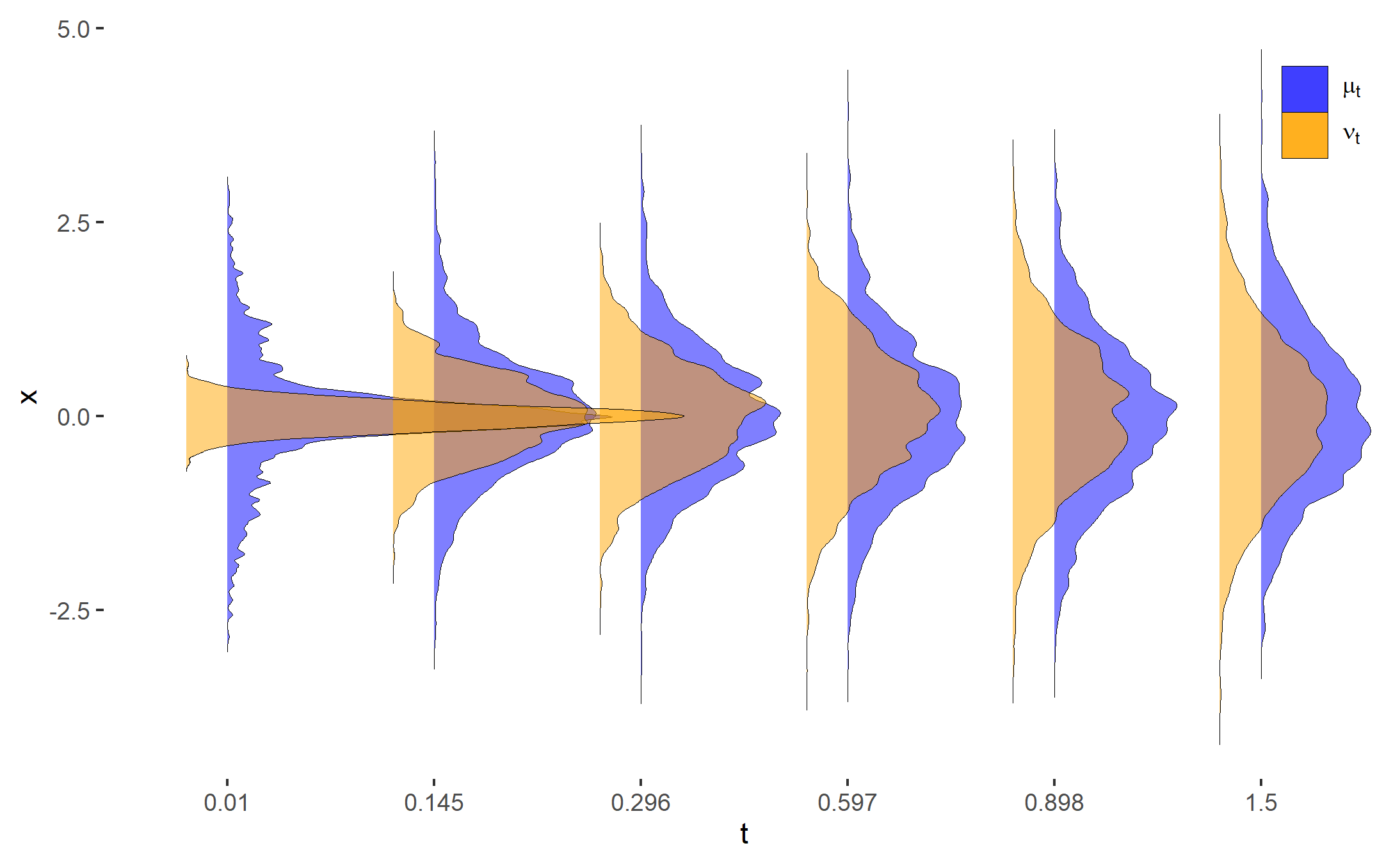}
		\caption{Forward Diffusion via OU}
	  \end{subfigure}
	  \begin{subfigure}[b]{0.48\textwidth}
	    \includegraphics[width=\textwidth]{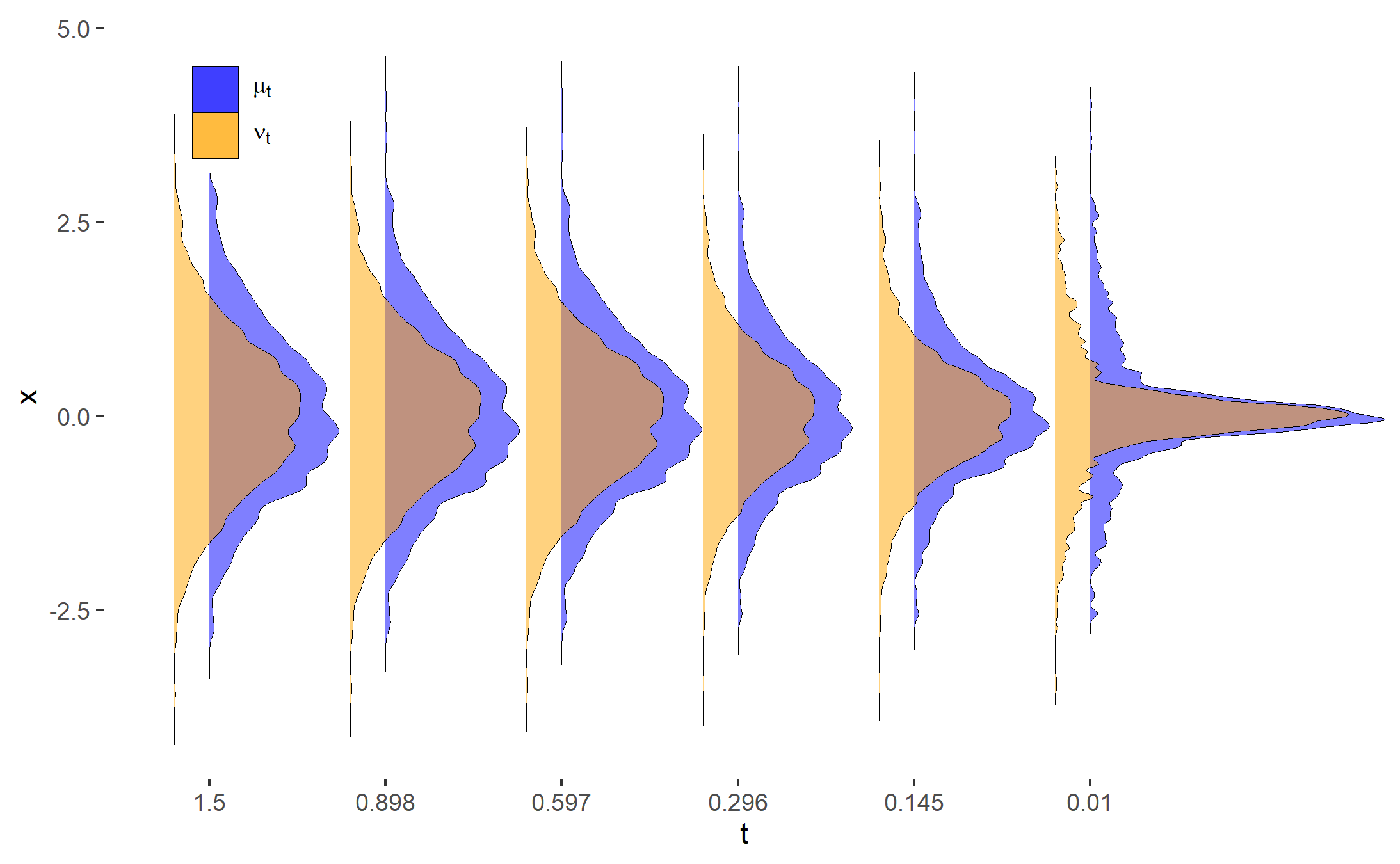}
		\caption{Backward Transport via OT}
	  \end{subfigure}
   \caption{(a) Forward diffusion initialized at $\mu_0 = 0.5 \delta_0 + 0.5 \mathcal{N}(0,1), \ \nu_0 = \delta_0$. $T=100, \ \eta = 0.01$. (b) Backward transport initialized at $\mu_T$, $\nu_T$, and applying the backward OT map $\bb^{\mu, \eta}$.  }
   \label{fig:DiffusionPointGauss}
	\end{figure}
\end{example}

\begin{example}[Mixture of Gaussian, Heterogeneous Variance] Consider the general Gaussian Mixture given by,
\rm
	\begin{align*}
		X_0 \sim \sum_{i=1}^m p_i \cdot \cN(\mu_i, \sigma_i^2)
	\end{align*}
In Figure \ref{fig:ContractionGaussMix}, we consider a specific formulation: $0.1 \mathcal{N}(-4, 1) + 0.2 \mathcal{N}(-2, 0.5) + 0.4 \mathcal{N}(2, 0.5) + 0.3 \mathcal{N}(4, 1)$. 
Defining $v_i = \sqrt{\sr^2 \sigma_i^2+1}, \ t_i = {v_i}^{-1}(y - \sr\mu_i) , \text{ for } i \in [m]$, we derive the conditional covariance: 
	\begin{align}
		\label{eqn:empirical}
		L_\sr(y) = 1 + \frac{\sum_i p_i v_i^{-3} (t_i^2-1) \phi(t_i)}{\sum_i p_i v_i \phi(t_i)} - \left( \frac{ - \sum_i p_i v_i^{-2} t_i \phi(t_i)}{\sum_i p_i v_i \phi(t_i)} \right)^2
	\end{align}
The survival function $s_\sr(u) = \mathbb{P}(\bL_{\sr} > u) $ is simulated via Monte Carlo and is displayed in Figures \ref{fig:ContractionGaussMix} (a) and (b). As soon as the SNR is not small, we are no longer in the log-concave setting.  

(i) $(\sr\leq \sr^\ast)$: We expect $m^\ast(\sr) = 0$. Simulations in Figure \ref{fig:ContractionGaussMix} show a flat $m^\ast$ for sufficiently large $r$. Even for mid-range SNR, we diverge from this log-concave setting.
 
(ii) $(\sr > \sr^\ast)$: The shape of $\zeta^\ast_M$ is  captured in Figure \ref{fig:ContractionGaussMix} (d). In contrast to the previous two cases, this region is not so narrow. For a chosen noise schedule, diffuse-then-denoise still seems successful for sampling in this setting, see Figure \ref{fig:DiffusionGaussMix}.

\begin{figure}[H]
\centering
\begin{subfigure}[b]{0.45\textwidth}
	    \includegraphics[width=\textwidth]{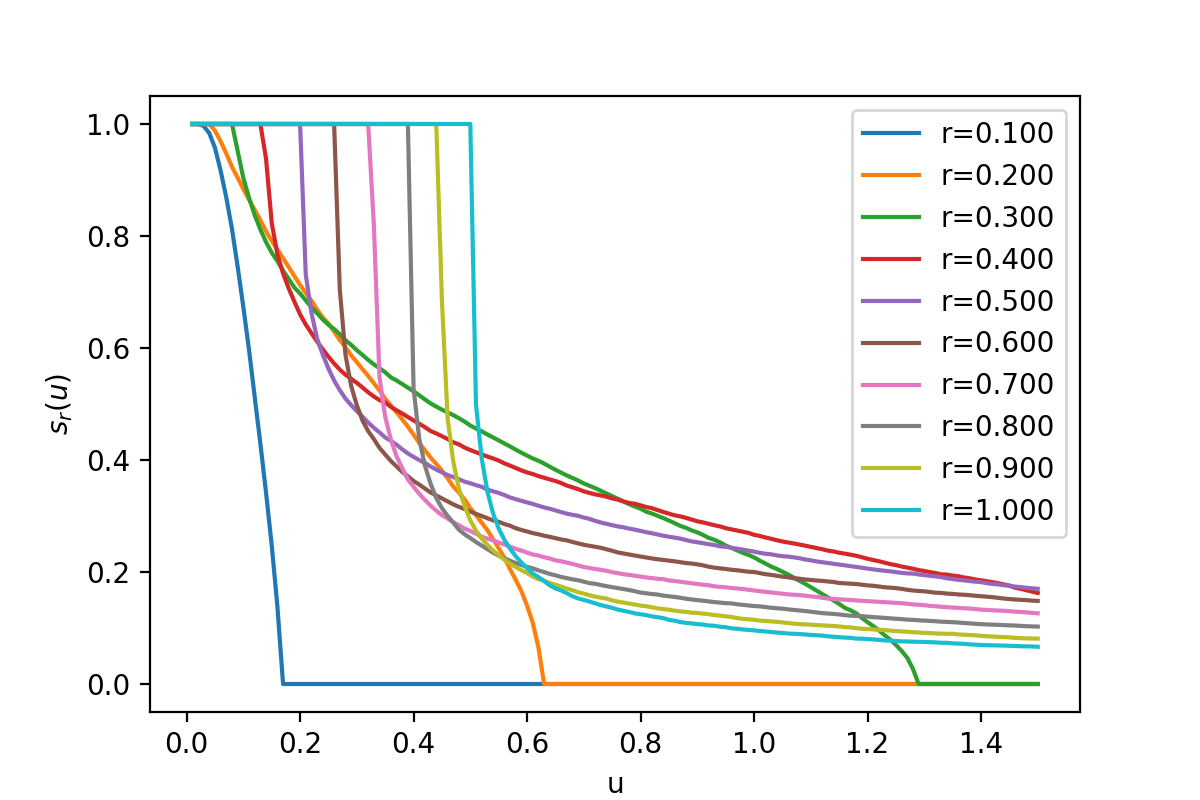}
		\caption{}
	  \end{subfigure}
   %\hfill
	  \begin{subfigure}[b]{0.45\textwidth}
	    \includegraphics[width=\textwidth]{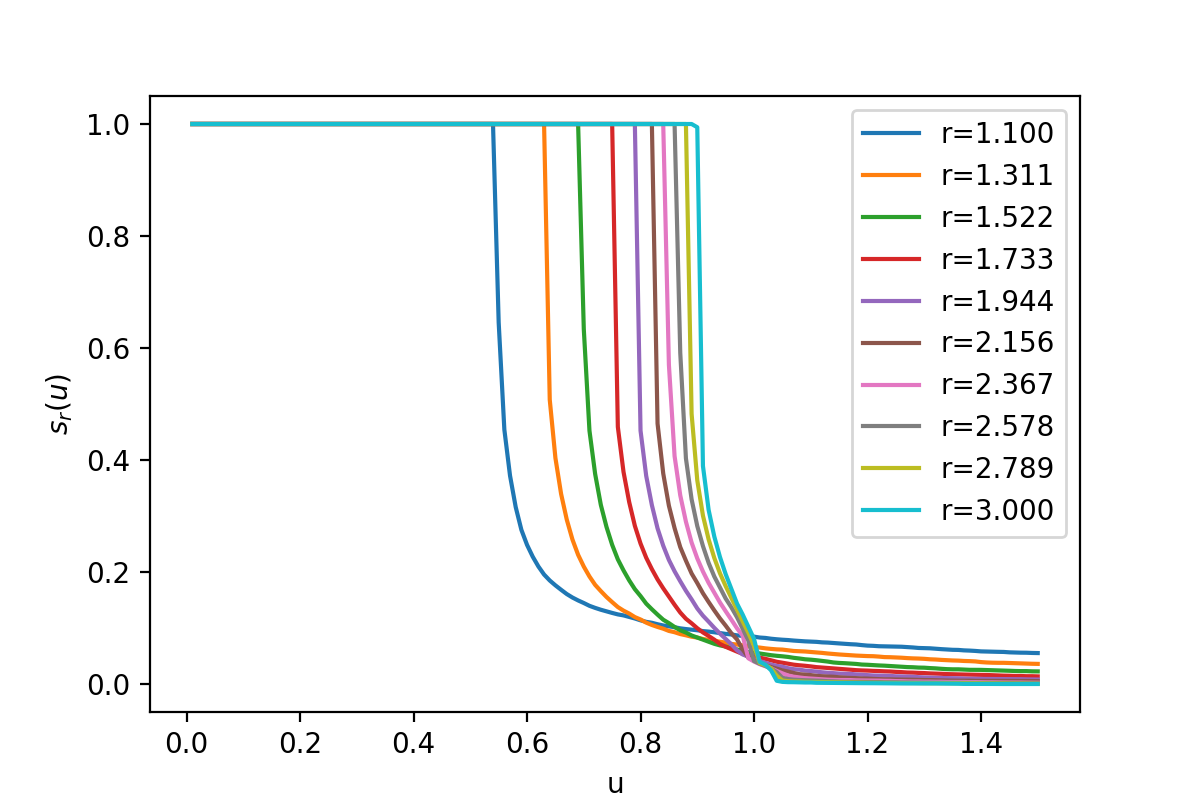}
		\caption{}
	  \end{subfigure}\\
   \begin{subfigure}[b]{0.45\textwidth}
	    \includegraphics[width=\textwidth]{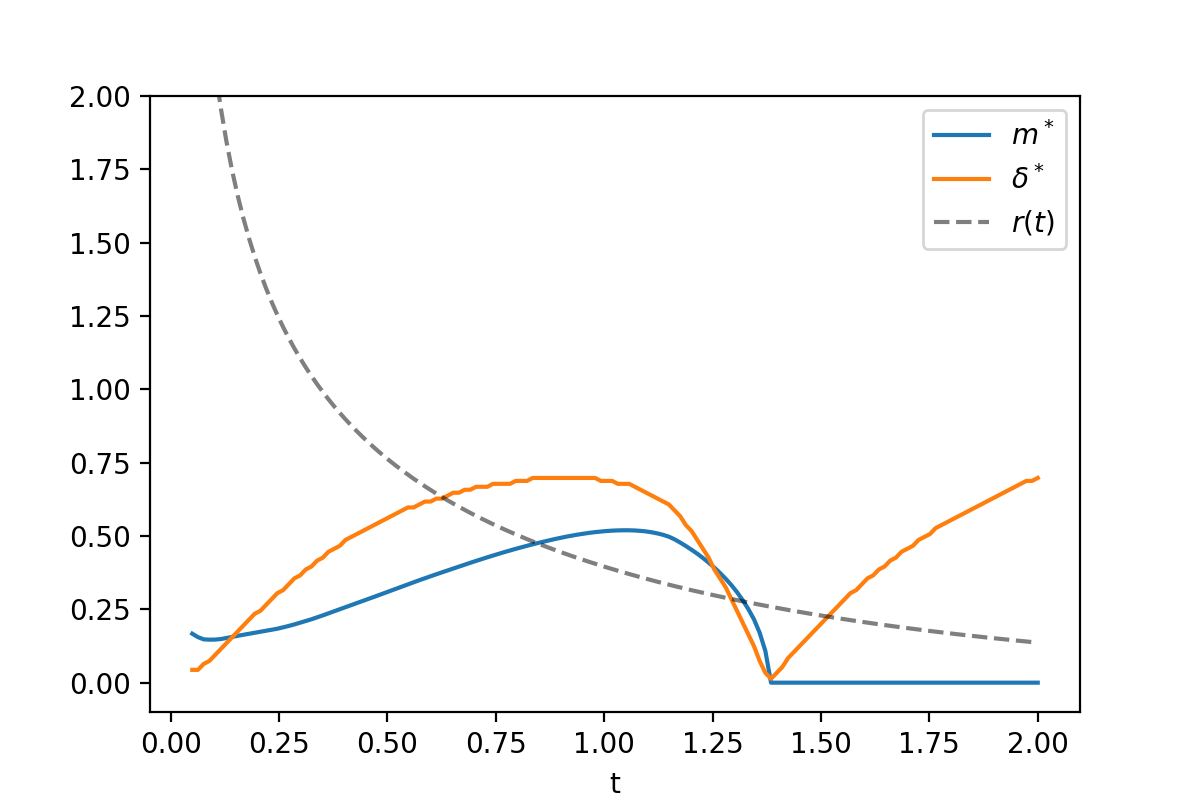}
		\caption{}
	  \end{subfigure}
	  %\hfill
	  \begin{subfigure}[b]{0.45\textwidth}
	    \includegraphics[width=\textwidth]{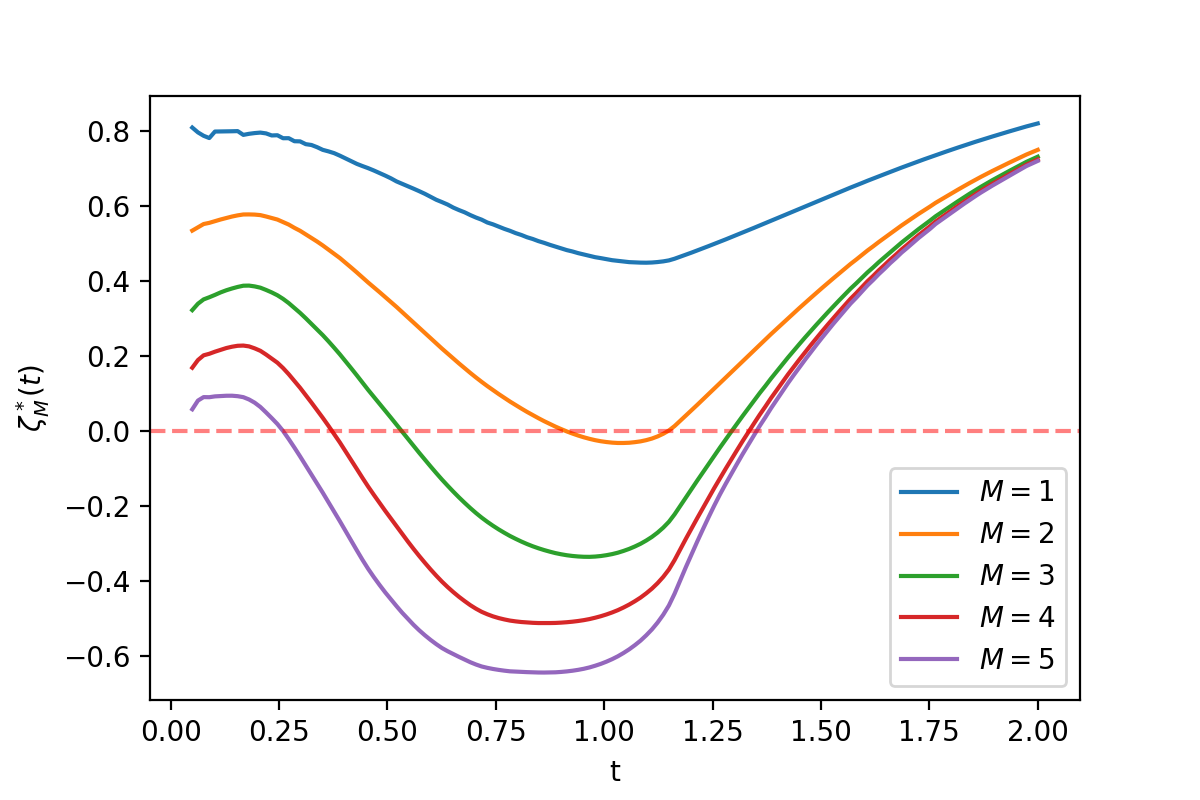}	
		\caption{}
	  \end{subfigure}
    
   \caption{(a) $s_\sr(u)$ Low SNR. (b) $s_\sr(u)$ High SNR.  (c) $m^\ast(\sr),\delta^\ast(\sr)$. (d) $\zeta^\ast_M(t)$.}
   \label{fig:ContractionGaussMix}
\end{figure}
 
    \begin{figure}[H]
    \centering
    \begin{subfigure}[b]{0.48\textwidth}
	    \includegraphics[width=\textwidth]{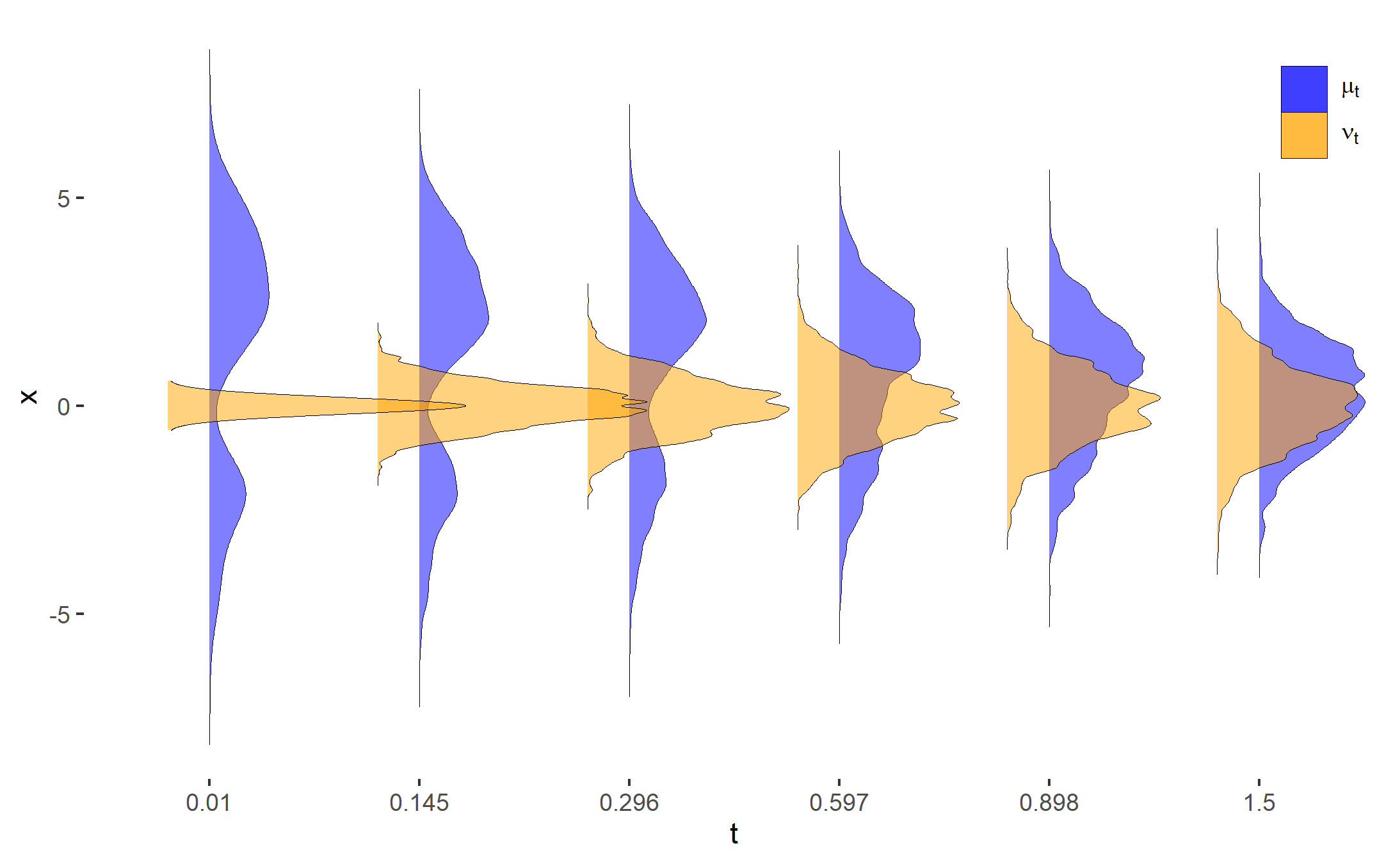}
		\caption{Forward Diffusion via OU}
		\label{fig:lo_snr}
	  \end{subfigure}
	  \begin{subfigure}[b]{0.48\textwidth}
	    \includegraphics[width=\textwidth]{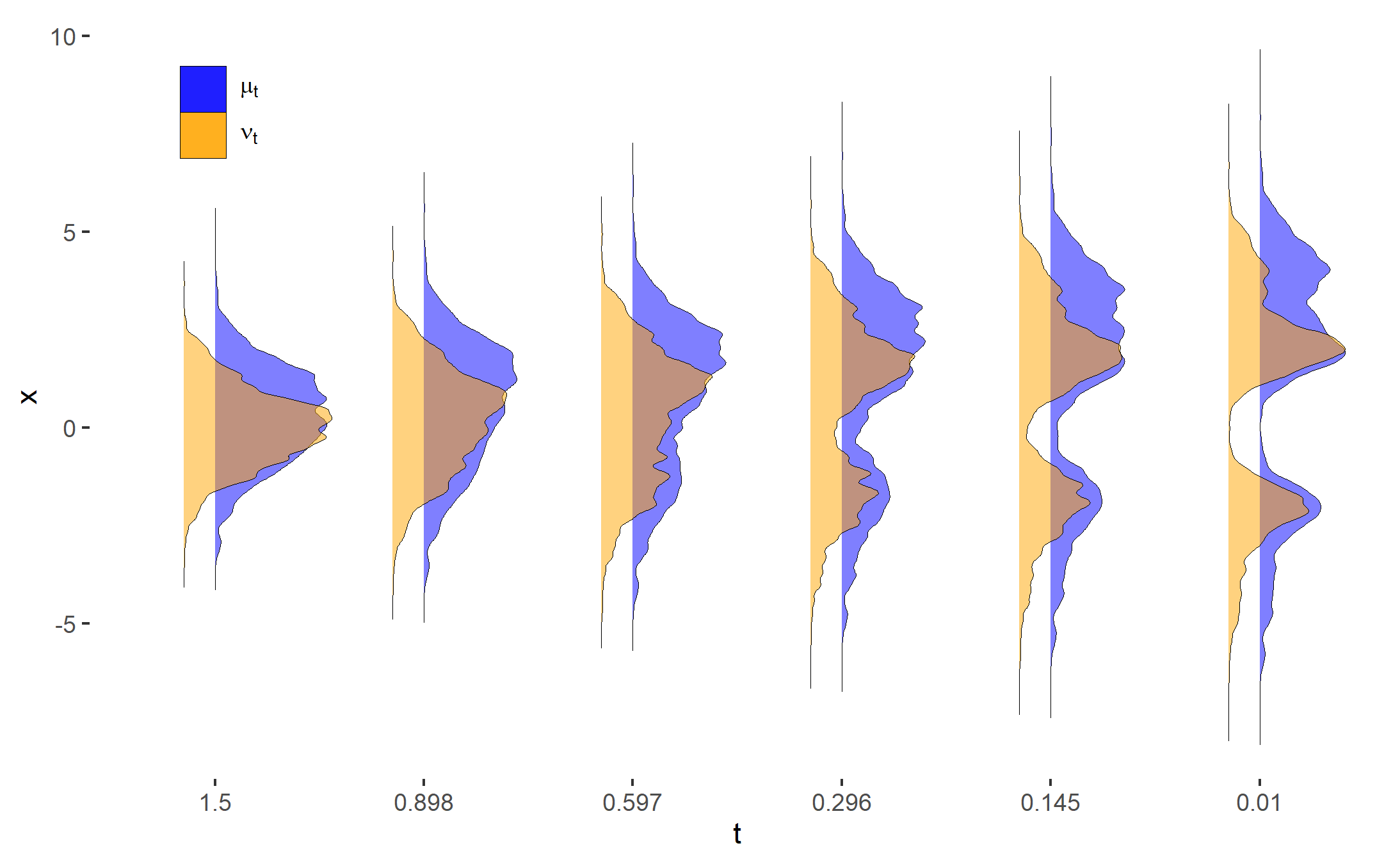}
		\caption{Backward Transport via OT}
		\label{fig:hi_snr}
	  \end{subfigure}
   \caption{(a) Forward diffusion initialized at $\mu_0 = 0.1 \mathcal{N}(-4, 1) + 0.2 \mathcal{N}(-2, 0.5) + 0.4 \mathcal{N}(2, 0.5) + 0.3 \mathcal{N}(4, 1), \ \nu_0 = \delta_0$. $T=100, \ \eta = 0.01$. (b) Backward transport initialized at $\mu_T$, $\nu_T$, and applying the backward OT map $\bb^{\mu, \eta}$. }
   \label{fig:DiffusionGaussMix}
	\end{figure}

\end{example}

\section*{Acknowledgements}
TL acknowledges the support from the NSF Career Award (DMS-2042473) and the Wallman Society of Fellows at the University of Chicago.

\bibliography{reference}
\bibliographystyle{tmlr}

%%%%%%%%%%%%%%%%%%%%%%%%%%%%%%%%%%%%%%%%%%%%%%
%% Single Appendix:                         %%
%%%%%%%%%%%%%%%%%%%%%%%%%%%%%%%%%%%%%%%%%%%%%%
\begin{appendix}
  %\section*{???}%% if no title is needed, leave empty \section*{}.
  
  \section{Proofs}
  \label{sec:proofs}
  %%%%%%%%%%%%%%%%%%%%%%%%%%%%%%
%%%%%%%%%%%%%%%%%%%%%%%%%%%%%%

The Appendix is organized as follows.

Section~\ref{sec:proof-score-curvature} gathers the technical preparations for denoising and localization, where we prove the results in Section~\ref{sec:score-curvature}. We derive the optimal transport denoising map and explain how the curvature function appears from a localization perspective.

Section~\ref{sec:proof-forward-backward} presents analyses of the forward and backward processes, where we prove the results in Section~\ref{sec:forward-backward}. We demonstrate how the curvature function influences backward denoising via a sensitivity analysis under the Wasserstein metric. We also study the diffuse-then-denoise process, showing that adding noise then denoising can be beneficial in the presence of negative curvature.

Section~\ref{sec:proof-multi-scale} provides proofs for the results in Section~\ref{sec:multi-scale}. We extend beyond the log-concave setting by defining and analyzing a multi-scale complexity that captures an average notion of curvature across all time scales. This multi-scale complexity plays a key role in the sensitivity analysis of the diffusion model under the Wasserstein metric.

\subsection{Proofs for Section~\ref{sec:score-curvature}}
\label{sec:proof-score-curvature}
\begin{proof}[Proof of Proposition~\ref{prop:score-ot-map}]
	By Lemma 10.1.2 in \citet{ambrosio2008gradient},
	\begin{align*}
		\frac{1}{\eta} (\bt^{\mu_{t}}_{\mu_{t+\eta}} - \bi) \in \partial \cG(\mu_{t+\eta})
	\end{align*}
	where $\partial \cG(\mu_{t+\eta})$ is the strong subdifferential. Recall that $\cG(\nu) = \cF(\nu) + \beta^{-1} \cE(\nu)$. For a given $\nu$ with density $\nu = \rho \cdot \cL^{d}$, by Lemma 10.4.1 in \citet{ambrosio2008gradient}, we know any $\boldsymbol{\xi}(x) \in L^2(\nu; \R^d)$ in $\partial \cG(\nu)$ admits the representation
	\begin{align*}
		 \boldsymbol{\xi}(x) = \nabla \fdv{\cG}{\rho}  = \nabla f(x) + \beta^{-1} \nabla \log \rho(x), ~\text{for $\nu$-a.e. $x \in \R^d$}\;.
	\end{align*}
	Take $\nu = \mu_{t+\eta}$, we finish the proof.
\end{proof}

\begin{proof}[Proof of Proposition~\ref{prop:score-denoising}]
This result follows from Tweedie's formula \citep{Robbins1992}.
The conditional distribution of $\bY = y$ given $\bX$ admits the following density: $p_{\bY}(y |\bX) := \frac{\dd}{\dd y}P(\bY \le y| \bX ) = \frac{1}{\sigma} \phi((y-\bX)/\sigma)$ where $\phi:\R^d\to\R$ is the density of $d$-dimensional standard normal. This implies
\begin{align}
	\label{eq:log_density_derivative}
	\nabla p_{\bY}(y |\bX)   =-\frac{1}{\sigma^2} (y-\bX) p_{\bY}(y |\bX)  \quad \text{and} \quad  \nabla p_{\bY}(y) = -\frac{1}{\sigma^2}\E [(y-\bX) p_{\bY}(y |\bX)  ].
\end{align}
Therefore, we obtain
\begin{align*}
	\frac{\nabla p_{\bY}(y)}{p_{\bY}(y)} &= - \frac{1}{\sigma^2} \mathbb{E} [(y-\bX) \frac{p_{\bY}(y |\bX)}{p_\bY(y)}]  = -\frac{1}{\sigma^2} \big\{  y-\E[\bX|\bY=y] \big\} \;,
\end{align*}
where we used the fact that $\mathbb{E}[h(\bX) \frac{p_{\bY}(y |\bX)}{p_\bY(y)}] = \E[h(\bX)|\bY=y]$ for any integrable function $h$ with respect to $\bX$.
This completes the proof.
\end{proof}

\begin{proof}[Proof of Proposition~\ref{prop:curvature-localization}]
Taking the derivative of \eqref{eq:log_density_derivative} with respect to $y$, we have
$$
\nabla^2 p_{\bY}(y|\bX) = \Bigl[- \frac{1}{\sigma^2}I_d  + \frac{1}{\sigma^4}(y-\bX)(y-\bX)^\top \Bigr] p_{\bY}(y|\bX) \;,
$$
and $\nabla^2 p_\bY(y) =
\mathbb{E} \Bigl[\bigl(- \frac{1}{\sigma^2}I_d  + \frac{1}{\sigma^4}(y-\bX)(y-\bX)^\top \bigr) p_{\bY}(y|\bX)\Bigr]$. This gives
\begin{align*}
	\frac{\nabla^2 p_{\bY}(y)}{p_\bY(y)} &=
	\mathbb{E} \Bigl[\bigl(- \frac{1}{\sigma^2}I_d  + \frac{1}{\sigma^4}(y-\bX)(y-\bX)^\top \bigr) \frac{p_{\bY}(y|\bX)}{p_\bY(y)}\Bigr] \;,\\
	&= -\frac{1}{\sigma^2} I_d +  \frac{1}{\sigma^4}\E[(y-\bX)(y-\bX)^\top|\bY=y] \;.
\end{align*}
Combined with the result $\frac{\nabla p_\bY(y)}{p_\bY(y)}= - \frac{1}{\sigma^2} (\E[y-\bX|\bY=y])$ from Proposition~\ref{prop:score-denoising}, we obtain
\begin{align*}
	\nabla^2 \log p_{\bY}(y) &= \frac{\nabla^2 p_{\bY}(y)}{p_\bY(y)} - \frac{\nabla p_\bY(y)}{p_\bY(y)} \Bigl(\frac{\nabla p_\bY(y)}{p_\bY(y)}\Bigr)^\top\\
	&= -\frac{1}{\sigma^2} I_d + \frac{1}{\sigma^4} \E[(y-\bX)(y-\bX)^\top|\bY=y] \\
    & \quad - \frac{1}{\sigma^4} (\E[y-\bX|\bY=y]) (\E[y-\bX|\bY=y])^\top\\
	&= - \frac{1}{\sigma^2} I_d + \frac{1}{\sigma^4} \Cov[y-\bX|\bY=y] \;.
\end{align*}
With $\Cov[y-\bX|\bY=y]=\Cov[\bX|\bY=y] = \sigma^2 \Cov[ \sigma^{-1}\bX|\bY]$, we complete the proof. 
\end{proof}

\begin{proof}[Proof of Proposition~\ref{prop:average-curvature}]
Using the integration by parts, we have
\begin{align*}
	\int \tr [\nabla^2 \log p_{\nu} (x) ] \dd \nu(x) &= \sum_{i=1}^d \int   p_\nu(x) \frac{\partial^2}{\partial x_i} \log p_\nu(x)  \dd x \;, \\
	&= - \sum_{i=1}^d \int  \frac{\partial}{\partial x_i} p_\nu(x)  \frac{\partial }{\partial x_i} \log p_\nu(x)  \dd x \;,\\
	&= -\int \|\nabla \log p_\nu(x)\|^2 p_\nu (x) \dd x \;.
\end{align*}
\end{proof}

%%%%%%%%%%%%%%%%%%%%%%%%%%%%%%
%%%%%%%%%%%%%%%%%%%%%%%%%%%%%%
\subsection{Proofs for Section~\ref{sec:forward-backward}}
\label{sec:proof-forward-backward}
\begin{proof}[Proof of Theorem~\ref{thm:forward-contraction}]
	Take any two $\mu, \nu \in \cP^r_2(X)$, we have by convexity of $\cE(\cdot)$
	\begin{align*}
		\cE(\mu) - \cE(\nu) &\geq \int \langle \nabla \log p_{\nu}, \bt_{\nu}^{\mu} - \bi \rangle \dd \nu =  \langle \nabla \log p_{\nu} (\bt_{\mu}^{\nu}), \bi - \bt_{\mu}^{\nu} \rangle \dd \mu \;, \\
		\cE(\nu) - \cE(\mu) &\geq \int \langle \nabla \log p_{\mu}, \bt_{\mu}^{\nu} - \bi \rangle \dd \mu \;.
	\end{align*}
	Therefore, adding up these two inequalities, we have
	\begin{align} \label{eqn:convexity_objective}
		\int \langle \nabla \log p_{\mu} - \nabla \log p_{\nu}(\bt_{\mu}^{\nu}), \bt_{\mu}^{\nu} - \bi \rangle \dd \mu \leq 0 \;.
	\end{align}
	Now denote $\bt := \bt_{\mu_t}^{\nu_t}$, we know $(\bbf^{\mu, \eta}, \bbf^{\nu, \eta} \circ \bt)_\# \mu_t \in \Pi(\mu_t, \nu_t)$ where $\bbf^{\mu, \eta}, \bbf^{\nu, \eta}$ are defined in \eqref{eqn:forward-one-step},
	\begin{align*}
		& W_2^2(\mu_{t+\eta}, \nu_{t+\eta}) =  \inf_{\pi \in \Pi(\mu_{t+\eta}, \nu_{t+\eta})} \int \| x - y \|^2  \dd \pi(x, y) \;,  \\
		& \leq \int \| (\bi - \eta \nabla f - \eta \beta^{-1} \nabla \log p_{\mu_t} ) -  (\bt - \eta \nabla f(\bt) - \eta \beta^{-1} \nabla \log p_{\nu_t}(\bt) ) \|^2  \dd \mu_{t} \;, \\
		& = \int \| \bi - \bt \|^2 \dd \mu_t - 2 \eta \int \langle \bi - \bt, \nabla f - \nabla f(\bt) \rangle \dd \mu_t  \\
        &\quad + 2 \eta \beta^{-1} \int \langle \nabla \log p_{\mu_t} - \nabla \log p_{\nu_t}(\bt), \bt - \bi \rangle \dd \mu_t \\
        & \quad + O(\eta^2) \;, \\
		& \leq (1-2\eta \lambda ) W_2^2(\mu_{t}, \nu_{t}) + O(\eta^2) \;.
	\end{align*}
	Here we use \eqref{eqn:convexity_objective} and the fact $\langle \nabla f (x) - \nabla f (y), x - y \rangle \geq \lambda \| x - y\|^2$, for all $x, y$.
	Rearrange and then let $\eta \rightarrow 0$, we prove the theorem.
\end{proof}

\begin{proof}[Proof of Theorem~\ref{thm:backward-expansion}]
	First, for any $\nu \in \cP^r_2(X)$, define two quantities 
	\begin{align*}
		\epsilon := \| \bt_{\mu_t}^{\nu} - \bi \|_{L^2(\mu_t;\R^d)} = W_2(\nu, \mu_t) \;,\\
		\boldsymbol{\xi} := (\bt_{\mu_t}^{\nu} - \bi)/\| \bt_{\mu_t}^{\nu} - \bi \|_{L^2(\mu_t;\R^d)} \;,
	\end{align*}
	Then $\bt_{\mu_t}^{\nu} = \bi + \epsilon \boldsymbol{\xi}$.
	For any $\nu \in \cP^r_2(X)$, we have
	\begin{align*}
		W_2^2(\bb^{\mu, \eta}_{\#}  \mu_t, \bb^{{\mu}, \eta}_{\#} \nu) &\leq \int \| \bb^{\mu, \eta} \circ (\bi + \epsilon \boldsymbol{\xi} ) -  \bb^{\mu, \eta} \circ \bi \|^2 \dd \mu_t \;, \\
		& = \int \| \bb^{\mu, \eta} (x + \epsilon \boldsymbol{\xi}(x) ) -  \bb^{\mu, \eta}(x) \|^2 \dd \mu_t(x) \;.
	\end{align*}
	Define an auxiliary function $g_{\boldsymbol{\xi}}(\epsilon):=  \int \| \bb^{\mu, \eta} (x + \epsilon \boldsymbol{\xi}(x) ) -  \bb^{\mu, \eta}(x) \|^2 \dd \mu_t(x)$, we can verify that
	\begin{align*}
		g'_{\boldsymbol{\xi}}(0) = 0, ~\lim_{\epsilon\rightarrow 0} \frac{g_{\boldsymbol{\xi}}(\epsilon)}{\epsilon} = 0 \;.
	\end{align*}
	We also know that
	\begin{align*}
		g'_{\boldsymbol{\xi}}(\epsilon) &= \int 2 \langle \nabla \bb^{\mu, \eta} (x + \epsilon \boldsymbol{\xi}(x) ) \boldsymbol{\xi}(x) , \bb^{\mu, \eta} (x + \epsilon \boldsymbol{\xi}(x) ) -  \bb^{\mu, \eta}(x)  \rangle \dd \mu_t(x)  \;,\\
		& \leq \frac{\int \| \bb^{\mu, \eta} (x + \epsilon \boldsymbol{\xi}(x) ) -  \bb^{\mu, \eta}(x) \|^2 \dd \mu_t(x)}{\epsilon} + \epsilon  \cdot \int \| \nabla \bb^{\mu, \eta} (x + \epsilon \boldsymbol{\xi}(x)) \boldsymbol{\xi}(x)  \|^2 \dd \mu_t(x) \;, \\
		& \le \frac{g_{\boldsymbol{\xi}}(\epsilon)}{\epsilon} + \epsilon  \cdot \sup_{x} \| \nabla \bb^{\mu, \eta}(x) \|_{\op}^2 \cdot  \int \| \boldsymbol{\xi}(x)  \|^2 \dd \mu_t(x) \;.
	\end{align*}
	Notice $ \int \| \boldsymbol{\xi}(x)  \|^2 \dd \mu_t(x)  = 1$, and divide both sides by $\epsilon$, we obtain
	\begin{align*}
		\dv{\epsilon} \left( \frac{g_{\boldsymbol{\xi}}(\epsilon)}{\epsilon} \right) =   \frac{g'_{\boldsymbol{\xi}}(\epsilon) \epsilon - g_{\boldsymbol{\xi}}(\epsilon)}{\epsilon^2} \leq \sup_{x} \| \nabla \bb^{\mu, \eta}(x) \|_{\op}^2 \;.
	\end{align*}
	Integrate this inequality and recall $\lim_{\epsilon\rightarrow 0} \frac{g_{\boldsymbol{\xi}}(\epsilon)}{\epsilon} = 0$, we get
	\begin{align*}
		 \frac{g_{\boldsymbol{\xi}}(\epsilon)}{\epsilon} \leq \epsilon \cdot \sup_{x} \| \nabla \bb^{\mu, \eta}(x) \|_{\op}^2 \;,
	\end{align*}
	which implies
	\begin{align*}
		g_{\boldsymbol{\xi}}(\epsilon) \leq  \sup_{x} \| \nabla \bb^{\mu, \eta}(x) \|_{\op}^2  \cdot \epsilon^2 \int \| \boldsymbol{\xi}(x)  \|^2 \dd \mu_t(x) = \sup_{x} \| \nabla \bb^{\mu, \eta}(x) \|_{\op}^2  \cdot W_2^2(\mu_t, \nu)  \;.
	\end{align*}
	
	Recall the definition of $g_{\boldsymbol{\xi}}(\epsilon)$, we have shown for any $\nu \in \cP^{r}_2(X)$,
	\begin{align}
		\label{eqn:sup-operator}
		W_2^2(\bb^{\mu, \eta}_{\#}  \mu_t, \bb^{{\mu}, \eta}_{\#} \nu) \leq g_{\boldsymbol{\xi}}(\epsilon) \leq  \sup_{x} \| \nabla \bb^{\mu, \eta}(x) \|_{\op}^2 \cdot W_2^2(\mu_t, \nu) \;.
	\end{align}
	Therefore, we have for any $\nu \in \cP^{r}_2(X)$
	\begin{align*}
		 \frac{W_2^2(\bb^{\mu, \eta}_{\#}  \mu_t, \bb^{{\mu}, \eta}_{\#} \nu) - W_2^2(\mu_t, \nu) }{W_2^2(\mu_t, \nu)} \leq  \sup_{x} \| \nabla \bb^{\mu, \eta}(x) \|_{\op}^2 - 1\;.
	\end{align*}
	To further bound the right-hand side, for $\eta$ small enough, we notice
	\begin{align*}
		\nabla \bb^{\mu, \eta}(x)  = I_d + \eta (\nabla^2 f(x) + \beta^{-1} \nabla^2 \log p_{\mu_t}(x)) \preceq \big(1 + \eta (\kappa - \beta^{-1} {\zeta}) \big) I_d \;.
	\end{align*}
	We complete the proof by taking the $\eta \rightarrow 0$ limit.
\end{proof}

\begin{proof}[Proof of Corollary~\ref{cor:backward-expansion-lower-bound}]
	First note 
	\begin{align*}
		\nabla \bb^{\mu, \eta}(x) & = I_d + \eta (\nabla^2 f(x) + \beta^{-1} \nabla^2 \log p_{\mu_t}(x)) = \big(1 + \eta (\kappa - \beta^{-1} \zeta) \big) I_d \;, \\
		\bb^{\mu, \eta}  &= (1 + \eta (\kappa - \beta^{-1} \zeta) ) \bi, ~~\bbf :=	(\bb^{\mu, \eta})^{-1} = (1 + \eta (\kappa - \beta^{-1} \zeta) )^{-1} \bi \;.
	\end{align*}
	Define for convenience, $\mu_{t-\eta} := \bb^{\mu, \eta}_{\#} \mu_t, \nu_{-\eta} := \bb^{\mu, \eta}_{\#} \nu$, then in the proof of Theorem~\ref{thm:backward-expansion}, \eqref{eqn:sup-operator} already proved
	\begin{align*}
		\frac{W_2^2(\mu_t, \nu)}{W_2^2(\bb^{\mu, \eta}_{\#} \mu_t, \bb^{\mu, \eta}_{\#} \nu)  } = \frac{W_2^2(\bbf_{\#} \mu_{t-\eta} , \bbf_{\#} \nu_{-\eta}) }{W_2^2(\mu_{t-\eta},  \nu_{-\eta}) } \leq \sup_{x} \| \nabla \bbf (x) \|_{\op}^2 = (1 + \eta (\kappa - \beta^{-1} \zeta) )^{-2} \;.
	\end{align*}
	Thus taking the reciprocal, then taking the limit inferior as $\eta \rightarrow 0$, we obtain
	\begin{align*}
		\liminf_{\eta \rightarrow 0} \frac{1}{\eta} \frac{W_2^2(\bb^{\mu, \eta}_{\#} \mu_t, \bb^{\mu, \eta}_{\#} \nu) - W_2^2(\mu_t, \nu) }{W_2^2(\mu_t, \nu)  } \geq 
		 2( \kappa  -  \beta^{-1} \zeta ) \;.
	\end{align*}

	Theorem~\ref{thm:backward-expansion} already proved
	\begin{align*}
		\limsup_{\eta \rightarrow 0} \frac{1}{\eta} \frac{W_2^2(\bb^{\mu, \eta}_{\#} \mu_t, \bb^{\mu, \eta}_{\#} \nu) - W_2^2(\mu_t, \nu) }{W_2^2(\mu_t, \nu)  } \leq 
		 2( \kappa  -  \beta^{-1} \zeta ) \;.
	\end{align*}
	Thus, the equality holds.
\end{proof}

\begin{proof}[Proof of Theorem~\ref{thm:forward-then-backward-higher-order}]
	Note by definition of the diffuse-then-denoise step, we have
	\begin{align*}
		&W^2_2(\bb^{\mu, \eta}_\# \bbf^{\mu, \eta}_\# \mu, \mu) \\
        &\leq \int \|  \bb^{\mu, \eta} \circ \bbf^{\mu, \eta} (x) - x \|^2 \dd \mu  \;,\\
		& = \int \|   \bbf^{\mu, \eta} (x) + \eta ( \nabla f (\bbf^{\mu, \eta} (x) ) + \beta^{-1} \nabla \log p_{\mu_\eta}(\bbf^{\mu, \eta} (x)) ) - x \|^2 \dd \mu  \;, \\
		& =  \int \|   \bbf^{\mu, \eta} (x) + \eta  \nabla f (\bbf^{\mu, \eta} (x) ) + \eta \beta^{-1}  \nabla \log p_{\mu}(x)  + O(\eta^2) - x \|^2 \dd \mu \;, \\
		& =  \int \|   x - \eta \nabla f(x)  -  \eta \beta^{-1} \nabla \log p_{\mu}(x)  + \eta  \nabla f (x) + \eta \beta^{-1}  \nabla \log p_{\mu}(x)  + O(\eta^2) - x \|^2 \dd \mu \;, \\
		& = O(\eta^4) \;,
	\end{align*}
	where the second to the third line is applying the change of variable formula
	\begin{align*}
		\log p_{\mu_\eta}(\bbf^{\mu, \eta} (x)) &= \log p_{\mu}(x) - \log\det( I_d - \eta (\nabla^2 f(x)+\beta^{-1} \nabla^2 \log p_{\mu}(x))) \;,\\
		& = \log p_{\mu}(x) + \eta  \tr(\nabla^2 f(x)+\beta^{-1} \nabla^2 \log p_{\mu}(x)) + O(\eta^2) \;,
	\end{align*}
	and the third to the fourth line uses the fact
	\begin{align*}
		\nabla f (\bbf^{\mu, \eta} (x) ) = \nabla f(x) +  \eta  \nabla^2 f(x) (\nabla f(x)  + \beta^{-1} \nabla \log p_{\mu}(x) ) + O(\eta^2) \;.
	\end{align*} 
	The proof is completed.
	% \tl{look-ahead version.
	% Note define $g = f + \beta^{-1} \log p_{\mu_{t+ \eta/2}}$
	% \begin{align}
	% 	W_2(\bb^{\mu}_{\#} \mu_{t+\eta}, \mu_t) &\leq W_2(\bb^{\mu}_{\#} \bt^{\mu}_{\#} \mu_{t}, \mu_t) \leq \left( \int \| x - \eta \nabla g(x) + \eta \nabla g(x - \eta \nabla g(x)) - x \|^2 \dd \mu_t \right)^{1/2} \\
	% 	& = \left( \| \eta^2 \nabla^2 g (x) \nabla g(x) \|^2 \right)^{1/2}  = O(\eta^2)
	% \end{align}
	% Therefore the forward-backward chain $\nu_t \rightarrow \nu_{t+\eta} \rightarrow \bb^{\mu}_{\#} \nu_{t+\eta}$
	% \begin{align}
	% 	 &\frac{W_2(\mu_t, \bb^{\mu}_{\#} \nu_{t+\eta})}{W_2(\mu_{t}, \nu_{t})} \leq \frac{ W_2(\bb^{\mu}_{\#} \mu_{t+\eta}, \mu_t) + W_2(\bb^{\mu}_{\#} \mu_{t+\eta}, \bb^{\mu}_{\#} \nu_{t+\eta})}{W_2(\mu_{t}, \nu_{t})} \\
	% 	 & \leq \frac{ O(\eta^2) +  W_2(\bb^{\mu}_{\#} \mu_{t+\eta}, \bb^{\mu}_{\#} \nu_{t+\eta})}{W_2(\mu_{t}, \nu_{t})} \leq  \frac{  W_2(\bb^{\mu}_{\#} \mu_{t+\eta}, \bb^{\mu}_{\#} \nu_{t+\eta})}{W_2(\mu_{t}, \nu_{t})}  ( 1+ O(\eta^2)) \\
	% 	 &= \frac{W_2(\bb^{\mu}_{\#} \mu_{t+\eta}, \bb^{\mu}_{\#} \nu_{t+\eta})}{W_2(\mu_{t+\eta}, \nu_{t+\eta})} \frac{W_2(\mu_{t+\eta}, \nu_{t+\eta})}{W_2(\mu_{t}, \nu_{t})} ( 1+ O(\eta^2))  \leq 1 -  \eta \beta^{-1}\zeta + O(\eta^2)
	% \end{align}
	% As a result, the diffusion-then-denoising chain $\nu_t \rightarrow \nu_{t+\eta} \rightarrow \bb^{\mu}_{\#} \nu_{t+\eta}$ is closer to $\mu_t$, for any $\nu$.}
\end{proof}

\begin{proof}[Proof of Corollary~\ref{cor:chaining}]
	Without loss of generality, assume $W_2(\mu, \nu) = 1$.
	By Theorem~\ref{thm:forward-contraction}, we know
	\begin{align*}
		W_2( \mu_{K\eta},  \nu_{K\eta}) \leq 1 - \eta K + O(\eta^2) \leq \exp(- \eta K + O(\eta^2) ) \;.
	\end{align*}
	Using Theorem~\ref{thm:backward-expansion} 
	\begin{align*}
		\frac{W_2( (\bb_{K}^{\mu, \eta})_{\#}  \mu_{K\eta}, (\bb_{K}^{\mu, \eta})_{\#} \nu_{K\eta})}{W_2( \mu_{K\eta},  \nu_{K\eta})} & \leq 1 + \eta (1 - \beta^{-1} \zeta_{K\eta}) + O(\eta^2), \\
        &\leq  \exp( \eta (1 - \beta^{-1} \zeta_{K\eta}) + O(\eta^2)) \;,\\
		W_2( (\bb_{K}^{\mu, \eta})_{\#}  \mu_{K\eta}, (\bb_{K}^{\mu, \eta})_{\#} \nu_{K\eta})) &\leq   \exp( - \eta (K-1) - \eta \beta^{-1} \zeta_{K\eta} + O(\eta^2)) \;.
	\end{align*}
	Now recall Theorem~\ref{thm:forward-then-backward-higher-order}, we have
	\begin{align*}
		W_2(\mu_{(K-1)\eta}, (\bb_{K}^{\mu, \eta})_{\#}  \mu_{K\eta}) = W_2(\mu_{(K-1)\eta}, (\bb_{K}^{\mu, \eta} \circ \bbf_{K-1}^{\mu, \eta})_{\#}  \mu_{(K-1)\eta})  = O(\eta^2) \;,
	\end{align*}
	and thus
	\begin{align*}
		W_2( \mu_{(K-1)\eta}, (\bb_{K}^{\mu, \eta})_{\#} \nu_{K\eta})) &\leq W_2(\mu_{(K-1)\eta}, (\bb_{K}^{\mu, \eta})_{\#}  \mu_{K\eta})  +  W_2( (\bb_{K}^{\mu, \eta})_{\#}  \mu_{K\eta}, (\bb_{K}^{\mu, \eta})_{\#} \nu_{K\eta})) \;,\\
		& \leq \exp( -\eta (K-1) - \eta \beta^{-1} \zeta_{K\eta} + O(\eta^2)) \;.
	\end{align*}
	Repeat the same argument, we know 
	\begin{align*}
	W_2(\mu_{(K-2)\eta}, (\bb_{K-1}^{\mu, \eta} \circ \bb_{K}^{\mu, \eta})_{\#} \nu_{K\eta})) 
	% &\leq W_2(\mu_{(K-2)\eta}, (\bb_{K-1}^{\mu, \eta} )_{\#} \mu_{(K-1)\eta})) + W_2((\bb_{K-1}^{\mu, \eta} )_{\#} \mu_{(K-1)\eta}, (\bb_{K-1}^{\mu, \eta} )_{\#}  (\bb_{K}^{\mu, \eta})_{\#} \nu_{K\eta} ) \;,\\
	&\leq \exp(-\eta (K-2) - \eta \beta^{-1} (\zeta_{(K-1)\eta} +\zeta_{K\eta}) + O(\eta^2))\;.
	\end{align*}
	Chaining this bound recursively, we have
	\begin{align*}
		W_2\big( \mu, \big( \bb^{\mu}_{[K]} \circ \bbf^{\nu}_{[K]} \big)_\# \nu \big) = 
		W_2( \mu, (\bb^{\mu}_{[K]})_{\#} \nu_{K\eta}) \leq  \exp( - \eta \beta^{-1} \sum_{k=1}^{K}  \zeta_{k\eta} + O(\eta^2)) \;.
	\end{align*}
\end{proof}

%%%%%%%%%%%%%%%%%%%%%%%%%%%%%%
%%%%%%%%%%%%%%%%%%%%%%%%%%%%%%
\subsection{Proofs for Section~\ref{sec:multi-scale}}
\label{sec:proof-multi-scale}

\begin{proof}[Proof of Proposition~\ref{prop:shape-of-h/delta}]
	Recall the definition
	\begin{align*}
		 \dv{m_{\mu}(\delta,  \sr)}{\delta} = \frac{ s_{\sr}(1-\delta) \cdot \delta - \int_{1-\delta}^{\infty} s_{\sr}(u) \dd u}{\delta^2} \;.
	\end{align*}
	(1) In the log-concave case, we know $s_{\sr}(1) = 0$, therefore
	\begin{align*}
		\dv{m_{\mu}(\delta,  \sr)}{\delta} = \frac{ s_{\sr}(1-\delta) \cdot \delta - \int_{1-\delta}^{1} s_{\sr}(u) \dd u}{\delta^2} = \frac{  \int_{1-\delta}^{1} [s_{\sr}(1-\delta) - s_{\sr}(u)] \dd u}{\delta^2} \geq 0
	\end{align*}
	which shows the non-decreasing shape of $m_{\mu}(\cdot,  \sr)$.
	(2) In the non-log-concave case, we have $s_{\sr}(1) > 0$, and thus
	\begin{align*}
		\dv{m_{\mu}(\delta,  \sr)}{\delta} |_{\delta \rightarrow 0+} < 0 \;,
	\end{align*}
	and we claim that (by taking the derivative w.r.t $\delta$)
	\begin{align*}
		\delta \mapsto s_{\sr}(1-\delta) \cdot \delta - \int_{1-\delta}^{\infty} s_{\sr}(u) \dd u
	\end{align*}
	is non-decreasing in $\delta$. Therefore, $\delta \mapsto \dv{m_{\mu}(\delta,  \sr)}{\delta}$ either crosses zero (in a non-decreasing way) or stays negative for $\delta \in [0, 1]$. Therefore, $m_{\mu}(\delta,  \sr)$ is either (i) non-increasing in $\delta \in (0, 1]$, or (ii) U-shaped in $\delta \in (0, 1]$, namely first non-increasing then non-decreasing.
\end{proof}

\begin{proof}[Proof of Theorem~\ref{thm:backward_expansion_non_log_concave}]
	As in the proof of Theorem~\ref{thm:backward-expansion}, for any $\nu \in \cM(\mu_t, M)$, define two quantities
	\begin{align*}
		\epsilon := \| \bt_{\mu_t}^{\nu} - \bi \|_{L^2(\mu_t;\R^d)} = W_2(\nu, \mu_t) \;,\\
		\boldsymbol{\xi} := (\bt_{\mu_t}^{\nu} - \bi)/\| \bt_{\mu_t}^{\nu} - \bi \|_{L^2(\mu_t;\R^d)} \;,
	\end{align*}
	Then $\bt_{\mu_t}^{\nu} = \bi + \epsilon \boldsymbol{\xi}$,
	and as $\nu \stackrel{W_2}{\rightarrow} \mu_t$, we know $\epsilon \rightarrow 0$. 
	Now note additionally if we assume $\nu \in \cM(\mu_t, M)$, then the corresponding $\boldsymbol{\xi}$ satisfies 
	\begin{align*}
		\boldsymbol{\xi} \in \cT_{\mu_t}(M) := \left\{ \boldsymbol{\xi} ~:~ \int \| \boldsymbol{\xi} \|^2  \dd \mu_t = 1, ~ \sup_{x \in \mathrm{Dom}(\mu_t)}  \| \boldsymbol{\xi}(x) \|^2 \leq  M \right\}\;.
	\end{align*}
	
	Claim that for any sequence of $\nu \in \cP^r_2(X): \nu \stackrel{W_2}{\rightarrow} \mu_t$ that attains the limit superior, we have
	\begin{align*}
		&\limsup_{\nu \in \cM(\mu_t, M): \nu \stackrel{W_2}{\rightarrow} \mu_t}  \frac{W_2^2(\bb^{\mu, \eta}_{\#}  \mu_t, \bb^{{\mu}, \eta}_{\#} \nu) - W_2^2(\mu_t, \nu) }{W_2^2(\mu_t, \nu)}   \\ & \leq \sup_{\boldsymbol{\xi} \in \cT_{\mu_t}(M)} \frac{\int \| \nabla \bb^{\mu, \eta}(x) \boldsymbol{\xi}(x)  \|^2 \dd \mu_t (x) - \int \|  \boldsymbol{\xi}(x)  \|^2 \dd \mu_t (x) }{\int \|  \boldsymbol{\xi}(x)  \|^2 \dd \mu_t (x)} \;.
	\end{align*}
	To derive this claim, notice that
	\begin{align*}
			W_2^2(\bb^{\mu, \eta}_{\#}  \mu_t, \bb^{\mu, \eta}_{\#} \nu) &\leq \int \| \bb^{\mu, \eta} \circ (\bi + \epsilon \boldsymbol{\xi} ) -  \bb^{\mu, \eta} \circ \bi \|^2 \dd \mu_t \;, \\
			& = \int \| \bb^{\mu, \eta} (x + \epsilon \boldsymbol{\xi}(x) ) -  \bb^{\mu, \eta}(x) \|^2 \dd \mu_t(x) \;, \\
			& = \epsilon^2 \left(  \int \| \nabla \bb^{\mu, \eta} (x) \boldsymbol{\xi}(x)  \|^2 \dd \mu_t(x) + o_{\epsilon}(1) \right) \;.
		\end{align*}
		The last step requires some justification. Define an auxiliary function $g(\epsilon):=  \int \| \bb^{\mu, \eta} (x + \epsilon \boldsymbol{\xi}(x) ) -  \bb^{\mu, \eta}(x) \|^2 \dd \mu_t(x)$, we can verify that
		\begin{align*}
			&g'(0) = 0,\\
            &g''(0) = 2  \int \| \nabla \bb^{\mu, \eta} (x) \boldsymbol{\xi}(x)  \|^2 \dd \mu_t(x) \;, \\
			&g''(\epsilon) = 2  \int \| \nabla \bb^{\mu, \eta} (x + \epsilon \boldsymbol{\xi}(x)  ) \boldsymbol{\xi}(x)  \|^2 \dd \mu_t(x) \\
			& \quad \quad \quad  + 2 \int \left\langle \nabla^2 \bb^{\mu, \eta} (x + \epsilon \boldsymbol{\xi}(x)  ), \boldsymbol{\xi}(x) \otimes \boldsymbol{\xi}(x) \otimes \big( \bb^{\mu, \eta} (x + \epsilon \boldsymbol{\xi}(x) ) -  \bb^{\mu, \eta}(x)  \big) \right\rangle \;.
		\end{align*}
		By the Taylor's Theorem, there exists $\tilde{\epsilon} \in [0, \epsilon]$, such that
		\begin{align*}
			g(\epsilon) = \frac{1}{2} g''(\tilde{\epsilon}) \epsilon^2 \;.
		\end{align*}
Notice $g''(\tilde{\epsilon}) = g''(0) + o_\epsilon(1)$ due to the fact that $\|  \boldsymbol{\xi}(x) \|^2 \leq M$ uniformly bounded and that $\bb^{\mu, \eta}$ has bounded derivatives, we establish the claim.

	Now we proceed to control the term $\int \| \nabla \bb^{\mu, \eta}(x) \boldsymbol{\xi}(x)  \|^2 \dd \mu_t (x)$
	\begin{align*}
		\int \| \nabla \bb^{\mu, \eta}(x) \boldsymbol{\xi}(x)  \|^2 \dd \mu_t(x) \leq \int \| \nabla \bb^{\mu, \eta}(x) \|_{\op}^2 \| \boldsymbol{\xi}(x)  \|^2 \dd \mu_t(x) \;.
	\end{align*}
	Now let's study $ \| \nabla \bb^{\mu, \eta}(x) \|_{\op}$: recall $\ss(t) = \sqrt{\beta^{-1}(1-e^{-2t})}$
	\begin{align*}
		\| \nabla \bb^{\mu, \eta}(x) \|_{\op} &= \| I_d + \eta I_d - \tfrac{\eta}{1 - e^{-2t}} (I_d - \Cov[\sr(t)\bX_0 | \bX_t = x] ) \|_{\op} \\
		&= (1 + \eta ) - \tfrac{\eta}{1 - e^{-2t}} + \tfrac{\eta}{1 - e^{-2t}} \| \Cov[\sr(t)\bX_0 | \bX_t = x] \|_{\op} \\
		&= (1 + \eta ) - \tfrac{\eta}{1 - e^{-2t}} + \tfrac{\eta}{1 - e^{-2t}} L_{\sr(t)}(\tfrac{x}{\ss(t)})
	\end{align*}
	Here we recall the conditional covariance (localization function) defined before
	\begin{align*}
		L_{\sr}(y) :=\|  \Cov[\sr \bX | \bY_{\sr} = y] \|_{op} \;.
	\end{align*}

	Continue, we have
	\begin{align*}
		&\int \| \nabla \bb^{\mu, \eta}(x) \boldsymbol{\xi}(x)  \|^2 \dd \mu_t(x) - \int \|  \boldsymbol{\xi}(x)  \|^2 \dd \mu_t(x) \\
		& \leq 2 \eta \int \|  \boldsymbol{\xi}(x)  \|^2 \dd \mu_t(x)  -  \tfrac{2\eta}{1 - e^{-2t}} \int \|  \boldsymbol{\xi}(x)  \|^2 \dd \mu_t(x) \\
        & \quad +  \tfrac{2\eta}{1 - e^{-2t}} \int L_{\sr(t)}(\tfrac{x}{\ss(t)})  \|  \boldsymbol{\xi}(x)  \|^2 \dd \mu_t(x) + O(\eta^2)\;.
	\end{align*}
	We analyze the last term: first, we define the region
	\begin{align*}
		R_{\delta} := \{x \in X ~:~ L_{\sr(t)}(\tfrac{x}{\ss(t)}) \leq 1 - \delta \}
	\end{align*}
	and bound the integral depending on the region,
	\begin{align*}
		\int L_{\sr(t)}(\tfrac{x}{\ss(t)})  \|  \boldsymbol{\xi}(x)  \|^2 \dd \mu_t(x) & =  	\int_{x \in R_{\delta}} L_{\sr(t)}(\tfrac{x}{\ss(t)})  \|  \boldsymbol{\xi}(x)  \|^2 \dd \mu_t(x)\\
        &\quad + \int_{x \in R_{\delta}^c} L_{\sr(t)}(\tfrac{x}{\ss(t)})  \|  \boldsymbol{\xi}(x)  \|^2 \dd \mu_t (x) \;,\\
		&\leq (1-\delta)  \left(\int_{x \in R_{\delta}}  \|  \boldsymbol{\xi}(x)  \|^2 \dd \mu_t(x) + \int_{x \in R_{\delta}^c}  \|  \boldsymbol{\xi}(x)  \|^2 \dd \mu_t(x)\right)   \\
        & \quad +   \int_{x \in R_{\delta}^c} \big( L_{\sr(t)}(\tfrac{x}{\ss(t)}) - (1-\delta) \big) \|  \boldsymbol{\xi}(x)  \|^2 \dd \mu_t(x) \;,
    \end{align*}
    Recalling $\boldsymbol{\xi} \in \cT_{\mu_t}(M)$, 
    \begin{flalign*}
    &(1-\delta)   \int  \|  \boldsymbol{\xi}(x)  \|^2 \dd \mu_t(x) +   \int_{x \in R_{\delta}^c} \big( L_{\sr(t)}(\tfrac{x}{\ss(t)}) - (1-\delta) \big) \|  \boldsymbol{\xi}(x)  \|^2 \dd \mu_t(x) \\
		& = (1-\delta)  \int  \|  \boldsymbol{\xi}(x)  \|^2 \dd \mu_t(x) + \int \|  \boldsymbol{\xi}(x)  \|^2 \dd \mu_t(x) \cdot  M \cdot \int_{x \in R_{\delta}^c} \big( L_{\sr(t)}(\tfrac{x}{\ss(t)}) - (1-\delta) \big) \dd \mu_t(x)\;, \\
    &=  \int  \|  \boldsymbol{\xi}(x)  \|^2 \dd \mu_t(x) \cdot \left( (1-\delta)  + M \cdot \int_{1-\delta}^{\infty} s_{\sr(t)} (z) \dd z \right) \;,  
	  \end{flalign*}
      where the last step uses Proposition~\ref{prop:average-curvature} and the Fubini's theorem: for a non-negative random variable $Z > 0$, if $\E[Z] <\infty$, then $\int_{C}^{\infty} (z - C) p_Z(z) \dd z = \int_{C}^{\infty} P(Z \geq z) \dd z$ for $C>0$; set $Z =L_{\sr(t)}(\tfrac{\bX_t}{\ss(t)}) $ and $C = 1-\delta$. 
      Put things together, for any $\delta \in [0, 1]$
	\begin{align*}
		&\sup_{\boldsymbol{\xi} \in \cT_{\mu_t}(M)}  ~\frac{\int \| \nabla \bb^{\mu, \eta}(x) \boldsymbol{\xi}(x)  \|^2 \dd \mu_t (x) - \int \|  \boldsymbol{\xi}(x)  \|^2 \dd \mu_t (x) }{\int \|  \boldsymbol{\xi}(x)  \|^2 \dd \mu_t (x)}   \\
        &\leq 2 \eta  -   \tfrac{2\eta}{1 - e^{-2t}}   \big[ \delta -  M \cdot  h_{\mu}(\delta, \sr(t))  \big]+ O(\eta^2) \;.
	\end{align*}
\end{proof}

  \end{appendix}

\end{document}